\newtheorem{theorem}{Theorem}
\newtheorem{corollary}{Corollary}[theorem]
\newtheorem{lemma}[theorem]{Lemma}
\newtheorem{exmp}{Example}[section]
\newtheorem{remark}{Remark}
\newtheorem{definition}{Definition}[section]
\title{
Rapidly Converging Time-Discounted Ergodicity on Graphs for Active Inspection of Confined Spaces
}
\author{Benjamin Wong$^{1}$, Ryan H. Lee$^{2}$, Tyler M. Paine$^{3}$, Santosh Devasia$^{1}$, and Ashis G. Banerjee$^{1,4}$
\thanks{This work was supported in part by a Naval Engineering Education Consortium (NEEC) award \# N00174-22-1-0013.}
\thanks{$^{1}$B. Wong and S. Devasia are with the Department of Mechanical Engineering, University of Washington, Seattle, WA 98195, USA,
        {\tt\small bycw,devasia@uw.edu}}%
\thanks{$^{2}$R. H. Lee is with the Department of Electrical \& Computer Engineering, University of Washington, Seattle, WA 98195, USA,
        {\tt\small rhlee1@uw.edu}}%
\thanks{$^{3}$T. M. Paine is with the Naval Undersea Warfare Center, Keyport, WA 98345, USA,
{\tt\small tyler.m.paine.civ@us.navy.mil}}%
\thanks{$^{4}$A. G. Banerjee is with the Department of Industrial \& Systems Engineering and the Department of Mechanical Engineering, University of Washington, Seattle, WA 98195, USA,
        {\tt\small ashisb@uw.edu}}%
}
\begin{document}

\maketitle

\begin{abstract}
Ergodic exploration has spawned a lot of interest in mobile robotics due to its ability to design time trajectories that match desired spatial coverage statistics. However, current ergodic approaches are
for continuous spaces, which require detailed sensory information at each point and can lead to fractal-like trajectories that cannot be tracked easily. This paper presents a new ergodic approach for graph-based discretization of continuous spaces. It also introduces a new time-discounted ergodicity metric, wherein early visitations of information-rich nodes are weighted more than late visitations. A Markov chain synthesized using a convex program is shown to converge more rapidly to time-discounted ergodicity than the traditional fastest mixing Markov chain. The resultant ergodic traversal method is used within a hierarchical framework for active inspection of confined spaces with the goal of detecting anomalies robustly using SLAM-driven Bayesian hypothesis testing. Experiments on a ground robot show the advantages of this framework over three continuous space ergodic planners as well as greedy and random exploration methods for left-behind foreign object debris detection in a ballast tank.
\end{abstract}

\section{Introduction}
\label{section:intro}
Robots have been proposed to 
inspect confined spaces~\cite{swri_crawler, WongBenjamin2023Hrdo, inspection_drone, DimitropoulosNikos2024Brfc},
since they are not designed for prolonged human occupancy \cite{osha_confined_space}.
Examples of such spaces include wing tanks in airplanes, ballast tanks in ships, sewers, and storage tanks such as grain silos. They are usually topologically constrained with different interconnected sections, several functional structures or devices,  
and limited spaces for manuevers. 
The spaces can also be hazardous 
due to the accumulation of harmful gases, engulfment of materials, 
and presence of inflammable substances \cite{potential-confined-space}. As a result, numerous worker injuries and deaths have occurred in the past decade while working in confined spaces \cite{Zamudio}. However, inspection of such spaces can be quite critical, as for example, retrieval of foreign object debris (FODs) left inside aircraft wings can help prevent accidents~\cite{LeBeau_2020a}. This motivates the use of intelligent robots (rather than humans) 
in such confined spaces.

Ergodic control is well suited for robotic inspection of confined spaces~\cite{MathewGeorge2011Mfea}. 
Traditionally, inspection is treated as a coverage problem, where the planning method focuses on going through every point in the space exactly once (uniform coverage) using pre-computed trajectories
\cite{GalceranEnric2013Asoc}. However, uniform coverage cannot adapt to changing information and uncertainties. 
An alternative approach is to use information acquisition methods that 
consider the inherent uncertainty in the observation and motion models of the robot using 
metrics, such as entropy and the determinant or trace of the covariance matrices. 
Accordingly, they are often solved in a model predictive manner, where the control inputs corresponding to the maximum estimated information gain are selected \cite{8260881, 7139863, BestGraeme2019DDpf}. This implies that the 
robots tend to prioritize inspecting areas with the highest uncertainty, which does not necessarily lead to sufficient coverage of the space. Hence, methods are needed to balance robust coverage and efficient information gain~\cite{MathewGeorge2011Mfea}, which are afforded by ergodic control.

The notion of ergodicity is adapted from statistical mechanics, where a system's long-term average (time average) is equal to the average over the entire state space (space average)~\cite[1.2 p.2]{bremaud2013markov}. Specifically, ergodic control allows one to design a robot trajectory that achieves a specified spatial statistic~\cite{MathewGeorge2011Mfea}, such as information-guided exploration with adequate coverage. This paper develops a new ergodic control approach for graph-based discretization of confined spaces. This is in contrast to existing ergodic control approaches that are designed for continuous spaces~\cite{MathewGeorge2011Mfea}. 

There are three main challenges of 
continuous-space ergodic control for inspection of confined spaces, all of which are addressed in our discrete-space method. 
First, the continuous space methods tend to yield complex, fractal-like trajectories to satisfy the ergodicity requirements, which are difficult to execute in confined spaces. 
Moreover, the complex 
trajectories (i)~can potentially destabilize 
robot localization by introducing large nonlinearities in the inputs and (ii)~can cause the robot to inspect local featureless areas for long periods of time. 
A graph-based discretization of the confined space into relatively large regions allows designing a region-wise ergodic traversal method that
works without considering local collision-free motion planning directly. 
Second, 
the information field for the continuous-space ergodic method requires a sensor model to consider occlusion at every point in space. In contrast, the information is bulk estimated at each node on the graph (region in the space) in our discrete ergodic control method. 
Third, the continuous-space ergodic methods cannot model directional traversability constraints in the confined space, such as an unclimbable slope in one direction, or a door that only opens on one side. In contrast, such connectivity constraints are modeled as directed graph edges in our discrete method

In addition to the ergodic graph traversal method, this paper also presents a convex optimization approach to compute a 
policy that rapidly reaches time-discounted ergodicity. 
In time-discounted ergodicity, earlier visitations of information-rich regions have more weights than later visitations, which is necessary for effective inspection within a limited time frame. Note here that ergodicity on a graph can be achieved asymptotically (in time) by sampling from a Markov chain with a stationary distribution that equals a specified target distribution. 
Traditionally, this is done using the \underline{f}astest \underline{m}ixing \underline{M}arkov \underline{c}hain (FMMC) method~\cite{BoydStephen2004FMMC}, which selects the Markov chain that provides 
fastest convergence from the initial distribution to the target distribution. 
We show here that FMMC is not the best option to achieve time-discounted ergodicity. 
Instead, minimizing the second largest eigenvalue of the stochastic matrix of the associated Markov chain yields a new method, termed \underline{r}apidly \underline{e}rgodic \underline{M}arkov \underline{c}hain (REMC), with faster convergence to time-discounted ergodicity than 
FMMC. 

This graph-based discretization enables the development of a hierarchical planning framework, which allows decomposition and modularization of both robot control and estimation to reduce the overall computational complexity of active inspection. We focus on anomaly detection as the inspection goal. Consequently, REMC sits at the top of the hierarchy as the region-level planner, where coarse-grained information on the current uncertainty estimates of region-wise anomalies are fed from the region graph. At the second level, fine-grained information on anomaly uncertainty estimates within the regions are used for inspection waypoint placement. At the bottom level, robot state estimates and an occupancy grid map are provided as inputs by a graph-SLAM algorithm for local waypoint navigation. The uncertainty estimates come from a Bayesian anomaly detector at this level, which recursively updates the posterior of the robot's local observations corresponding to anomalies based on the signed Mahalanobis distances of the observations from their nearest points on a reference model of the confined space. The overall inspection framework is termed \underline{h}ierarchical \underline{e}rgodic \underline{Ma}rkov \underline{p}lanner (HEMaP). Both simulation and physical experiments are carried out inside a representative ballast tank with FODs as anomalies, using a ground robot equipped with a depth camera and GPU-based on-board processing. The results show that HEMaP achieves high FOD detection rates (averaging 85\%), and consistently outperforms greedy and random exploration methods regardless of the allowable inspection time frame.    

The main contributions of this work are, therefore, summarized as follows. 
\begin{itemize}
\item First-of-its-kind development of ergodic control for graph-based spatial discretization of continuous spaces
\item
Synthesis of rapidly converging ergodic Markov chains using convex optimization to provide theoretical guarantees when ergodicity (on graphs) is discounted over time
\item Application of ergodic graph traversal within a hierarchical planning framework for active inspection of confined spaces with the goal of detecting anomalies robustly
\item Formulation of anomaly detection problem as recursive Bayesian hypothesis testing using signed Mahalanobis distance 
in the likelihood function
\end{itemize}

\section{Related Work}
\subsection{Ergodic Control: Continuous Versus Discretized Space}
\label{section:relatedwork_ergodic}
One major area of related work is on ergodic control-based robot exploration in continuous workspaces. The earliest and one of the most widely adopted work proposes a Fourier-based metric to quantify ergodicity in a finite vector space and yield an optimal feedback controller \cite{MathewGeorge2011Mfea}. However, this limits its applicability to regular, rectangular domains, and makes it challenging to scale up with respect to the dimensionality of the domain and the size of the Fourier series. 

There are three main branches of work that attempt to address these challenges. One branch utilizes the Fourier metric as an objective function and expands the method into a trajectory optimization problem, where collision avoidance is handled as an optimization constraint \cite{LerchCameron2023SEEi, AbrahamIan2018DECD, MillerLaurenM2016EEoD, GkouletsosDimitris2021DTOf}. 
However, it inherits a similar problem as the Fourier-based metric, where the finite number of frequency coefficients are not able to capture complex obstacle geometry such as structures in the confined space. When modeled as constraints, the complex geometries, together with the robot motion model, introduce high non-convexity, which generates more computational complexity. 
The second branch of work uses the radial basis function instead of the Fourier basis to adapt to different domain topologies \cite{IvicStefan2017ECMA, IvicStefan2022Cmea, IvicStefan2022MCfA, IvicStefan2023Mtpf}. 
The basis functions are specific to the domain topology and are pre-computed using prior maps. When unexpected obstacles such as FODs appear, a new set of basis functions may be required. 
In contrast, new obstacles are handled by a standard local planner in our hierarchical approach. 
The third line of work changes the ergodic metric 
to KL-divergence between the robot distribution and the target distribution. 
As a result, this metric is able to adapt to a wider range of sensor models and has a lower
computational complexity than the Fourier method \cite{AyvaliElif2017Ecic}. 
This line of work is similar to our approach in that both utilize stochasticity to solve the ergodic control problem. However, instead of using stochastic optimization to efficiently sample trajectories that solve the finite-horizon optimization problem, we find a stochastic policy that is inherently ergodic. 
%

Several works have adapted these three branches of ergodic control methods on various robotics tasks, including target localization \cite{MavrommatiAnastasia2018RACa}, multi-link manipulator control \cite{LowTobias2022dUEC, ShettySuhan2022EEUT, PignatEmmanuel2022Lfdu, Bilaloglu_25}, and swarm control  \cite{PrabhakarAhalya2020ESfF}. In these tasks, the information field is either given or has a well-behaved analytical form. 
However, information is harder to model in inspection applications, which instead benefit from a graph-based abstraction. Recently, advances have also been made on ergodic exploration of cluttered environments with many obstacles. For example, this has been addressed by sampling random graphs in the free space and then using a Dijkstra-like method to search for an ergodic path~\cite{ShiroseBurhanuddin2024GGES}. 
However, this method is computationally challenging and does not guarantee convergence. Alternatively, the method presented in \cite{IvicStefan2017ECMA} has been adapted to a maze-like environment \cite{crnkovic2023fast}. However, this is only applicable to square-lattice graphs instead of arbitrary strongly connected graphs, as in our method. Ergodicity in continuous space with arbitrary topology has also been achieved by a Markov process that satisfies measure-preserving flows \cite{XuAlbert2024MPFf}. 
A similar Markov process is used in our method, but over graphs in discretized spatial regions, which formulates a linear specification of the transition model and enables convex optimization of the convergence rate to ergodicity.

\subsection{Rapidly Converging Markov Chains}

As mentioned before, the ergodic graph traversal problem is solved using a Markov chain method. A Markov chain is characterized by a sequence of events with each event dependent only on the previous one. 
In robotics, it is often used to realize a desired system-level property, such as the 
stationary or target distribution of visitation frequency of the different regions of a workspace. This can be done using the Metropolis-Hastings (M-H) algorithm, where an arbitrary stationary distribution can be achieved using the detailed balance condition \cite{HastingsW.K.1970MCsm}. Although the M-H algorithm provides asymptotic convergence, it does not consider the rate of convergence. For a finite graph, 
FMMC can instead be used to compute the Markov chain that converges the fastest to the stationary distribution \cite{BandyopadhyaySaptarshi2017PaDC}. Consequently, FMMC or its variants have been used for 
swarm formation control and task allocation \cite{DjeumouFranck2023PCoH,PatelRushabh2015RSaM,Behcet2015MCAt,BermanS.2009OSPf}. 
These works address the problem of distributing a team of robots to follow a target distribution, 
for which FMMC is optimal in terms of convergence to the target distribution. However, this paper investigates the problem of actively inspecting confined spaces efficiently. Therefore, we consider a different time-discounted metric of ergodicity, 
where earlier visitations of regions with high information have more weights than later visitations so that all the information-rich regions are inspected as thoroughly and quickly as possible. 
In this context, the proposed REMC method is shown to 
converge faster to
time-discounted ergodicity than FMMC. 

\subsection{Anomaly Detection in Confined Spaces}

Most existing work treats anomaly detection in confined spaces as a single-instance image classification problem \cite{Cao2018, MoradiSaeed2020AADa,GaoQiang2021RoFO, MeyerJohannes2022Viva, KozamernikNejc2025AnFA}. 
Although these methods handle sensing uncertainty through explicit modeling or machine learning, they do not take into account the context of where the images are taken and the corresponding uncertainty. Furthermore, these methods do not consider the effect of recurring online observations, which is crucial for active inspection. Our work includes the robot localization uncertainty provided by SLAM. The uncertainty is propagated to obtain probabilistic estimates of the observations, which enables online recursive hypothesis testing using the Mahalanobis distance-based method developed in \cite{WongBenjamin2023Hrdo}. 

\section{Hierarchical Planning}
\begin{figure}[thpb]
      \centering
    \includegraphics[width=0.45\textwidth]{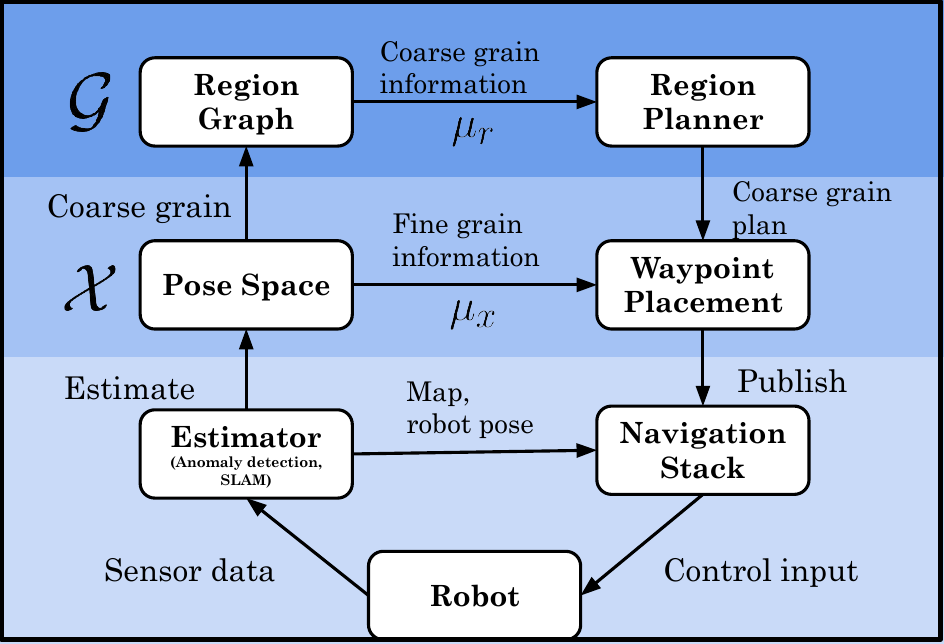}
      \caption{
      Hierarchical Ergodic Markov Planning (HEMaP) framework.
      At the top level, the graph-level region planner uses coarse-grained information $\mu_r$ on the current uncertainty estimates of region-wise anomalies.
       At the middle-level, 
       fine-grained information on the anomaly uncertainty   within the region $\mu_x$ is used for waypoint placement inside the  region. 
       At the bottom level, robot state estimates and occupancy grid map are provided as inputs for waypoint navigation. }
      \label{fig:h_framework}
   \end{figure}
   
For scalable and interpretable inspection targeted toward anomaly (FOD) detection, planning is decomposed into several levels, as shown in Fig. \ref{fig:h_framework}. The robot first decides which confined space region to go to according to coarse-grained information $\mu_r$ on anomaly uncertainty, defined on a region graph \(\mathcal{G}\) that is stated formally below. The robot then selects a predefined number of poses in the underlying pose space \(\mathcal{X}\) corresponding to the selected region. This is done according to fine-grained information on anomaly uncertainty in the part of the region that is visible to the robot from each pose. The selected poses are used as waypoints to move the robot to region-wise inspection points based on the robot operating system (ROS) navigation stack \cite{nav_core}.  
Fig. \ref{h_graph} shows an example of the pose space \(\mathcal{X}\) and the region graph \(\mathcal{G}\) of the ballast tank in Fig. \ref{fig:ballast_cad}. 
A graph-based ergodic traversal method is chosen as the region planner, which is discussed in Section \ref{section_graph_ergodic}. The estimator is a SLAM-based anomaly detector, whose formulation 
is discussed in Section \ref{section_anomaly_detection}. The waypoint placement algorithm is presented in Section \ref{section_algorithm}. 

\subsection{Hierarchical Representation of Workspace}
\label{section_hierechical}
\begin{figure}[thpb]
      \centering
      \framebox{\parbox{0.4\textwidth}{ \includegraphics[width=0.4\textwidth]{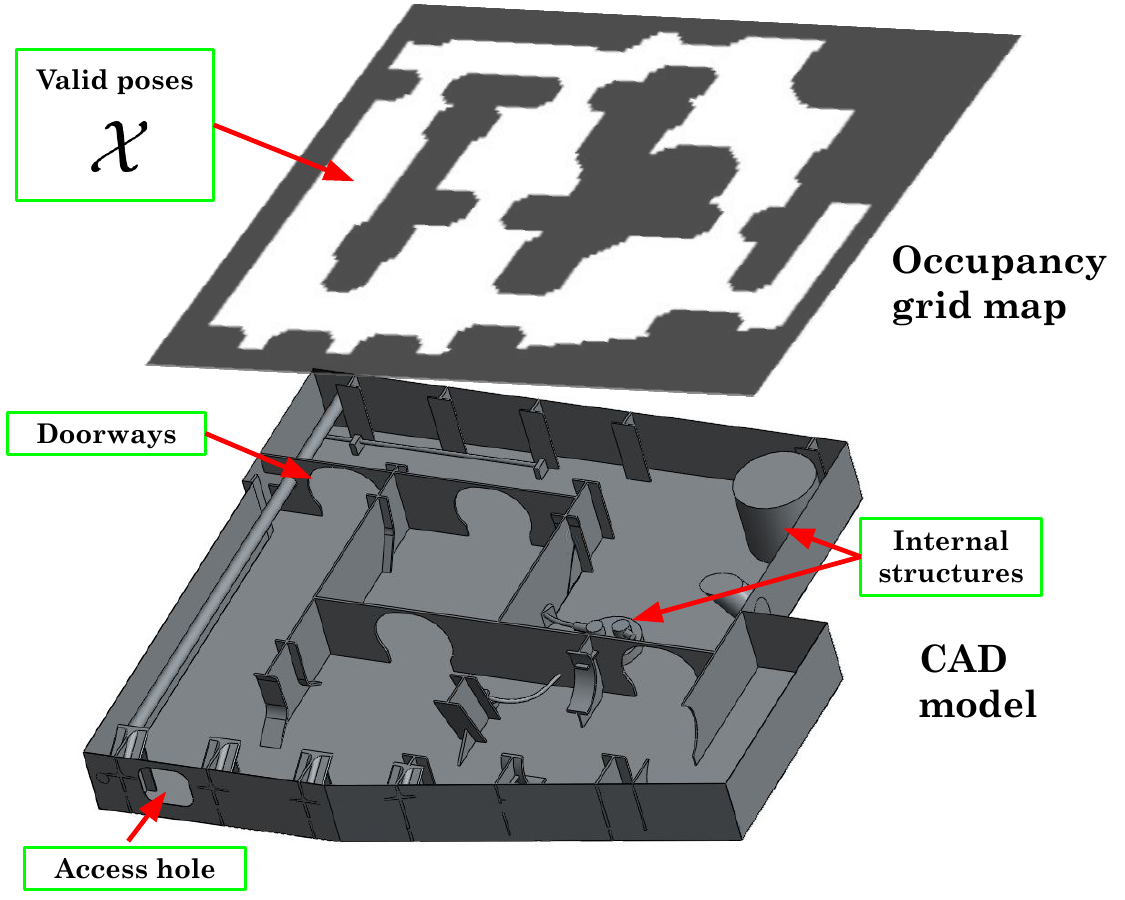}}
}
      \caption{CAD model of an example ballast tank with the roof removed and the corresponding occupancy grid map rendered above. The white grid cells denote feasible 2D poses in the pose space, \(\mathcal{X}\), that can be occupied by a ground robot without colliding with the internal structures.}
      \label{fig:ballast_cad}
   \end{figure}
   
\begin{figure}[thpb]
      \centering
      \framebox{\parbox{0.3\textwidth}{ \includegraphics[width=0.3\textwidth]{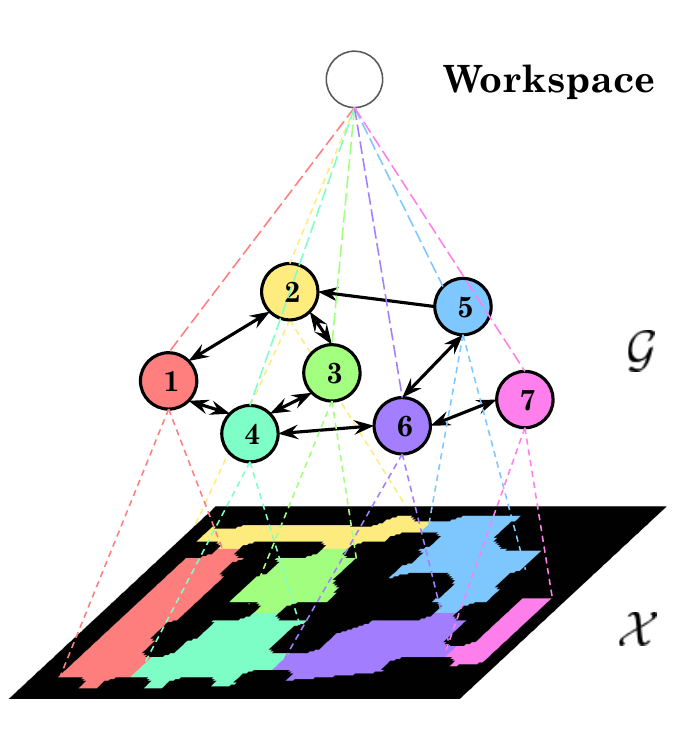}}
}
      \caption{Hierarchical representation of the ballast tank. Top-level is the workspace node, second-level is the region graph \(\mathcal{G}\), the lowest level is the pose space \(\mathcal{X}\) from Fig. \ref{fig:ballast_cad}. }
      \label{h_graph}
   \end{figure}
   
We consider the set of regions in the workspace as an arbitrary coarse graining of the pose space \(\mathcal{X} \subset \text{SE}(3)\) into a finite disjoint set \(\mathcal{R} = \{r_1, r_2, \cdots, r_n\}\), where the space within each region is connected.
\begin{definition} Region nodes:
   \begin{equation}
   \begin{aligned}
        r_i \subset \mathcal{X} \quad & \forall r_i \in \mathcal{R} \\
        r_i \text{ is connected} \quad& \forall r_i \in \mathcal{R}\\
        r_i\cap r_j = \varnothing \quad& \forall r_i,r_j \in \mathcal{R}.
   \end{aligned}
\end{equation}
\end{definition}

For any pair of regions (nodes), if the union is path connected, we denote an edge between the regions.
\begin{definition}
Region edge: 
If \(r_i \cup r_j \) is connected, \((r_i,r_j) \in \mathcal{E}\).
\end{definition}
The graph \(\mathcal{G}=(\mathcal{R}, \mathcal{E})\) forms a region graph of the workspace. Additionally, we require the region graph to be strongly connected such that the robot can travel from any region to any other region. Otherwise, each strongly connected component in the graph would be considered a separated workspace. An example of the pose space and the associated graph representation of the ballast tank in Fig. \ref{fig:ballast_cad} is shown in Fig. \ref{h_graph}.

\section{Graph Ergodic Traversal}
\label{section_graph_ergodic}
For the region planner at the top level of the hierarchical planning framework in Fig. \ref{fig:h_framework}, we borrow the notion of ergodic exploration. We present here that graph ergodic exploration can be achieved by sampling from an ergodic Markov chain. Further, we show that the ergodicity of such a Markov chain can be improved by convex optimization. 

\subsection{Continuous and Discrete Ergodic System}
A system is ergodic when the time average of a dynamic system is equal to the space average over the set the system is operating on, as formally defined below. 
\begin{definition}
Ergodic System

The system \((\mathcal{X}, \Sigma, \mu, \Phi)\) is ergodic in a \(\mu\)-measurable function \(F\)  if
   \begin{equation}
\label{continuous_ergodic}
  \lim_{T\to \infty} \frac{1}{T}  \int_{t=0}^T F(x(t))dt  =  \frac{1}{\mu(\mathcal{X})} \int_{\mathcal{X}}F(x) \mu(dx)
\end{equation} 
for almost all initial conditions \(x(0)\). Here, the right-hand side is the space average of \(F\) over all the elements \(x\) in the set \(\mathcal{X}\) with respect to a measure \(\mu\) over the sigma algebra \(\Sigma\). 
The left-hand side is the (long-term) time average of \(F\) evaluated along the trajectory \(x(t)\) generated by a dynamic map \(\ x(t) = \Phi (t,x(0)): (\mathcal{T} \times \mathcal{X}) \rightarrow \mathcal{X} \) with
 initial state \(x(0)\).
\end{definition}


 The following is a similar ergodic system on a graph representation of the space in discrete time.
 
 \begin{definition} Graph-based Ergodic System 
\label{def_graph_ergodicity}

The system \((\mathcal{G}, \Sigma, \mu, M)\) is ergodic in a \(\mu\)-measurable function \(F\)  if   
   \begin{equation}
\label{eq_graph_ergodicity}
     \lim_{K \to \infty} \frac{1}{K}  \sum_{k=0}^{K-1} \left(F(r[k])\right )   = \frac{1}{\mu(\mathcal{R})} \sum_{i=0}^nF(r_i)\mu(r_i) 
\end{equation} 
for all initial conditions \(r[0]\). Here, the right-hand side is the space average of \(F\) over all elements \(r\) in the set \(\mathcal{R}\) with respect to a measure \(\mu\) over the sigma algebra \(\Sigma\). 
The left-hand side is the (long-term) time average of \(F\) evaluated along the sequence \(r[k]\) generated by a dynamic map \(\ r[k+1] = M(r[0:k]): \mathcal{R} \rightarrow \mathcal{R} \) with
 initial state \(r[0]\).
\end{definition}

The graph-based ergodicity in~\eqref{def_graph_ergodicity} implies that the relative frequency of the robot visiting each region (i.e., the time average \(\hat{\rho}\)) is equal to the relative measure of each region on the graph (i.e., the space average \(\bar{\rho}\)) if the measurable function \(F\) in \eqref{eq_graph_ergodicity}  is selected as the indicator function \(I\)
\begin{equation}
\label{indicator_function}
\begin{aligned}
 I(r_i) &\triangleq \begin{bmatrix}
 \delta_{1,i} \\
 \delta_{2,i}\\
 \vdots\\
 \delta_{n,i}
\end{bmatrix} 
\end{aligned}
\end{equation} 
with \(\delta_{i,j}\) representing the Kronecker delta,
and where 

\begin{equation}
\label{time_average}
\begin{aligned}
     \hat{\rho}   &\triangleq \lim_{K \to \infty} \frac{1}{K}  \sum_{k=0}^{K-1} \left(I(r[k])\right ) 
\end{aligned}
\end{equation} 
\begin{equation}
\label{space_average}
\begin{aligned}
     \bar{\rho}  &\triangleq \frac{1}{\mu(\mathcal{R})} \sum_{r_i \in \mathcal{R}}I(r_i)\mu(r_i).
 \end{aligned}
\end{equation}

\subsection{Asymptotic Ergodicity}
\label{section:asymp_ergodic}
We propose to achieve graph-based ergodicity by using a \textit{Markov chain}, where
the transition probability between regions is determined by a stochastic matrix $P$ with the stationary distribution equal to the desired distribution. Then, at the limit, the space average is almost surely equal to the desired distribution based on the ergodic theorem of Markov chains~\cite[Theorem 1.10.2]{bremaud2013markov}.

Specifically, consider a stochastic dynamic map $M$ , where the transition probability $P_{j,i}$ of the robot from region $r_i$ to $r_j$ forms a stochastic matrix  $P \in \mathbb{R}^{n\times n}$. Then, asymptotic ergodicity, in the expected sense, is achieved for any stochastic matrix  $P$ that is irreducible, i.e., can go from any region to any other region in the graph \(\mathcal{G}\) (not necessarily in one move), with the space average \(\bar{\rho}\) as the unique stationary distribution, as shown below.

\begin{theorem}[Ergodic Theorem for Ergodic Graph Traversal]
\label{th_asym_ergodic}
    Let the sequence $ r[ \cdot ]$ of regions of the strongly connected graph \(\mathcal{G} \) be sampled according to an irreducible stochastic matrix  \(P\), i.e., 
    \( r[k+1] \sim  \mathbb{P}(R_{k+1}|R_k = r[k]) \), with the expected value $\rho_k = \mathbb{E}[I(r[k])]$ satisfying
    \begin{equation}
    \rho_{k+1} = P\rho_k
    \end{equation}
    and  an expected 
     (infinite) time average  
    \begin{equation}
    \label{eq_expectedtimeaverage}
       \mathbb{E}(\hat{\rho}) = \lim_{K \to \infty} \frac{1}{K}  \sum_{k=0}^{K-1} \rho_k =  \lim_{K \to \infty} \frac{1}{K}  \sum_{k=0}^{K-1} \left(P^k \rho_0 \right ).
    \end{equation}
    Then, the stochastic system  \((\mathcal{R}, \mathcal{B}, \mu, P)\)  is asymptotically ergodic in expected value, i.e.,  
    \begin{equation}
        \mathbb{E}[\hat{\rho}]  = \bar{\rho}
        \label{eq_asymptotic_ergodicity}
    \end{equation}
    for any initial distribution $\rho_0 $, if the space average \( \bar{\rho}\) defined in \eqref{space_average} is a 
    stationary distribution  of the stochastic matrix  
    $P$, i.e.,
    \begin{equation}
    \label{eq_stable_distribution} 
        P \bar{\rho} =  \bar{\rho}.
    \end{equation}
\end{theorem}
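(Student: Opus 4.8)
The plan is to reduce the claimed Cesàro limit to a statement about the spectrum of $P$ and then exploit the column-stochastic structure. First I would record the orientation: since $P_{j,i}$ is the probability of moving from $r_i$ to $r_j$, each column of $P$ sums to one, so $\mathbf{1}^\top P = \mathbf{1}^\top$ with $\mathbf{1}$ the all-ones vector, and every probability vector satisfies $\mathbf{1}^\top \rho_0 = 1$. The hypothesis $P\bar{\rho}=\bar{\rho}$ supplies a right eigenvector for eigenvalue $1$, while $\mathbf{1}^\top$ is the matching left eigenvector.

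Next I would split the initial distribution as $\rho_0 = \bar{\rho} + w$ with $w \triangleq \rho_0 - \bar{\rho}$. Because both $\rho_0$ and $\bar{\rho}$ sum to one, $w$ lies in the hyperplane $V \triangleq \{v : \mathbf{1}^\top v = 0\}$, and $V$ is $P$-invariant since $\mathbf{1}^\top(Pv) = \mathbf{1}^\top v = 0$. Using linearity together with $P^k\bar{\rho}=\bar{\rho}$,
\[
\frac{1}{K}\sum_{k=0}^{K-1} P^k \rho_0 = \bar{\rho} + \frac{1}{K}\sum_{k=0}^{K-1} P^k w ,
\]
so the theorem follows once I show the averaged transient term tends to $0$ for every $w \in V$.

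The core step is a spectral estimate on the restriction $P|_V$. Since $P$ is stochastic, every power $P^k$ has entries in $[0,1]$ and is therefore uniformly bounded; this boundedness forbids any nontrivial Jordan block attached to a unit-modulus eigenvalue, as such a block would force entries of $P^k$ to grow polynomially in $k$. Combined with irreducibility, Perron--Frobenius makes the eigenvalue $1$ simple with eigenvector $\bar{\rho} \notin V$, so the spectrum of $P|_V$ is exactly that of $P$ with one copy of $1$ removed, and every remaining eigenvalue $\lambda$ satisfies $|\lambda|\le 1$ and $\lambda\neq 1$. Passing to a Jordan basis of $P|_V$, each block with $|\lambda|<1$ contributes terms of the form $k^m\lambda^k\to 0$, whose Cesàro averages vanish, while each (necessarily $1\times1$) block with $|\lambda|=1$ contributes $\frac{1}{K}\sum_{k=0}^{K-1}\lambda^k = \frac{1}{K}\,\frac{1-\lambda^K}{1-\lambda}\to 0$ because $|1-\lambda^K|$ stays bounded. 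Hence $\frac{1}{K}\sum_{k=0}^{K-1}P^k w \to 0$, giving $\mathbb{E}[\hat{\rho}]=\bar{\rho}$.

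I expect the main obstacle to be the periodic case. When $P$ is merely irreducible rather than aperiodic it carries eigenvalues on the unit circle other than $1$, so $P^k\rho_0$ itself oscillates and need not converge; the result is genuinely a statement about the \emph{time} average, not about the powers. The delicate point is therefore to argue that these unit-modulus eigenvalues are simple, so that no secular polynomial growth appears, and that their oscillatory contributions are annihilated by the Cesàro sum --- both of which hinge on the stochasticity-driven boundedness of $P^k$ together with irreducibility. An alternative to the Jordan-form bookkeeping would be to invoke the ergodic theorem for finite Markov chains directly, but the decomposition above keeps the argument self-contained.
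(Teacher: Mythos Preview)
Your argument is correct. The decomposition $\rho_0=\bar\rho+w$ with $w\in V=\{v:\mathbf{1}^\top v=0\}$, the $P$-invariance of $V$, the use of boundedness of $P^k$ to rule out nontrivial Jordan blocks on the unit circle, and the Ces\`aro-annihilation of the remaining $1\times1$ peripheral blocks together settle the periodic case cleanly.

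The paper, however, does not supply a self-contained proof of this theorem. It simply states the result and points to the classical ergodic theorem for Markov chains \cite[Theorem~1.10.2]{bremaud2013markov} as justification. So your route is genuinely different: you give a direct linear-algebraic proof via Perron--Frobenius and Jordan structure, whereas the paper defers to the standard probabilistic ergodic theorem. Your approach has the advantage of being self-contained and of making explicit \emph{why} the Ces\`aro average (rather than $P^k\rho_0$ itself) is what converges in the periodic case. The paper's citation-based approach is terser and in fact delivers something stronger than the stated expected-value claim, since the classical ergodic theorem gives almost-sure convergence of the empirical visitation frequencies $\hat\rho$ to $\bar\rho$, not merely convergence of $\mathbb{E}[\hat\rho]$.
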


\begin{remark}
\label{remark:ergodic_markov}
The term ergodic Markov chain has multiple definitions in the literature, e.g., when the Markov chain is irreducible \cite[p.245]{ParzenEmanuel1929-20161962Sp}, meaning it can transition from any state to any other state in a finite amount of time; or when the Markov chain is both irreducible and aperiodic \cite[p.316]{MeynSean2009MCaS}. In this paper, we adopt the definition most closely aligned with ergodic theory \cite[Theorem 4.1]{bremaud2013markov}, which requires irreducibility (but not aperiodicity) and further stipulates that the time average must equal a specific space average as in Definition (\ref{def_graph_ergodicity}).
\end{remark}

\subsection{Time-Discounted Ergodic Objective}
\label{section:discounted_ergodic}
In contrast to the infinite time for asymptotic ergodicity, a time-discounted ergodicity is defined next to formalize the need for faster ergodicity in time.

\begin{definition}
\label{def_weighted_time_avrg}
The stochastic system \((\mathcal{R}, \mathcal{B}, \mu, P, w)\) is discounted ergodic in expected value if 
\begin{equation} 
\label{eq_weighted_ergodic}
\begin{aligned}
    \mathbb{E}[\hat{\rho}_w]  = \bar{\rho} 
    \end{aligned}
\end{equation}
where the left hand side is 
the expected time-discounted average of the sequence of regions 
\(r[\cdot]\)
\begin{equation} 
\label{weightedtimeaverage}
\begin{aligned}
    \mathbb{E}[\hat{\rho}_w] & \triangleq\lim_{K \to \infty}    \frac{1}{\sum_{k{=0}}^{K-1} w_k}\left(\sum_{k=0}^{K-1} w_kP^k\rho_0 \right )  . 
    \end{aligned}
\end{equation}
and \(w = (w_0, w_1, \cdots, w_k, \cdots)\) is a sequence of weights.
Moreover,  
the time-discounted average operator $f_w$ associated with the weight \(w \) is defined as  
\begin{equation} 
\begin{aligned}
    f_w(\lambda)\triangleq    \lim_{K \to \infty}    \frac{1}{\sum_{k{=0}}^{K-1} w_k}\left(\sum_{k=0}^{K-1} w_k\lambda^k \right ).
    \end{aligned}
\label{eq_fw_operator} 
\end{equation}

\end{definition}

\begin{remark}[Factorial discounting] 
\noindent 
An example of time weighting is a factorial discounting over time, 
\begin{equation}
\begin{aligned}
w_k = 1/k!\\
\end{aligned}
\label{eq_factorial_weighting}
\end{equation}
with the associated time-discounted-average operator in~\eqref{eq_fw_operator} 
\begin{equation} 
\begin{aligned}
    f_w(\lambda)\triangleq    \lim_{K \to \infty}    \frac{1}{\sum_{k{=0}}^{K-1} 1/k! }\left(\sum_{k=0}^{K-1} \frac{1}{k!}\lambda^k \right ) 
    \end{aligned} ~= 
    e^{-1}{e^{\lambda}} .
\label{eq_fw_operator_exp} 
\end{equation}
\end{remark} 

In general, discounted ergodicity \eqref{eq_weighted_ergodic} cannot be achieved for an arbitrary initial condition. Instead, the objective is to select the stochastic matrix \(P\) optimally to reduce the difference between the expected time-discounted average $\mathbb{E}[\hat{\rho}_w]$ in Eq.~\eqref{weightedtimeaverage} and the space average \(\bar{\rho}\) over different initial distributions $\rho_0$, normalized over the initial difference. In other words, we want to minimize the normalized ergodicity deviation (NED) 
\begin{equation} 
\label{NED_objective}
   \begin{aligned}
&  
J_{\mathbb{E}} 
~ = 
\max_{\rho_0}
\frac{\|Q^{1/2}\left(\mathbb{E}[\hat{\rho}_w] - \bar{\rho}\right) \|_2}{\|Q^{1/2}\left(\rho_0-\bar{\rho}\right)   \|_2} ,
\end{aligned}
\end{equation}
where \(Q\) is a given weighting matrix\footnote{A popular method for defining the convergence rate is to calculate the error for an arbitrarily finite time horizon. This is a special case of the discounted time average with a weighting function \(w = [1,1,1\cdots 0,0,0\cdots]\).}.

\subsection{Symmetric Stochastic Matrix}
\label{section:symmetric}
The following shows that for symmetric stochastic matrices \(P\), time-discounted ergodicity can be exactly optimized by minimizing the second largest eigenvalue of \(P\). This is in contrast to minimizing the second largest eigenvalue modulus (SLEM) of \(P\), which would speed up convergence to the desired stationary distribution, e.g., as in~\cite{BoydStephen2004FMMC}.

The  theorem below shows that for symmetric stochastic matrices \(P\),  under conditions typical for discount factors on the weighting scheme \(w \),  optimal time-discounted ergodicity (across all initial distributions) can be achieved by minimizing the second largest eigenvalue of \(P\).

\begin{theorem}
\label{thm_optmimalsym}
    Let the stochastic matrix \(P\) be  symmetric 
    and irreducible. Moreover,  let the time-discounted-average operator $f_w$  in~\eqref{eq_fw_operator} 
    be positive semi-definite and 
    strictly monotonic in $[-1, 1]$ namely 
    $f_w(\lambda_1)> f_w(\lambda_2) \geq 0 $ if 
    $  \lambda_1> \lambda_2  $ for all $\lambda_1,\lambda_2 \in [-1,1]$.
    Then,  the normalized ergodicity deviation in~\eqref{NED_objective} with weighting matrix \(Q = I\) is minimized
    for the stationary distribution \(\bar{\rho} = \frac{1}{n}\textbf{1}\), 
    if and only if
    the second largest eigenvalue  (SLE)
    is minimized, i.e.,
    \begin{equation}
    \label{eq_second_largest_eig}
    \begin{aligned}
     & \arg\min_P \lambda_2(P) 
    \end{aligned}
    \end{equation}
    where 
    $  1=\lambda_1 > \lambda_2 \geq \lambda_3 \geq\cdots \lambda_n$.
\end{theorem}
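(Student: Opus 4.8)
The plan is to diagonalize $P$ and reduce the normalized ergodicity deviation in \eqref{NED_objective} to a Rayleigh quotient in the eigenvalues, where the monotonicity and nonnegativity of $f_w$ do all the work. First I would record the structural consequences of the hypotheses. A symmetric stochastic matrix is doubly stochastic, so the uniform vector $\bar{\rho} = \frac{1}{n}\mathbf{1}$ is a stationary distribution; by the spectral theorem there is an orthonormal eigenbasis $v_1,\dots,v_n$ with real eigenvalues, and Perron--Frobenius applied to the irreducible $P$ makes the top eigenvalue simple, giving $1 = \lambda_1 > \lambda_2 \ge \cdots \ge \lambda_n \ge -1$ with $v_1 = \frac{1}{\sqrt{n}}\mathbf{1}$.

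Next I would rewrite the time-discounted average as a matrix function. Since $P^k = \sum_i \lambda_i^k v_i v_i^\top$ and the spectral sum is finite, the operator in \eqref{weightedtimeaverage} becomes $\mathbb{E}[\hat{\rho}_w] = f_w(P)\rho_0 = \sum_i f_w(\lambda_i)\, v_i v_i^\top \rho_0$, where $f_w$ is exactly the scalar operator \eqref{eq_fw_operator} applied eigenvalue-wise. Two facts then collapse the leading term: $f_w(1)=1$ directly from the definition, and $v_1 v_1^\top \rho_0 = \frac{1}{n}\mathbf{1}(\mathbf{1}^\top\rho_0) = \bar{\rho}$ because $\rho_0$ is a probability vector with $\mathbf{1}^\top\rho_0 = 1$. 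Hence the $i=1$ contribution equals $\bar{\rho}$ and cancels, leaving $\mathbb{E}[\hat{\rho}_w]-\bar{\rho} = \sum_{i\ge 2} f_w(\lambda_i)\, v_i v_i^\top \rho_0$ while $\rho_0 - \bar{\rho} = \sum_{i\ge 2} v_i v_i^\top \rho_0$.

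With $Q=I$ and orthonormality, the squared objective in \eqref{NED_objective} becomes $\dfrac{\sum_{i\ge 2} f_w(\lambda_i)^2 c_i^2}{\sum_{i\ge 2} c_i^2}$ with $c_i = v_i^\top\rho_0$, a Rayleigh quotient whose supremum over admissible perturbations is $\max_{i\ge 2} f_w(\lambda_i)^2$. Here the positive-semidefiniteness (nonnegativity of $f_w$ on $[-1,1]$) together with strict monotonicity is essential: it forces $0 \le f_w(\lambda_i) \le f_w(\lambda_2)$ for every $i \ge 2$, so $\max_{i\ge 2} f_w(\lambda_i)^2 = f_w(\lambda_2)^2$ and therefore $J_{\mathbb{E}} = f_w(\lambda_2)$. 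Finally, strict monotonicity makes $f_w(\lambda_2)$ a strictly increasing function of $\lambda_2$, so minimizing $J_{\mathbb{E}}$ over $P$ is equivalent to the condition \eqref{eq_second_largest_eig}, yielding the claimed ``if and only if.''

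The main obstacle I anticipate is justifying the maximization over $\rho_0$ under the probability-simplex constraint rather than over an unconstrained subspace. The upper bound is automatic, since $\rho_0 - \bar{\rho}$ always lies in $\mathbf{1}^\perp = \mathrm{span}\{v_2,\dots,v_n\}$, so the Rayleigh bound $J_{\mathbb{E}}^2 \le f_w(\lambda_2)^2$ holds for every feasible $\rho_0$. Attainability is the delicate part: I would take $\rho_0 = \bar{\rho} + \epsilon v_2$ and observe that, because $\bar{\rho}$ has strictly positive entries, $\rho_0$ remains in the simplex for $\epsilon$ small enough, while scale-invariance of the ratio makes it exactly $f_w(\lambda_2)$ independent of $\epsilon$. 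A secondary technical point is confirming that the limit defining $f_w$ exists at each $\lambda_i \in [-1,1]$ and commutes with the finite spectral sum; the nonnegativity hypothesis is precisely what prevents a strongly negative value of $f_w$ (for instance near $\lambda=-1$ from a periodic component) from producing a larger square than $f_w(\lambda_2)^2$ and thereby breaking the reduction.
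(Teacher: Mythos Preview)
Your proposal is correct and follows essentially the same approach as the paper: spectral decomposition of the symmetric $P$, cancellation of the leading eigen-component using $f_w(1)=1$ and $\mathbf{1}^\top\rho_0=1$, reduction of the NED to the largest value of $f_w$ on the remaining spectrum, and the monotonicity/nonnegativity of $f_w$ to identify that value as $f_w(\lambda_2)$. Your Rayleigh-quotient formulation and explicit attainability construction $\rho_0=\bar\rho+\epsilon v_2$ are minor stylistic variants of the paper's induced-norm argument and its appendix lemma on the span of $\rho_0-\bar\rho$, but the logical skeleton is identical.
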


\begin{proof}
    Since \(P\) is a symmetric stochastic matrix, the eigenvalues of \(P\), \(\lambda(P)\), are real and 
    lies within the domain \([-1,1]\), and it can be diagonalized as $P  = V\Lambda U^T$, 
    where \(\Lambda\) is a diagonal matrix of all the eigenvalues of \(P\), \(V\) is a matrix with its column being the right eigenvectors of \(P\), and \(U\) is a matrix with its column being the left eigenvectors of \(P\)  ordered according to \(\Lambda\).
    Moreover, \(f_w\) is a power series, and the set of eigenvalues \(\lambda(P)\) lies within it's domain, \(f_w(P)\), 
    obtained by replacing the  scalar $\lambda$ in \eqref{eq_fw_operator} with the matrix $P$, 
    \begin{equation}
    \begin{aligned}
        f_w(P) & \triangleq \lim_{K \to \infty}    \frac{1}{\sum_{k{=0}}^{K-1} w_k}\left(\sum_{k=0}^{K-1} w_kP^k\right )
    \end{aligned}
    \end{equation}
    exists, has the same eigenvectors as \(P\), and the eigenvalues of \(f_w(P)\) are the scalar mapping \(f_w(\lambda(P))\) of the eigenvalues $\lambda(P)$  of the matrix \(P\) \cite[Definition 1.2. on p.3]{matrix_function}, i.e., 
    \begin{equation}
    \label{analytic}
    \begin{aligned}
        f_w(P) &= Vf_w(\Lambda) U^T.
    \end{aligned}
    \end{equation}
    
    \noindent 
    With \eqref{analytic}, the numerator in \eqref{NED_objective}  can be written as 
    \begin{equation}
    \label{eq_theorem_2_0}
    \begin{aligned}
             & \left \|\mathbb{E}[\hat{\rho}_w] - \bar{\rho} \right \|_2  =    \left \|f_w(P)\rho_0 - \bar{\rho}\right \|_2 
            =   \left \|  Vf_w(\Lambda) U^T\rho_0 -\bar{\rho} \right \|_2 \\
            & ~ =   \left \|f_w(\lambda_1)v_1u_1^T\rho_0+ \left(\sum_{i=2}^{n}  f_w(\lambda_i)v_iu_i^T\rho_0 \right) -\bar{\rho}\right \|_2 \\
            \end{aligned}
    \end{equation}

    \noindent 
    where \(\lambda_1=1\) is the largest eigenvector of the stochastic matrix \(P\), \(u_1 = \textbf{1}\) is the associated left eigenvector and \(v_1 = \bar{\rho}\). This results in 
            \begin{equation}
    \begin{aligned}
    & \left \|\mathbb{E}[\hat{\rho}_w] \!- \bar{\rho} \right \|_2 
    \!= \!\! \left \|f_w(1)\bar{\rho}\textbf{1}^T\rho_0 + \left(\sum_{i=2}^{n}  f_w(\lambda_i)v_iu_i^T\rho_0 \right) -\bar{\rho}\right \|_2 \!.
            \end{aligned}
    \end{equation}
    Using  \(\textbf{1}^T\rho_0 = 1\) for any probability-distribution vector, and  \(f_w(1) =1\)  by construction in~\eqref{eq_fw_operator}, the previous expression can be rewritten as 
    \begin{equation}
    \label{eq_temp_1_theorem_2}
    \begin{aligned}
    & \left \|\mathbb{E}[\hat{\rho}_w] - \bar{\rho} \right \|_2 
    =
    \left \|\bar{\rho} + \left(\sum_{i=2}^{n}  f_w(\lambda_i)v_iu_i^T\rho_0 \right) -\bar{\rho} \right \|_2  
    \\
    & \quad 
    =  
            \left \|
    \sum_{i=2}^{n}  f_w(\lambda_i)v_iu_i^T\rho_0
            \right\|_2 
    ~=
    \left\|
     V_{\perp} f_w(\Lambda_{\perp}) U_{\perp}^T\rho_{0{\perp}}
     \right \|_2  \\
      & \quad  = \left\|f_w\left(P|_{u_1^\perp}\right)\rho_{0{\perp}} \right \|_2 . 
    \end{aligned}
    \end{equation}
    Here, \(P|_{u_1^\perp} \) is the restriction of \(P\) in the subspace orthogonal to \(u_1\),  \( \rho_{0{\perp}}\) is the component of \( \rho_0\) that is orthogonal to \(u_1\), and 
    $V_{\perp}$,\(I_{\perp}\), $\Lambda_{\perp}$ are the associated restrictions of $V$,\(U\), $\Lambda$ in ~\eqref{eq_theorem_2_0}, respectively. 
    Then, the NED objective in~\eqref{NED_objective} can be written using~\eqref{eq_temp_1_theorem_2} as 
    \begin{equation}
    \begin{aligned}
    &J_{\mathbb{E}} =   \max_{\rho_{0}}
    \frac{\|\mathbb{E}[\hat{\rho}_w] - \bar{\rho} \|_2}{\|\rho_0-\bar{\rho}   \|_2} 
    ~= \max_{\rho_{0\perp}}\frac{\left\|f_w\left(P|_{u_1^\perp}\right)\rho_{0\perp} \right \|_2}{\|\rho_{0\perp}\|_2} .
    \end{aligned}
    \label{eq_temp_2_theorem_2}
    \end{equation}
    Since the set of \(\rho_{0\perp} = \epsilon(\rho_0) x\) spans the orthogonal space, (see lemma in Appendix \ref{tightness}), 
the right hand side (rhs) of~\eqref{eq_temp_2_theorem_2} is equivalent to the induced matrix norm as 
    \begin{equation}
    \label{orthogonal_objective}
    \begin{aligned}
   & \max_{\rho_{0\perp}}\frac{\left\|f_w\left(P|_{u_1^\perp}\right)\rho_{0\perp} \right \|_2}{\|\rho_{0\perp}\|_2} = \max_{\|x\|_2=1}\frac{\left\|f_w\left(P|_{u_1^\perp}\right)\epsilon(\rho_0) x \right \|_2}{\|\epsilon(\rho_0) x\|_2} 
     \\
      & \quad =  \max_{\|x\|_2=1}\left\|f_w\left(P|_{u_1^\perp}\right) x \right \|_2 
    = \left\|f_w\left(P|_{u_1^\perp}\right) \right\|_2.
    \end{aligned}
    \end{equation}
    Moreover, \(f_w(P|_{u_1^\perp})\) is positive semi-definite since, from symmetry of \(P\), \(P|_{u_1^\perp}\) is symmetric. Therefore, its  2-norm is equal to the maximum eigenvalue, i.e., 
    \begin{equation}
    \label{eq_induced_2_norm}
    \begin{aligned}
    & \min_P \left\|f_w\left(P|_{u_1^\perp}\right) \right\|_2
    ~= \min_P \lambda_{\text{max}}\left( f_w\left(P|_{u_1^\perp}\right)\right)
    \end{aligned}
    \end{equation}
    and since \(f_w \lambda \) is strictly monotonic in $\lambda$, the eigenvalue of $f_w\left(P|_{u_1^\perp}\right)$ is maximized 
    if and only if the eigenvalue of \(P|_{u_1^\perp}\) is maximized, i.e.,
    \begin{equation}
    \begin{aligned}
     \min_P \lambda_{\text{max}}\left( f_w\left(P|_{u_1^\perp}\right)\right)
    =  
    f_w\left( \min_P 
    \lambda_{\text{max}}\left( P|_{u_1^\perp}\right)\right)
    \end{aligned}
    \label{eq_theorem_2_proof_3}
    \end{equation}
    where 
    $    \lambda_{\text{max}}\left( P|_{u_1^\perp}\right) = \max_{i\neq1} \lambda_i(P) $.
    From \eqref{eq_temp_2_theorem_2}-\eqref{eq_theorem_2_proof_3},   
    \begin{equation} 
    \label{NED_P_objective}
       \begin{aligned}
    \arg\min_P \left(   \left\|f_w\left(P|_{u_1^\perp}\right) \right\|_2 \right)
    = 
    \arg\min_P \left( \max_{i\neq1} \lambda_i(P) \right)
    \end{aligned}
    \end{equation}
    leading to the theorem's result in~\eqref{eq_second_largest_eig}. 
\end{proof}

\begin{remark}[Uniform weighting]
    \label{remark:uniform_weighting}
    The  time-discounted-average  operator $f_w$ in Eq.~\eqref{eq_fw_operator} does not satisfy the strictly monotonic condition in Theorem~\ref{thm_optmimalsym} 
    with uniform weighting 
     \(w = (1,1,1, \cdots)\), 
    since 
    $f_w(\lambda) =0$ if $ \lambda \in [-1,1)$ and 
     $f_w(\lambda) =1$ $ \lambda =1$.
    Therefore, for any symmetric 
    and irreducible
    stochastic matrix \(P\) satisfying 
     $   P\bar{\rho} = \bar{\rho}$
    in~\eqref{eq_stable_distribution} 
    with \(\bar{\rho} = \frac{1}{n}\textbf{1}\), the ergodic deviation becomes zero from~\eqref{eq_temp_1_theorem_2} , i.e., 
    \begin{equation}
    \label{eq_remark_uniform_w_2}
    \begin{aligned}
    & \left \|\mathbb{E}[\hat{\rho}_w] - \bar{\rho} \right \|_2 
    ~ =  
            \left \|
    \sum_{i=2}^{n}  f_w(\lambda_i)v_iu_i^T\rho_0
            \right\|_2 
    ~=
    0 .
    \end{aligned}
    \end{equation}
    Thus, the uniformly-weighted  ergodicity deviation in~\eqref{NED_objective} is minimized with any  symmetric 
    and irreducible
    stochastic matrix \(P\),
    as in Theorem~\ref{th_asym_ergodic}.
\end{remark}
The  symmetric optimal stochastic matrix $P$,
that minimizes the normalized ergodicity deviation as in Theorem~\ref{thm_optmimalsym}, 
 can be found by solving a convex optimization problem, as shown in the following corollary.

\begin{corollary}
    The optimal symmetric stochastic matrix \(P\)  in Theorem~\ref{thm_optmimalsym},  with the minimum, second-largest eigenvalue,  can be  obtained by solving the following convex optimization problem.
    \begin{equation} 
    \label{opt_Corollary_Theorem_2}
       \begin{aligned}
        \arg \min_p  &\quad \lambda_{\text{max}}\left( P - \frac{2}{n}\textbf{1}\textbf{1}^T \right) & \\
        \text{s.t.} &\quad  \textbf{1}^TP = \textbf{1}^T &\quad  
        {(\mbox{Stochastic}}~P)
        \\
            & \quad P_{i,j} \geq 0
            &\quad 
        {(\mbox{Stochastic}}~P)
        \\
            & \quad P = P^T
                &\quad 
        {(\mbox{Symmetric}}~P)
        \\
            & \quad P_{i,j} = 0 \quad \text{if} \quad (j,i) \notin \mathcal{E} &
            ~~ 
        {(\mbox{Transitions in}}~\mathcal{G}).
    \end{aligned}
    \end{equation}
\end{corollary}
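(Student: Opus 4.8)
The plan is to prove the corollary in two parts: first, that the feasible set of the program in~\eqref{opt_Corollary_Theorem_2} is exactly the set of symmetric, graph-respecting stochastic matrices with uniform stationary distribution $\bar{\rho} = \frac{1}{n}\textbf{1}$ contemplated in Theorem~\ref{thm_optmimalsym}; and second, that the objective $\lambda_{\max}(P - \frac{2}{n}\textbf{1}\textbf{1}^T)$ coincides, on that set, with the second-largest eigenvalue $\lambda_2(P)$ that~\eqref{eq_second_largest_eig} asks us to minimize. Combined with convexity, these give that the minimizer of the program is precisely $\arg\min_P \lambda_2(P)$. Verifying the constraints is routine: $\textbf{1}^T P = \textbf{1}^T$ with $P_{i,j} \ge 0$ makes $P$ column-stochastic, and together with $P = P^T$ it is doubly stochastic, so $P\textbf{1} = \textbf{1}$ and $\frac{1}{n}\textbf{1}$ is stationary as required; the support constraint $P_{i,j} = 0$ for $(j,i) \notin \mathcal{E}$ restricts transitions to edges of $\mathcal{G}$, with irreducibility inherited from strong connectivity. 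Each of the four constraints is affine in the entries of $P$, so the feasible set is a convex polytope.

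The heart of the argument is a spectral-deflation computation. Since $P$ is symmetric and doubly stochastic, $\textbf{1}/\sqrt{n}$ is its top eigenvector with $\lambda_1 = 1$, and the remaining eigenvectors lie in $\textbf{1}^\perp$ with real eigenvalues ordered as $1 = \lambda_1 > \lambda_2 \ge \cdots \ge \lambda_n \ge -1$. I would decompose $\mathbb{R}^n = \mathrm{span}(\textbf{1}) \oplus \textbf{1}^\perp$ and evaluate $P - \frac{2}{n}\textbf{1}\textbf{1}^T$ on each summand: on $\mathrm{span}(\textbf{1})$ the rank-one term contributes $\frac{2}{n}\textbf{1}(\textbf{1}^T\textbf{1}) = 2\textbf{1}$, shifting the eigenvalue $1$ to $1 - 2 = -1$; on $\textbf{1}^\perp$ the rank-one term annihilates and the restriction equals $P|_{\textbf{1}^\perp}$, leaving the spectrum $\lambda_2, \dots, \lambda_n$ intact. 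Hence the eigenvalues of the deflated matrix are $\{-1, \lambda_2, \dots, \lambda_n\}$, and since every eigenvalue of a symmetric stochastic matrix satisfies $\lambda_i \ge -1$, the maximum of this set is $\lambda_2$. This yields $\lambda_{\max}(P - \frac{2}{n}\textbf{1}\textbf{1}^T) = \lambda_2(P)$, so the program's objective is exactly the Theorem~\ref{thm_optmimalsym} criterion.

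Convexity then follows because $\lambda_{\max}(\cdot)$ is convex on symmetric matrices and $P \mapsto P - \frac{2}{n}\textbf{1}\textbf{1}^T$ is affine, so the composition is convex; minimized over the convex polytope of constraints, the program is convex (indeed semidefinite-representable via $\lambda_{\max}(A) \le t \Leftrightarrow tI - A \succeq 0$). I expect the main subtlety to be justifying the specific shift coefficient $2/n$: the deflation must drive the top eigenvalue $1$ down to at most the spectral floor so that $\lambda_{\max}$ of the deflated matrix no longer sees it and instead exposes $\lambda_2$. The choice $2/n$ sends $1 \mapsto -1$, which is guaranteed to be $\le \lambda_n$ precisely because all eigenvalues of a symmetric stochastic matrix lie in $[-1,1]$. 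This is exactly why $2/n$ is correct here, in contrast to the $1/n$ used in the FMMC/SLEM formulation of~\cite{BoydStephen2004FMMC}, which instead targets $\max_i |\lambda_i|$ through a spectral norm; the distinction between $\lambda_{\max}$ of a deflated matrix (largest eigenvalue) and its spectral norm (largest modulus) is the conceptual crux separating REMC from FMMC.
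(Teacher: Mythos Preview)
Your proposal is correct and follows essentially the same route as the paper: the paper also argues that subtracting $\frac{2}{n}\textbf{1}\textbf{1}^T$ is a Wielandt deflation that shifts the top eigenvalue $1$ to $-1$ while leaving $\lambda_2,\dots,\lambda_n$ unchanged, so that $\lambda_{\max}(P-\frac{2}{n}\textbf{1}\textbf{1}^T)=\max_{i\ne1}\lambda_i(P)$ because every eigenvalue of a symmetric stochastic matrix is at least $-1$, and then invokes convexity of $\lambda_{\max}$ on symmetric matrices together with linearity of the constraints. Your additional remarks on the SDP representation and the $2/n$ versus $1/n$ contrast with FMMC are sound elaborations, though the claim that irreducibility is ``inherited from strong connectivity'' is a slight overreach (a feasible $P$ need not be irreducible), but this does not affect the objective-equivalence argument.
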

\begin{proof}
    The optimization follows since the second largest eigenvalue \(\max_{i\ne1} \lambda_i (P) \) 
    is  the largest eigenvalue 
    of $ P - \frac{2}{n}\textbf{1}\textbf{1}^T$, which is a    Wielandt deflation \cite{cdi_unpaywall_primary_W1879119993} of the largest eigenvalue $\lambda_1$  of $P$ from $\lambda_1 =1$  to $-1$ ( the smallest possible eigenvalue of a symmetric stochastic matrix), i.e., 
    \begin{equation}
    \begin{aligned}
    \lambda(P - \frac{2}{n}\textbf{1}\textbf{1}^T) &= \{\lambda_1-2, \lambda_2, \lambda_3, \cdots \}\\
    &= \{-1, \lambda_2, \lambda_3, \cdots \} \\
    \Rightarrow \lambda_{\text{max}}(P - \frac{2}{n}\textbf{1}\textbf{1}^T) &= \max_{i\ne1} \lambda_i (P) ~~{\mbox{since}~} \lambda_i \ge -1.
    \end{aligned}
    \end{equation}
    
    The minimization of the largest eigenvalue of a symmetric matrix ($P - \frac{2}{n}\textbf{1}\textbf{1}^T$) is convex~\cite[Example 3.10 p.82]{bremaud2013markov} and all the constraints are linear, and therefore ~\eqref{opt_Corollary_Theorem_2} is a convex optimization problem. 
\end{proof}

\subsection{Nonuniform Target Distribution}
Although the optimal symmetric stochastic matrix \(P\) with the minimum second-largest eigenvalue exactly minimizes 
normalized ergodicity deviation, the only possible target distribution is the uniform distribution \(\bar{\rho} = \frac{1}{n}\textbf{1}\). For a general target distribution \(\bar{\rho} \) with no zero entries,  if the detailed-balance condition is satisfied, i.e., 
\begin{equation}
\label{eq_detailed_balance_condition}
    P\Pi = \Pi P^T, \quad \Pi \triangleq \text{diag}(\bar{\rho}),
\end{equation}
it is shown that the stochastic matrix  $P$ with the minimum second largest eigenvalue minimizes the normalized ergodicity deviation with the weighting matrix  \(Q  \) chosen to be inversely proportional to the target distribution \(\bar{\rho}\). In other words, \(Q = \Pi^{-1} \), 
which can be interpreted as normalizing the error vector by the size of the target distribution.

\begin{remark}[Detailed balance usage]
The detailed balance condition in~\eqref{eq_detailed_balance_condition}, i.e., that the Markov chain being reversible, has been applied to generate Markov chain with the desired target distribution in the Metropolis-Hasting algorithm \cite{HastingsW.K.1970MCsm}, and subsequently required to select the optimal stochastic matrix $P$ for the fastest mixing Markov chain in \cite{BoydStephen2004FMMC}. 
\end{remark}

\begin{corollary}
\label{detailed_balance}
Selecting the stochastic matrix \(P\) to have the  smallest,  second-largest eigenvalue as in Eq.~\eqref{eq_second_largest_eig} minimizes the 
normalized ergodicity deviation in~\eqref{NED_objective} with weighting \(Q = \Pi^{-1} \) for a given target distribution \(\bar{\rho} \) with no zero entries. 
Additionally,  the optimal stochastic matrix \(P\) can be obtained by solving the following convex optimization problem 
\begin{equation}
\begin{aligned}
    \arg\min_P & \quad \lambda_{\text{max}}(\Pi^{-1/2}(P-2\bar{\rho}\textbf{1}^T)\Pi^{1/2} ) 
\end{aligned}
\end{equation}
with the same conditions as in Eq.~\eqref{opt_Corollary_Theorem_2} but with the symmetry $ P = P^T$ condition replaced by the 
detailed-balance condition in Eq.~\eqref{eq_detailed_balance_condition}.

\end{corollary}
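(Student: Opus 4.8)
The plan is to reduce the detailed-balance case to the symmetric case already settled in Theorem~\ref{thm_optmimalsym} via a similarity transformation that symmetrizes $P$. First I would introduce the symmetrized matrix $\tilde{P} \triangleq \Pi^{-1/2}P\Pi^{1/2}$ and verify that the detailed-balance condition~\eqref{eq_detailed_balance_condition} is exactly equivalent to $\tilde{P}$ being symmetric: writing out entries, $\tilde{P}_{ij} = \bar{\rho}_i^{-1/2}P_{ij}\bar{\rho}_j^{1/2}$, so $\tilde{P}=\tilde{P}^T$ holds if and only if $\bar{\rho}_j P_{ij} = \bar{\rho}_i P_{ji}$, which is precisely $P\Pi = \Pi P^T$. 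Since $\tilde{P}=\Pi^{-1/2}P\Pi^{1/2}$ is similar to $P$, the two share the same (real, $[-1,1]$-valued) spectrum with $\lambda_1 = 1$; the unit eigenvector of $\tilde{P}$ for this eigenvalue is $q \triangleq \Pi^{1/2}\textbf{1}$, because $\tilde{P}q = \Pi^{-1/2}P\bar{\rho} = \Pi^{-1/2}\bar{\rho} = q$ and $\|q\|_2^2 = \textbf{1}^T\Pi\textbf{1} = 1$.

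Next I would push the weighting $Q = \Pi^{-1}$, i.e. $Q^{1/2} = \Pi^{-1/2}$, through the NED objective~\eqref{NED_objective} using the change of variables $y_0 \triangleq \Pi^{-1/2}\rho_0$. From $P^k = \Pi^{1/2}\tilde{P}^k\Pi^{-1/2}$ and hence $f_w(P) = \Pi^{1/2}f_w(\tilde{P})\Pi^{-1/2}$, the numerator becomes $Q^{1/2}(f_w(P)\rho_0 - \bar{\rho}) = f_w(\tilde{P})y_0 - q$ and the denominator becomes $Q^{1/2}(\rho_0-\bar{\rho}) = y_0 - q$, where I use $\Pi^{-1/2}\bar{\rho} = q$. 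This is exactly the expression analyzed in the proof of Theorem~\ref{thm_optmimalsym}, but now with the symmetric $\tilde{P}$ in place of $P$, with the leading unit eigenvector $q$ in place of $\bar{\rho}$, and with $y_0$ in place of $\rho_0$.

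The key cancellation then follows from $q^T y_0 = \textbf{1}^T\Pi^{1/2}\Pi^{-1/2}\rho_0 = \textbf{1}^T\rho_0 = 1 = \|q\|_2^2$. This shows simultaneously that $y_0 - q$ is orthogonal to $q$ and, using $f_w(1)=1$ on the leading term, that the numerator collapses to $f_w(\tilde{P}|_{q^\perp})(y_0 - q)$. Mirroring~\eqref{eq_temp_1_theorem_2}--\eqref{orthogonal_objective} with the tightness lemma of Appendix~\ref{tightness} (applied to $y_{0\perp}=y_0-q$, which sweeps out all of $q^\perp$ as $\rho_0$ ranges over the simplex, since $\Pi^{-1/2}$ is invertible), I obtain $J_{\mathbb{E}} = \|f_w(\tilde{P}|_{q^\perp})\|_2$. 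Because $\tilde{P}|_{q^\perp}$ is symmetric and $f_w$ is positive semi-definite and strictly monotonic, the induced norm equals $f_w(\max_{i\neq1}\lambda_i(\tilde{P}))$, and monotonicity of $f_w$ yields that $J_{\mathbb{E}}$ is minimized if and only if $\max_{i\neq1}\lambda_i(\tilde{P}) = \lambda_2(P)$ is minimized, which is the claimed equivalence.

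Finally, for the convex program I would identify the objective matrix as a Wielandt deflation of $\tilde{P}$: a direct computation gives $\Pi^{-1/2}(P - 2\bar{\rho}\textbf{1}^T)\Pi^{1/2} = \tilde{P} - 2qq^T$, which sends $\lambda_1=1$ to $-1$ while fixing the remaining eigenvalues, so its largest eigenvalue equals $\lambda_2(P)$ whenever $\lambda_i \geq -1$. Because the detailed-balance constraint forces this matrix to be symmetric, $\lambda_{\text{max}}$ is a convex function of an affine expression in $P$, and the remaining constraints (stochasticity, nonnegativity, graph support, and the linear detailed-balance equality) are all linear, so the program is convex, exactly as in the first corollary. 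The main obstacle I anticipate is the bookkeeping in the change of variables: one must confirm that $q$ serves as both the left and right leading eigenvector of the symmetric $\tilde{P}$ (so the rank-one leading term cancels the $-q$ cleanly) and that the tightness lemma transfers unchanged to $y_{0\perp}$ after conjugation by the invertible $\Pi^{-1/2}$; the eigenvalue manipulations themselves are then routine given Theorem~\ref{thm_optmimalsym}.
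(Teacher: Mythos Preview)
Your proposal is correct and follows essentially the same approach as the paper: both symmetrize via the similarity transform $\tilde{P}=\Pi^{-1/2}P\Pi^{1/2}$ (for which detailed balance is equivalent to $\tilde{P}=\tilde{P}^T$), change variables by $\Pi^{-1/2}$ in the weighted NED, and then invoke the argument of Theorem~\ref{thm_optmimalsym} together with the invariance of spectra under similarity. You are, if anything, more explicit than the paper in two places---identifying the Wielandt deflation $\tilde{P}-2qq^T$ that justifies the stated convex objective, and noting that the tightness lemma must be transferred to $q^{\perp}$ via the invertible $\Pi^{-1/2}$---both of which the paper handles only implicitly.
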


\begin{proof}
The normalized ergodic deviation in~\eqref{NED_objective} can be rewritten as 
\begin{equation}
\begin{aligned}
\label{corollary2_eq1}
      & \max_{\rho_0}
\frac{\|Q^{1/2}(\mathbb{E}[\hat{\rho}_w]  - \bar{\rho}) \|_2 }{\|Q^{1/2}\left(\rho_0-\bar{\rho}\right)   \|_2} 
= \max_{\rho_0}
\frac{\left\|Q^{1/2}f_w\left(P|_{u_1^\perp}\right)\rho_{0{\perp}} \right\|_2 }{\|Q^{1/2}\rho_{0\perp}   \|_2} \\
 & \qquad= \max_{\rho_0}
\frac{\left\|Q^{1/2}f_w\left(P|_{u_1^\perp}\right)Q^{-1/2}Q^{1/2}\rho_{0{\perp}} \right\|_2 }{\|Q^{1/2}\rho_{0\perp} \|_2} \\
& \qquad= \max_{\rho_0}
\frac{\left\|f_w\left(Q^{1/2}P|_{u_1^\perp}Q^{-1/2}\right)Q^{1/2}\rho_{0{\perp}} \right\|_2 }{\|Q^{1/2}\rho_{0\perp}   \|_2}  \\
& \qquad= \max_{\rho_0}
\frac{\left\|f_w\left(\tilde{P}|_{u_1^\perp}\right)\tilde{\rho}_{0{\perp}} \right\|_2 }{\|\tilde{\rho}_{0\perp}   \|_2} 
\end{aligned}
\end{equation}
with 
\begin{equation}
\label{definition_P_tilde} 
    \tilde{\rho} = Q^{1/2}\rho = \Pi^{-1/2}\rho, \quad 
\tilde{P} = Q^{1/2}PQ^{-1/2} = \Pi^{-1/2}P\Pi^{1/2} 
\end{equation}
where 
\(\Pi =  \text{diag}(\bar{\rho}) \) is invertible and its square root exists since \(\bar{\rho}\) is strictly positive. 
Note that \(\tilde{P}\) is symmetric since  the 
detailed balance condition
in~\eqref{eq_detailed_balance_condition}, results in 
\begin{equation}
\begin{aligned}
        \Pi^{-1/2} P\Pi^{1/2} &= \Pi^{1/2} P^T\Pi^{-1/2} 
\end{aligned}
\end{equation}
i.e., $\tilde{P} = \tilde{P}^T$, 
which also implies that the 
restriction of \(\tilde{P}\) to the subspace orthogonal to \(u_1\), i.e.,  \(\tilde{P}|_{u_1^\perp}\) is symmetric, since all the eigen-components, \(\lambda_iv_iu_i^T\), of \(\tilde{P}\) are symmetric.
Then, following arguments as in Eqs.~\eqref{eq_temp_2_theorem_2}- \eqref{NED_P_objective} in the proof ofTherorem~\ref{thm_optmimalsym}, 
\begin{equation}
\begin{aligned}
     &\arg \min_P \max_{\rho_0}
\frac{\left\|f_w\left(\tilde{P}|_{u_1^\perp}\right)\tilde{\rho}_{0{\perp}} \right\|_2 }{\|\tilde{\rho}_{0\perp}   \|_2} \\
    & \qquad \qquad =  \arg \min_P \lambda_{\text{max}}(\tilde{P}|_{u_1^\perp}).
\end{aligned} 
\end{equation}
The corollary follows since a coordinate transform preserves eigenvalues, and therefore, 
\begin{equation}
\begin{aligned}
      \lambda_{\text{max}}(\tilde{P}|_{u_1^\perp}) = \max_{i \neq 1} \lambda_{i}(\tilde{P}) = \max_{i \neq 1} \lambda_{i}(P).
\end{aligned}
\end{equation}
\end{proof}

\subsection{Convex Bounding for General Markov Chain}
The detailed balance condition in the previous subsection can be restrictive for ergodicity. An obvious example is when transitions are allowed in only one direction between some regions, because the detailed balance condition leads to zero probability even for the available transition. Thus, in such one-way transition cases,  the detailed balance condition results in a larger normalized ergodicity deviation compared to the case without the condition, as illustrated in Fig.~\ref{fig_detailed_balance}.
\begin{figure}[thpb]
      \centering
      \framebox{\parbox{0.4\textwidth}{ \includegraphics[width=0.4\textwidth]{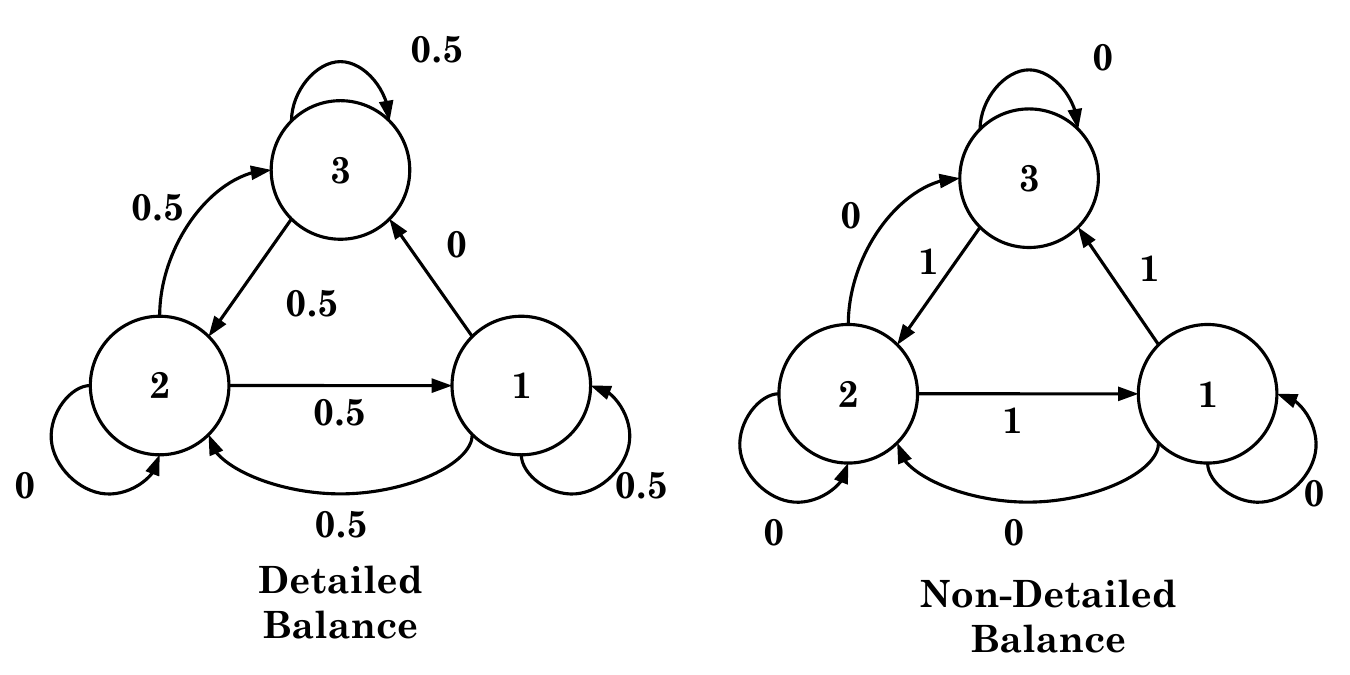}}
}
      \caption{Example three-node-graph with one directed edge with the uniform final distribution \(\bar{\rho}\) and weighting  \(w_k = 1/k!\) illustrating the detailed balance condition can be detrimental to ergodicity.  When detailed balance is enforced, the REMC stochastic matrix has a zero transition probability on the one-way directed edge, resulting in a normalized ergodicity deviation  \eqref{NED_objective} of  
      \(  J_{\mathbb{E}}  =   0.606\) (left),  which is substantially higher than 
      \(  J_{\mathbb{E}}  =   0.223\) for the case 
      when detailed balance is not enforced (right). }

      \label{fig_detailed_balance}
   \end{figure}

   The detailed balance requirement can be relaxed for general directed graphs (e.g., with potential one-way transitions), by optimizing an upper bound on the normalized ergodicity deviation, as shown in the theorem below. 
   
\begin{theorem}
\label{thm_optimalbound}
    The normalized ergodicity deviation in~\eqref{NED_objective} with the same weighting \(Q = \Pi^{-1} \) as in Corollary~\ref{detailed_balance} and a factorial weighting \( w_k = \frac{1}{k!} \) in Eq.~\eqref{eq_factorial_weighting} 
    can be bounded as 
     \begin{equation}
    \begin{aligned}
    & \max_{\rho_0}
    \frac{\|\Pi^{-1/2}\left(\mathbb{E}[\hat{\rho}_w]  - \bar{\rho}\right) \|_2}{\|\Pi^{-1/2}\left(\rho_0-\bar{\rho} \right)   \|_2}    \\
     & \qquad \qquad \le \max_{i\neq 1}\lambda_i\left(e^{-1}\exp\left(\frac{1}{2}\left(\tilde{P}+\tilde{P}^T\right)\right)\right)     \\
    \end{aligned}
         \label{bound_theorem_3}
        \end{equation}
    where \( \tilde{P} = \Pi^{-1/2}P\Pi^{1/2} \), as in~\eqref{definition_P_tilde}.
    Moreover, the bound in~\eqref{bound_theorem_3}  can be minimized by reducing the second largest eigenvalue of the symmetric part of \(\tilde{P}\)
         \begin{equation}
    \arg\min_P \left(\max_{i \neq 1} \lambda_i \left(\frac{1}{2}\left(\tilde{P}+\tilde{P}^T\right)\right) \right)
        \end{equation}
\end{theorem}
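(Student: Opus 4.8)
The plan is to build directly on the reduction already carried out for Corollary~\ref{detailed_balance}. The chain of equalities in~\eqref{corollary2_eq1} rewrites the normalized ergodicity deviation with $Q=\Pi^{-1}$ as $J_{\mathbb{E}} = \max_{\rho_0}\frac{\|f_w(\tilde{P}|_{u_1^\perp})\tilde{\rho}_{0\perp}\|_2}{\|\tilde{\rho}_{0\perp}\|_2}$, with $\tilde{P}=\Pi^{-1/2}P\Pi^{1/2}$ as in~\eqref{definition_P_tilde}; the key point is that this step uses only the similarity transform, \emph{not} the detailed-balance/symmetry assumption, so it survives when $P$ violates detailed balance. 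Invoking the spanning argument used in~\eqref{orthogonal_objective} (the lemma in Appendix~\ref{tightness}), the right-hand side is the induced operator norm $\|f_w(\tilde{P}|_{u_1^\perp})\|_2$. Substituting the factorial weighting, for which $f_w(\lambda)=e^{-1}e^{\lambda}$ by~\eqref{eq_fw_operator_exp}, I would write $J_{\mathbb{E}} = e^{-1}\,\|\exp(\tilde{P}|_{u_1^\perp})\|_2$, so the task reduces to bounding the $2$-norm of a matrix exponential whose exponent is no longer symmetric.

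The crux, and the step I expect to be the main obstacle, is a matrix-exponential norm bound. Here I would bring in the matrix measure (logarithmic norm) associated with the $2$-norm, namely $\lambda_{\max}\!\bigl(\tfrac12(A+A^T)\bigr)$, together with the standard estimate $\|\exp(A)\|_2 \le \exp\!\bigl(\lambda_{\max}(\tfrac12(A+A^T))\bigr)$. This is exactly the point where non-symmetry matters: for a non-symmetric exponent the operator norm is a largest singular value rather than an eigenvalue, and only the \emph{symmetric part} of $A$ controls the exponential's growth. I would justify the estimate by a short Gr\"onwall argument on $\frac{d}{dt}\|e^{tA}x\|_2^2 = \langle e^{tA}x,(A+A^T)e^{tA}x\rangle \le \lambda_{\max}(A+A^T)\|e^{tA}x\|_2^2$, which integrates to $\|e^{A}x\|_2 \le \exp\!\bigl(\lambda_{\max}(\tfrac12(A+A^T))\bigr)\|x\|_2$ at $t=1$. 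Applying this with $A=\tilde{P}|_{u_1^\perp}$ gives $J_{\mathbb{E}} \le e^{-1}\exp\!\bigl(\lambda_{\max}(\tfrac12((\tilde{P}|_{u_1^\perp})+(\tilde{P}|_{u_1^\perp})^T))\bigr)$.

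The remaining work is bookkeeping to match the bound~\eqref{bound_theorem_3} and to establish the minimization claim. Because $P$ is column-stochastic ($\mathbf{1}^TP=\mathbf{1}^T$) with stationary distribution $\bar{\rho}$ ($P\bar{\rho}=\bar{\rho}$), the vector $u_1=\Pi^{1/2}\mathbf{1}=\bar{\rho}^{1/2}$ is simultaneously a left \emph{and} a right eigenvector of $\tilde{P}$ for $\lambda_1=1$; hence $u_1^\perp$ is invariant under $\tilde{P}$, and also under the symmetric part $S=\tfrac12(\tilde{P}+\tilde{P}^T)$ since $Su_1=u_1$. Expressing the restriction in an orthonormal basis $W$ of $u_1^\perp$ gives $\tfrac12((\tilde{P}|_{u_1^\perp})+(\tilde{P}|_{u_1^\perp})^T)=W^TSW$, whose eigenvalues are precisely those of $S$ other than the one at $u_1$, so $\lambda_{\max}(W^TSW)=\max_{i\neq1}\lambda_i(S)$; by monotonicity of $\exp$ this equals $\max_{i\neq1}\lambda_i\!\bigl(e^{-1}\exp(S)\bigr)$, which is the claimed bound. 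Finally, since $e^{-1}\exp(\cdot)$ is strictly increasing, minimizing the bound over $P$ is equivalent to minimizing $\max_{i\neq1}\lambda_i\!\bigl(\tfrac12(\tilde{P}+\tilde{P}^T)\bigr)$; I would close by noting this preserves the convex structure of Corollary~\ref{detailed_balance}, since minimizing the largest eigenvalue of a symmetric matrix (after the Wielandt-type deflation of the $u_1$ eigenvalue) is convex, so the relaxed problem over general directed graphs remains a convex program.
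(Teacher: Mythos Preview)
Your proposal is correct and follows essentially the same route as the paper: reduce $J_{\mathbb{E}}$ to the operator norm $e^{-1}\|\exp(\tilde P|_{u_1^\perp})\|_2$ via the similarity transform in~\eqref{corollary2_eq1}, bound the matrix-exponential norm by $\exp\bigl(\lambda_{\max}(\tfrac12(A+A^T))\bigr)$, and then identify the eigenvalues of the restricted symmetric part with $\max_{i\neq1}\lambda_i\bigl(\tfrac12(\tilde P+\tilde P^T)\bigr)$ using that $\bar\rho^{1/2}$ is a simultaneous left/right eigenvector of $\tilde P$. The only cosmetic difference is that the paper quotes the exponential-norm inequality from~\cite{C.A.Desoer2012FSIP} whereas you supply the Gr\"onwall derivation, and the paper relates $\tfrac12(\tilde P|_{u_1^\perp}+(\tilde P|_{u_1^\perp})^T)$ to $\tfrac12(\tilde P+\tilde P^T)$ by an explicit rank-one decomposition~\eqref{th3_sym_perp} rather than your invariant-subspace/orthonormal-basis argument; both yield the same conclusion.
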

\begin{proof}
    With the factorial weighting \( w_k = \frac{1}{k!} \) in~\eqref{eq_factorial_weighting},  
    the associated time-discounted-average operator \(f_w\) 
    in~\eqref{eq_fw_operator_exp} 
    is the exponential function, 
    \begin{equation}
    \begin{aligned}
          f_w(P) &= \frac{1}{e}\exp(P)
    \end{aligned}
    \end{equation}
    and therefore, \(f_w(P)\) exists for any square matrix \(P\).
    Using the steps in~\eqref{corollary2_eq1}, the left hand side (lhs) of~\eqref{bound_theorem_3} can be rewritten as 
    \begin{equation}
    \label{upperboundobjective}
    \begin{aligned}
    J_{\mathbb{E}} ~ & = 
    \max_{\rho_0}
    \frac{\|\Pi^{-1/2}\left(\mathbb{E}[\hat{\rho}_w]  - \bar{\rho}\right) \|_2}{\|\Pi^{-1/2}\left(\rho_0-\bar{\rho} \right)   \|_2}  \\
    ~ & = \left\|f_w(\tilde{P}|_{u_1^\perp})\right\|_2  = \left\|\exp(\tilde{P}|_{u_1^\perp})\right\|_2 \\
    ~ & 
    \le \lambda_{max}\left(\exp\left(\frac{1}{2}(\tilde{P}|_{u_1^\perp}+(\tilde{P}|_{u_1^\perp})^T)\right)\right)
    ~ {\mbox{from~\cite{C.A.Desoer2012FSIP} }} \\
    ~ & = \overline{J_{\mathbb{E}}} (P) 
    \end{aligned}
    \end{equation}
    \noindent 
    since~\cite{C.A.Desoer2012FSIP} 
    \begin{equation}
    \begin{aligned}
        \left\|\exp(\tilde{P}|_{u_1^\perp})\right\|_2 \leq \lambda_{max}\left(\exp\left(\frac{1}{2}(\tilde{P}|_{u_1^\perp}+(\tilde{P}|_{u_1^\perp})^T)\right)\right).
    \end{aligned}
    \end{equation}
    
    \noindent 
    Since \((\tilde{P}|_{u_1^\perp}+(\tilde{P}|_{u_1^\perp})^T)\) is symmetric,  the eigenvalues are real and the exponential of the eigenvalues is monotonic. Thus, the bound $\overline{J_{\mathbb{E}}} (P)$ can be minimized by selecting \(P\) as 
    
    {\small{
    \begin{equation}
    \begin{aligned}
    \arg \min_P \overline{J}_{\mathbb{E}} (P)
    = 
    \arg \min_P\lambda_{max}\left(\exp\left(\frac{1}{2}(\tilde{P}|_{u_1^\perp}+(\tilde{P}|_{u_1^\perp})^T)\right)\right)\\
    =\arg \min_P\exp\left(\lambda_{max}\left(\frac{1}{2}(\tilde{P}|_{u_1^\perp}+(\tilde{P}|_{u_1^\perp})^T)\right)\right)\\
           = \arg \min_P \lambda_{max}\left(\frac{1}{2}(\tilde{P}|_{u_1^\perp}+(\tilde{P}|_{u_1^\perp})^T)\right)\\
    \end{aligned}
    \end{equation}
    }}

    \noindent 
    The symmetric part of 
    \( \tilde{P}|_{u_1^\perp}\) 
    is related to the symmetric part of \(\tilde{P}\) by 
    \begin{equation}
    \label{th3_sym_perp}
    \begin{aligned}
    & \frac{1}{2}\left(\tilde{P}  +\tilde{P}^T\right)\\
         & =  \frac{1}{2}\left(\Pi^{-1/2}P\Pi^{1/2}+(\Pi^{-1/2}P\Pi^{1/2})^T\right)\\
     & = \frac{1}{2}\left(\Pi^{-1/2}(\bar{\rho}\textbf{1}^T + P|_{u_1^\perp})\Pi^{1/2}+(\Pi^{-1/2}(\bar{\rho}\textbf{1}^T + P|_{u_1^\perp})\Pi^{1/2})^T\right) \\
         & = \frac{1}{2}\left((\bar{\rho}^{1/2}\bar{\rho}^{T/2} + \tilde{P}|_{u_1^\perp})+(\bar{\rho}^{1/2}\bar{\rho}^{T/2} + \tilde{P}|_{u_1^\perp})^T\right) \\
         & = \bar{\rho}^{1/2}\bar{\rho}^{T/2} +\frac{1}{2}\left( \tilde{P}|_{u_1^\perp}+(\tilde{P}|_{u_1^\perp})^T\right) 
    \end{aligned}
    \end{equation}
    Furthermore, the symmetric part of \(\tilde{P}\) has a eigenpair of \(\lambda_1 = 1\) and \(u_1 = v_1 = \bar{\rho}^{1/2}\), 
    \begin{equation}
    \begin{aligned}
    \frac{1}{2}(\tilde{P} & +\tilde{P}^T) \bar{\rho}^{1/2} \\
          =& \frac{1}{2}(\Pi^{-1/2}P\Pi^{1/2}+\Pi^{1/2}P^T\Pi^{-1/2}) \bar{\rho}^{1/2}\\
          =& \frac{1}{2}(\Pi^{-1/2}P\bar{\rho}+\Pi^{1/2}P^T\textbf{1})\\
          =& \frac{1}{2}(\Pi^{-1/2}\bar{\rho}+\Pi^{1/2}\textbf{1}) \\
          =& \frac{1}{2}(\bar{\rho}^{1/2}+\bar{\rho}^{1/2})~ = \bar{\rho}^{1/2}
    \end{aligned}
    \end{equation}
    Then, \(\bar{\rho}^{1/2}\bar{\rho}^{T/2}\) in \eqref{th3_sym_perp} is an eigen-component of the symmetric part of \(\tilde{P}\), thus, the maximum eigenvalue of \(\left(\frac{1}{2}(\tilde{P}|_{u_1^\perp}+(\tilde{P}|_{u_1^\perp})^T)\right)\) is the largest eigenvalue of the symmetric part of \(\tilde{P}\) other than \(\lambda_1 = 1\)
    \begin{equation}
    \begin{aligned}
     \lambda_{\max}(\frac{1}{2}\left( \tilde{P}|_{u_1^\perp}+(\tilde{P}|_{u_1^\perp})^T\right) ) = \max_{i \neq 1} \lambda_i \left(\frac{1}{2}\left(\tilde{P}+\tilde{P}^T\right)\right).
    \end{aligned}
    \end{equation}
\end{proof}

The following corollary states the convex optimization problem for Theorem \ref{thm_optimalbound}, which is referred in the paper as \textit{REMC}.

\begin{corollary}[Rapidly Ergodic Markov Chain (REMC)]

The optimal \(P\) for the upper-bound in Theorem~\ref{thm_optimalbound} can be obtained by solving the convex optimization problem
    \begin{equation}
\label{eq_remc}
\begin{aligned}
    \arg\min_P & \quad \lambda_{\text{max}}\left(\frac{1}{2}\left(\tilde{P}+\tilde{P}^T\right)-2\bar{\rho}^{1/2}\bar{\rho}^{T/2}\right) 
\\
     \text{s.t.} &\quad  \textbf{1}^TP = \textbf{1}^T \quad (\text{Stochastic }P)\\
        & \quad P\bar{\rho} = \bar{\rho} \quad (\text{Target Distribution})\\
        & \quad P_{i,j} \geq 0 \quad (\text{Stochastic }P)\\
        & \quad P_{i,j} = 0 \quad \text{if} \quad (j,i) \notin \mathcal{E} \quad {(\mbox{Transitions in}}~\mathcal{G})
\end{aligned}
\end{equation}
with \(\bar{\rho}^{1/2}\) being the entry-wise square root of \(\bar{\rho}\).
\end{corollary}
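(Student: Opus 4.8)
The plan is to re-use the Wielandt-deflation construction that established the corollary following Theorem~\ref{thm_optmimalsym}, but applied now to the symmetric part $S \triangleq \tfrac{1}{2}(\tilde{P}+\tilde{P}^T)$ rather than to a symmetric stochastic matrix. From the proof of Theorem~\ref{thm_optimalbound}, the optimal $P$ for the upper bound is the one that minimizes $\max_{i\neq 1}\lambda_i(S)$, the largest eigenvalue of the symmetric matrix $S$ other than the one associated with $\bar{\rho}^{1/2}$. Hence it suffices to show that the program in~\eqref{eq_remc} realizes exactly this minimization and that it is convex.

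First I would collect the spectral facts already proved in Theorem~\ref{thm_optimalbound}: $S$ is symmetric with $\bar{\rho}^{1/2}$ an eigenvector of eigenvalue $\lambda_1 = 1$, and by~\eqref{th3_sym_perp} it admits the decomposition $S = \bar{\rho}^{1/2}\bar{\rho}^{T/2} + \tfrac{1}{2}(\tilde{P}|_{u_1^\perp}+(\tilde{P}|_{u_1^\perp})^T)$. Because $\bar{\rho}$ is a probability vector, $\|\bar{\rho}^{1/2}\|_2^2 = \sum_i \bar{\rho}_i = 1$, so $\bar{\rho}^{1/2}\bar{\rho}^{T/2}$ is the orthogonal projector onto $\mathrm{span}(\bar{\rho}^{1/2})$. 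Subtracting $2\bar{\rho}^{1/2}\bar{\rho}^{T/2}$ from $S$ therefore leaves every eigenpair in $u_1^\perp$ untouched and sends the eigenvalue $\lambda_1 = 1$ to $-1$, so the spectrum of the objective matrix $S - 2\bar{\rho}^{1/2}\bar{\rho}^{T/2}$ in~\eqref{eq_remc} is $\{-1,\lambda_2,\dots,\lambda_n\}$, exactly as in the deflation used after Theorem~\ref{thm_optmimalsym}.

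The step I expect to require the most care is confirming that this deflation genuinely exposes the target quantity, i.e., that $\lambda_{\max}(S - 2\bar{\rho}^{1/2}\bar{\rho}^{T/2}) = \max_{i\neq 1}\lambda_i(S)$ and is not clamped at the deflated value $-1$. Unlike the earlier corollary, $S$ is not a stochastic matrix, so I cannot invoke the standard bound $\lambda_i \ge -1$ for symmetric stochastic matrices directly. Instead I would argue from the trace: the similarity $\tilde{P} = \Pi^{-1/2}P\Pi^{1/2}$ in~\eqref{definition_P_tilde} preserves the trace, and $\mathrm{tr}(S) = \mathrm{tr}(\tilde{P}) = \mathrm{tr}(P) = \sum_i P_{ii} \ge 0$ since $P$ is entrywise nonnegative. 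Consequently $\sum_{i\neq 1}\lambda_i(S) = \mathrm{tr}(S) - 1 \ge -1$, and a sum of $n-1$ numbers that is at least $-1$ cannot have all its terms strictly below $-1$; hence $\max_{i\neq 1}\lambda_i(S) \ge -1$ and the deflated value is never dominant. This identifies the objective of~\eqref{eq_remc} with $\max_{i\neq 1}\lambda_i(S)$, the quantity minimized in Theorem~\ref{thm_optimalbound}.

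Finally I would verify convexity and feasibility to conclude it is a convex program. The map $P\mapsto\tilde{P}=\Pi^{-1/2}P\Pi^{1/2}$ is linear, so $P\mapsto S - 2\bar{\rho}^{1/2}\bar{\rho}^{T/2}$ is an affine, symmetric matrix-valued function of $P$; the largest eigenvalue of such a function is convex in $P$, as already used in the corollary after Theorem~\ref{thm_optmimalsym}. The four constraints---the column-stochastic equality $\mathbf{1}^T P = \mathbf{1}^T$, the stationarity equality $P\bar{\rho}=\bar{\rho}$, the nonnegativity $P_{i,j}\ge 0$, and the sparsity $P_{i,j}=0$ for $(j,i)\notin\mathcal{E}$---are all linear in the entries of $P$ and cut out a polytope. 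A convex objective over a polyhedral feasible set yields the stated convex optimization, which completes the proof.
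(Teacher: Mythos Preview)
Your proof is correct and follows the same Wielandt-deflation strategy as the paper, but diverges in the one step you flagged as requiring care: showing that the deflated eigenvalue $-1$ does not dominate, i.e., that $\max_{i\neq 1}\lambda_i(S)\ge -1$. The paper handles this by invoking the Perron--Frobenius theorem on the symmetric, nonnegative matrix $S=\tfrac{1}{2}(\tilde P+\tilde P^T)$: since $\bar\rho^{1/2}$ is a nonnegative eigenvector with eigenvalue $1$, it must be the Perron vector, so $1$ is the spectral radius and every eigenvalue lies in $[-1,1]$. Your trace argument is more elementary---it uses only that $\mathrm{tr}(S)=\mathrm{tr}(P)\ge 0$---and yields precisely the inequality needed for the deflation, though not the stronger conclusion that \emph{all} $\lambda_i\ge -1$. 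A side benefit of your route is that it avoids the irreducibility hypothesis of Perron--Frobenius, which the paper tacitly assumes for $S$ without verifying. Your explicit treatment of convexity (affine dependence of $S$ on $P$, linear constraints) is also more detailed than the paper's, which simply refers back to the earlier corollary.
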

\begin{proof}
    By Perron-Forbenius theorem, for any irreducible non-negative matrix, there can only be one non-negative eigenvector, and such eigenvector is associated with the largest eigenvalue \(\lambda_1\) \cite{bremaud2013markov}. Since \(\bar{\rho}^{1/2}\) is a non-negative eigenvector of \(\frac{1}{2}\left(\tilde{P}+\tilde{P}^T\right)\) with associated eigenvalue of \(1\), then \(1\) is the largest eigenvalue of \(\frac{1}{2}\left(\tilde{P}+\tilde{P}^T\right)\). Therefore similar to previous theorems, the second largest eigenvalue of  \(\left(\frac{1}{2}(\tilde{P}|_{\tilde{v}_1^\perp}+(\tilde{P}|_{\tilde{v}_1^\perp})^T)\right)\) can be access by the deflation 
\begin{equation}
\begin{aligned}
     &\lambda_{max}\left(\frac{1}{2}(\tilde{P}|_{\tilde{v}_1^\perp}+(\tilde{P}|_{\tilde{v}_1^\perp})^T)\right)\\
     & \quad = \max_{i\neq 1}\lambda_i \left(\frac{1}{2}(\tilde{P}+(\tilde{P})^T)\right)\\
     & \quad =     \lambda_{max}\left(\frac{1}{2}(\tilde{P}+(\tilde{P})^T) -  2 \bar{\rho}^{1/2}\bar{\rho}^{T/2}\right).
\end{aligned}
\end{equation}
\end{proof}
\subsection{Connection to Fastest Mixing Markov Chain } 
An alternative approach to speed-up ergodicity is to reach the desired stationary distribution \(\bar{\rho}\) faster at every time step, consider some 
finite time horizon, \(K\), i.e., 
\begin{equation}
\begin{aligned}
&\arg\min_P \max_{\rho_0} \sum_{k=0}^{K-1} \frac{\|\Pi^{-1/2}(P^k \rho_0 - \bar{\rho}) \|_2}{\|\Pi^{-1/2}(\rho_0 - \bar{\rho}) \|_2} .
\end{aligned}
\end{equation}
 Similar to \eqref{corollary2_eq1}, this can be reduced to
\begin{equation}
\begin{aligned}
&\arg\min_P \max_{\rho_0} \sum_{k=0}^{K-1} \frac{\|\Pi^{-1/2}(P^k \rho_0 - \bar{\rho}) \|_2}{\|\Pi^{-1/2}(\rho_0 - \bar{\rho}) \|_2} \\
 & \qquad = \arg\min_P \max_{\rho_0} \sum_{k=0}^{K-1} \frac{\|\Pi^{-1/2}(P^k \rho_0 - \bar{\rho}) \|_2}{\|\Pi^{-1/2}(\rho_0 - \bar{\rho}) \|_2} \\
 & \qquad = \arg\min_P  \sum_{k=0}^{K-1}\|(\tilde{P}|_{u_1^\perp})^k \|_2.
\end{aligned}
\end{equation}
If the detailed balance is enforced, 
then the 2-norm is equal to the largest absolute value of the eigenvalue, since 
\begin{equation}
\begin{aligned}
 \arg\min_P  \sum_{k=0}^K \|(\tilde{P}|_{u_1^\perp})^k \|_2
  & =  \arg\min_P  \sum_{k=0}^K |\lambda_2(P)^k| \\
  & \qquad =  \arg\min_P  |\lambda_2(P)| .
\end{aligned}
\end{equation}
This is equivalent to the fastest mixing Markov chain (FMMC) formulation, 
where the stochastic matrix  $P$ is selected such that the SLEM, \(|\lambda_2|\), is minimized. 
The FMMC formulation upper-bounds the REMC by ensuring fast convergence on the discounted time average since,  
\begin{equation}
\begin{aligned}
 \left \| \frac{1}{K}\sum_{k=0}^{K-1}P^k \rho_0 - \bar{\rho}\right \| & = \frac{1}{K}\left \| \sum_{k=0}^{K-1}P^k \rho_0 - \bar{\rho}\right \|  \\
 & \leq  \frac{1}{K} \sum_{k=0}^{K-1}\left \|P^k \rho_0 - \bar{\rho}\right \|.
\end{aligned}
\end{equation}

The difference in ergodicity achieved with the FMMC formulation that minimizes the SLEM and the proposed REMC approach that minimizes the second largest eigenvalue is illustrated in the following 2-node example. An additional example with a 3-node graph is illustrated in Appendix \ref{section:3_node_example}.

\begin{exmp}
\label{example:fmmc_v_remc}
Comparison of FMMC and REMC on a two node graph: 
FMMC aims to reach the target distribution as quickly as possible by minimizing the SLEM, while REMC aims to reach ergodicity as quickly as possible by minimizing the second largest eigenvalue. To illustrate the difference, both methods are applied to a  two-node graph example (where all transitions are possible)  for a uniform target distribution, and the resulting optimal Markov chains are shown in 
Fig.\ref{fmmc_v_ergodic}. 
The left-hand-side graph in Fig.~\ref{fmmc_v_ergodic} shows the result for FMMC, with the  stochastic matrix 
\begin{equation}
P_{\text{FMMC}} = \begin{bmatrix}
0.5 & 0.5 \\
0.5 & 0.5 \\
\end{bmatrix}
\end{equation}
which has a second eigenvalue of \(\lambda_2 = 0\) that is the smallest possible SLEM. 
For the same two-node graph, the right-hand-side graph in Fig.~\ref{fmmc_v_ergodic}  shows the result for REMC, with the stochastic matrix 
\begin{equation}
P_{\text{REMC}} = \begin{bmatrix}
0 & 1 \\
1 & 0 \\
\end{bmatrix}
\end{equation}
which has a second eigenvalue of \(\lambda_2 = -1\) that is the smallest possible second eigenvalue. 

    \begin{figure}[thpb]
      \centering
      \framebox{\parbox{0.4\textwidth}{ \includegraphics[width=0.4\textwidth]{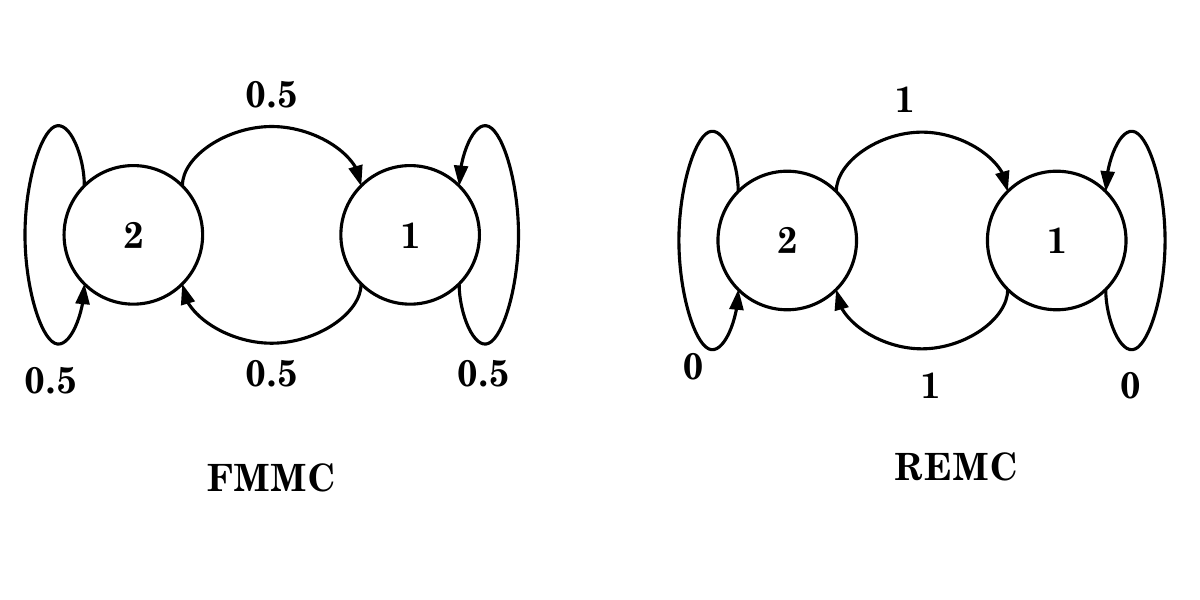}}
}
      \caption{A two-node example illustrating the difference between fastest mixing (left) and optimally ergodic (right) for uniform target distribution.}
      \label{fmmc_v_ergodic}
   \end{figure}

With an initial distribution \(\rho_0 = (1,0) \), the FMMC reaches the target distribution in one step, with the distribution \(\rho_k = P_{\text{FMMC}}^k\rho_0\) at each time step \( k\) given by 
\begin{equation}
\begin{aligned}
      \{\rho_{\text{FMMC, k}}\} = \left\{\begin{bmatrix}
    1 \\ 0
\end{bmatrix}, \begin{bmatrix}
    0.5 \\ 0.5
\end{bmatrix}, \begin{bmatrix}
    0.5 \\ 0.5
\end{bmatrix},\begin{bmatrix}
    0.5 \\ 0.5
\end{bmatrix} \cdots \right\}.
\end{aligned}
\end{equation}
In contrast, the REMC is cyclic and never reaches the target distribution, and the distribution
\(\rho_k = P_{\text{REMC}}^k\rho_0\)
at each time step \( k\) is given by 
\begin{equation}
\begin{aligned}
  \{\rho_{\text{REMC,k}}\} = \left\{\begin{bmatrix}
    1 \\ 0
\end{bmatrix}, \begin{bmatrix}
    0 \\ 1
\end{bmatrix}, \begin{bmatrix}
    1 \\ 0
\end{bmatrix},\begin{bmatrix}
    0 \\ 1
\end{bmatrix} \cdots \right\}.
\end{aligned}
\end{equation}
However, although FMMC reaches the target uniform distribution fast (in one time step), the time average at each time step \(\hat{\rho}_K = \frac{1}{K}\sum_{k=0}^{K-1}\rho_k\) 
\begin{equation}
\begin{aligned}
  \{\hat{\rho}_{\text{FMMC},K}\} = \left\{\begin{bmatrix}
    1 \\ 0
\end{bmatrix}, \begin{bmatrix}
    0.75 \\ 0.25
\end{bmatrix}, \begin{bmatrix}
    0.66 \\ 0.33
\end{bmatrix},\begin{bmatrix}
    0.625 \\ 0.375
\end{bmatrix} \cdots \right\}
\end{aligned}
\end{equation}
does not converge as quickly due to the non-zero initial condition that decays slowly. 
On the other hand, the time average of REMC 
\begin{equation}
\begin{aligned}
  \{\hat{\rho}_{\text{REMC}, K}\} = \left\{\begin{bmatrix}
    1 \\ 0
\end{bmatrix}, \begin{bmatrix}
    0.5 \\ 0.5
\end{bmatrix}, \begin{bmatrix}
    0.66 \\ 0.33
\end{bmatrix},\begin{bmatrix}
    0.5 \\ 0.5
\end{bmatrix} \cdots \right\}
\end{aligned}
\end{equation}
is exactly the target distribution at every second step. 
Thus, when considering the time average, instead of having \(\rho_k\) reaching the target distribution quickly as in FMMC, 
the effects of the initial condition can be reduced more rapidly with REMC.
\end{exmp}

Thus, an ergodic graph traversal for the region planner in Fig. \ref{fig:h_framework} can be achieved by sampling from a Markov chain, and a fast convergence ergodic Markov chain can be obtained for any target distribution by using \eqref{eq_remc}. 

\section{Estimation Framework}
\label{section_anomaly_detection}

We now discuss the estimation module in Fig.~\ref{fig:h_framework} that comprises SLAM and anomaly detection. In addition to the occupancy grid map and pose estimates for navigation, the estimator also provides the measure \(\mu\) (defined in \eqref{eq_graph_ergodicity}) to the REMC method. Specifically, this measure is selected as information entropy using a Bayesian anomaly (FOD) detection method. 
The core idea of detecting FODs is to evaluate the hypothesis that the difference between the local (spatial) point cloud given by the robot's depth camera and the reference model of the corresponding section of the confined space is significant. Consequently, the FOD detection method uses the marginal posterior of our SLAM formulation described in the following Section. 

In short, the SLAM module (Section~\ref{section:slam}) provides optimal estimates of the robot's poses and map features along with their uncertainties. These estimates are then used to transform the local spatial point cloud, sampled by the RGB-D camera, into the global coordinate system, which subsequently help with anomaly detection (Section.\ref{section:detection}). Finally, the information metric \(\mu_p\) is generated from the confidence level of the anomaly detection. A flow chart outlining the steps is shown in Fig. \ref{fig:est_module}. 

\begin{figure}[thpb]
      \centering
      \framebox{\parbox{0.4\textwidth}{ 
      \includegraphics[width=0.4\textwidth]
      {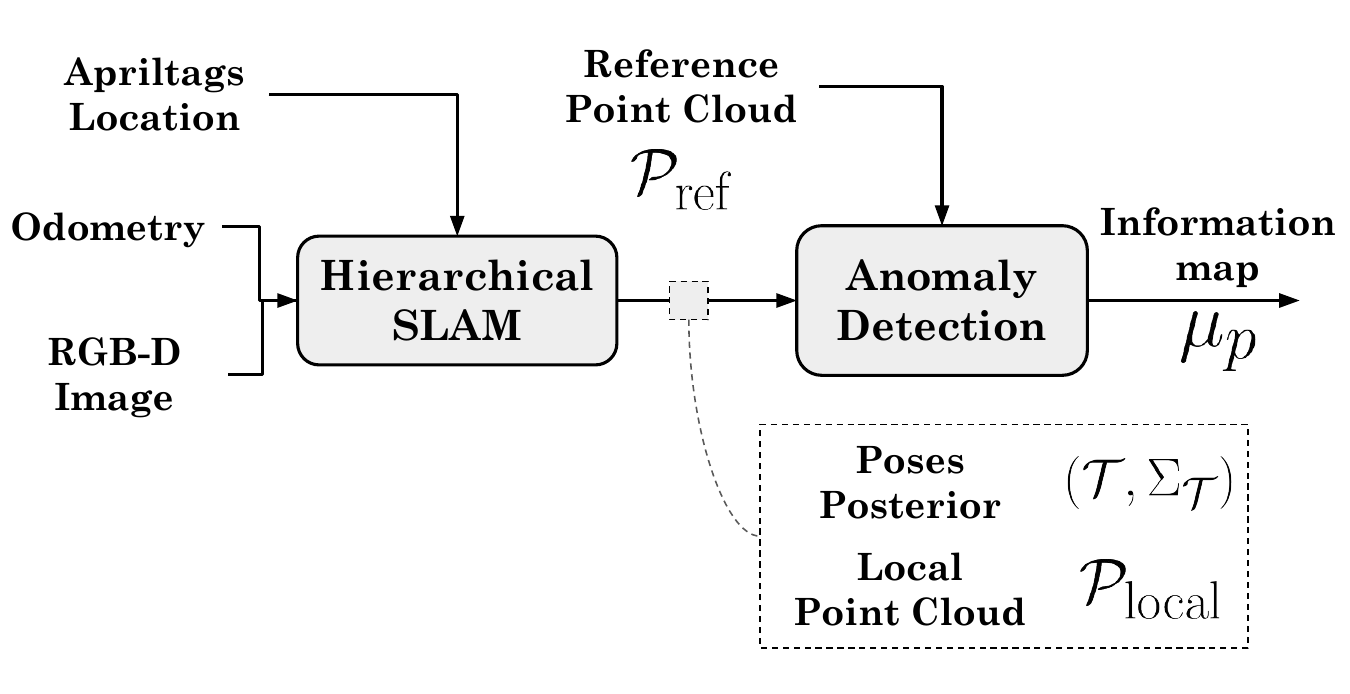}
      }}
      \caption{Flow chart of the estimation module. Hierarchical SLAM processes the sensor information to generate a Gaussian posterior of the pose, \((\mathcal{T}, \Sigma_\mathcal{T})\), detailed implementation of which is discussed in Section \ref{section:slam}. The anomaly detector compares the local point cloud from the RGB-D camera \(\mathcal{P}_\text{local}\) with the reference point cloud \(\mathcal{P}_\text{ref}\) using the posterior to generate the information map \(\mu_p\).
      }
      \label{fig:est_module}
   \end{figure}
   
\subsection{Hierarchical SLAM}
\label{section:slam}
The SLAM module serves two purposes in our inspection framework. First, it provides a probabilistic mapping of the environment in the form of a 3D point cloud of the structures that is used for FOD detection via Bayesian inference. Second, it estimates the robot pose (location and orientation with respect to the map origin), enabling navigation within the space.To accommodate non-linear robot motion and sensor models, we use the Graph-SLAM algorithm. In this method, a factor graph is constructed during run-time based on the collected observations and odometry, with nodes representing the robot’s poses and map features. 

Specifically, the poses are calculated on the \(\text{SE}(3)\) manifold, i.e., a set of 3D rigid transformation matrices, according to the formulation in \cite{micro_lie}, and are optimized by the Gauss–Newton algorithm described in \cite{GrisettiG2010AToG}. We modify it as a hierarchical SLAM, similar to \cite{BosseM.2003AAff} and \cite{GrisettiGiorgio2010Hoom}. Instead of maintaining the full resolution factor graph and performing hierarchical optimization, marginalization is done as an intermediate EKF-SLAM, and only the sparse factor graph is maintained with the posterior from the EKF-SLAM as the factor between the nodes. 
In particular, EKF-SLAM is reset and a new pose node is created based on the distance traveled from the most recent pose node. Special nodes, called \textit{inspection nodes}, are created once the robot reaches the waypoint specified by the waypoint placement method. The corresponding depth point clouds, i.e., the point clouds generated by the depth images taken by the robot at the waypoints, are also stored at these inspection nodes.
Furthermore, nodes that fall outside the current \textit{estimation horizon} are pruned and replaced by a factor node through marginalization. Once the Graph-SLAM is optimized, the estimated poses of all the unpruned nodes, \(\mathcal{T} \triangleq \{T_1, T_2, \cdots T_i \cdots\}\), and the corresponding covariance matrices, \(\Sigma_\mathcal{T} \triangleq \text{Var}(\mathcal{T})\), are used for recursive anomaly detection described in the next Section.

\subsection{Bayesian Anomaly Detection}
\label{section:detection}

As indicated earlier, we perform Bayesian inference on the likelihoods of multiple competing hypotheses for anomaly detection. In particular, we use the following null and alternate hypotheses, denoted by $H_0$ and $H_1$, respectively.
\begin{equation}
\begin{aligned}
        H_0 \text{: }& \text{The local point cloud is sampled from the reference}
        \\& \text{model of the confined space.}\\
        H_1 \text{: }& \text{The local point cloud is sampled from a space outside a}
        \\& \text{buffer zone around the reference model.}
\end{aligned}
\end{equation}
The buffer zone indicates an area around a confined space structure that is neither considered a FOD nor a part of the structure. This buffer zone allows for deviation in the modeling of the structure or objects that are too small to be considered FODs. This buffer zone is specified by a distance threshold, \(d_\text{buffer}\), offset in the direction of the normal vector of the walls of the reference model, as illustrated 
in Fig. \ref{fig:anomaly_detect}.  
 
The probabilistic observation model requires the posterior estimation of the pose of the inspection nodes from the hierarchical SLAM, which is in the form of a multivariate Gaussian, \(\mathcal{N}(x, \Sigma_x\)), with \(x\) being the mean vector and \(\Sigma_x \) being the covariance matrix. In particular, \textcolor{blue}{\(x\) } is a concatenation of all the estimated poses and the blocks of \(\Sigma_x\) are the covariance matrices associated with the poses,  
\begin{equation}
\label{eq_graph_posterior}
x= \begin{bmatrix}
\vdots  \\
x_i \\
x_j \\
\vdots \end{bmatrix}\Sigma_x = \begin{bmatrix}
\ddots  &   &  &  \\
  & \Sigma_{i,i} & \Sigma_{i,j} &  \\
 & \Sigma_{i,j}^T & \Sigma_{j,j} &  \\
 &  &  & \ddots  \\
\end{bmatrix}.
\end{equation}

The marginal posterior of a particular inspection node can be obtained from the graph posterior by marginalization. For a Gaussian distribution, it is done by selecting the corresponding entries in \(x\) and blocks from the diagonal of \(\Sigma_x\) \cite{DellaertFrank2017FGfR}. For instance, the marginal posterior of node \(i\) in \eqref{eq_graph_posterior} is 
\begin{equation}
\label{eq:pose_node_posterior}
\mathbb{P}(x_i) = \mathcal{N}(x_i, \Sigma_{i,i}).
\end{equation}
In the \(\text{SE}(3)\) formulation, \(x_i\) is represented by a \(4 \times 4\) rigid transformation matrix \(T_i\). The corresponding covariance is a \(6 \times 6\) matrix with respect to the 6 degrees of freedom of \(T_i\) on the manifold. The local point cloud, \(\mathcal{P}_{\text{local}, i}\), with a set of points, \(p_{\text{local}} \in \mathbb{R}^3\), is stored in the local coordinate of its associated pose node, with some independent observation noise \(\Sigma_{j, \text{,local}}\) for each point \(p_{j,\text{local}}\). An example is illustrated in Fig. \ref{fig:anomaly_detect}(a). 

Once the pose graph is optimized, the local point cloud, \(\mathcal{P}_{\text{local},i}\), is transformed into the global coordinate with \(T_i\)
\begin{equation}
p_{j} = T_i p_{j,\text{local}},\quad \forall p_{j, \text{local}} \in \mathcal{P}_{\text{local},i}
\end{equation}
with the noise propagated as 
\begin{equation}
\Sigma_{j} = J_\mathcal{X}^{\mathcal{X}p}\Sigma_{i,i}\left (J_\mathcal{X}^{\mathcal{X}p}\right)^T +  J_p^{\mathcal{X}p}\Sigma_{j, \text{local}}\left (J_p^{\mathcal{X}p}\right)^T.
\end{equation}
The exact form of the Jacobians (\(J_\mathcal{X}^{\mathcal{X}p}, J_p^{\mathcal{X}p}\)) is taken from \cite{micro_lie}.

After transformation, the point cloud is compared with the reference model \((p_{\text{ref}}, n_{\text{ref}})\), where \(p_{\text{ref}}\) is the representation of the point cloud and \(n_{\text{ref}}\) are the normal vectors of the points that are inherited from the plane the points are sampled from. As illustrated in Fig. \ref{fig:anomaly_detect}(b), for each point, \(p_{j}\), in the observed point cloud, we find the corresponding reference point as 
\begin{equation}
\begin{aligned}
    p_{\text{ref},j}^* = \arg \min_{p_\text{ref}} & (p_j-p_{\text{ref}})^T\Sigma_{j}^{-1}(p_j-p_{\text{ref}}).
\end{aligned}
\end{equation}

 \begin{figure}[thpb!]
    \centering
    \begin{subfigure}[t]{0.23\textwidth}
        \centering
        \framebox{\parbox{0.95\textwidth}{ \includegraphics[width=0.95\textwidth]{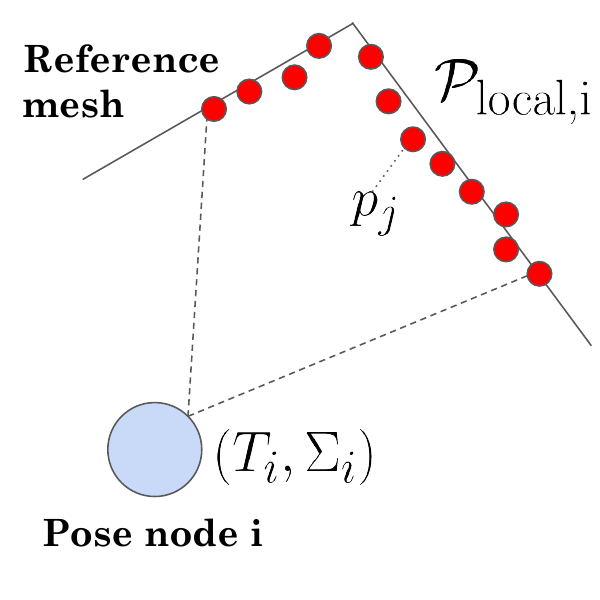}}}
        \caption{}
        \label{fig:depth}
    \end{subfigure}%
    ~ 
    \begin{subfigure}[t]{0.23\textwidth}
        \centering
        \framebox{\parbox{0.95\textwidth}{
        \includegraphics[width=0.95\textwidth]{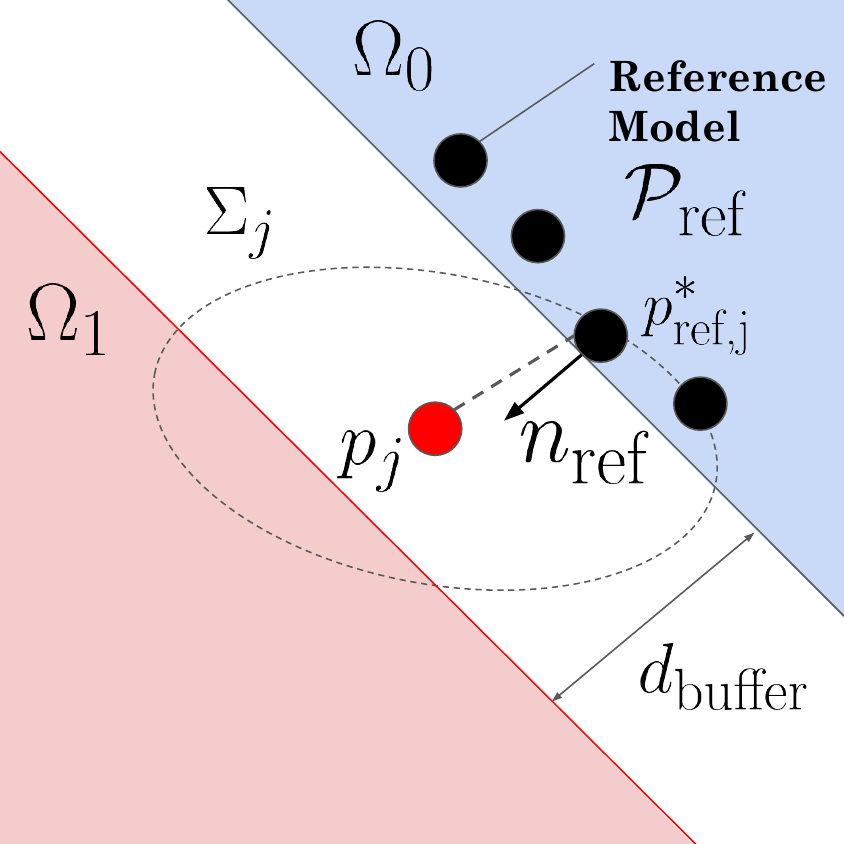}}}
        \caption{}
    \end{subfigure}
    \caption{SLAM-based anomaly detection. (a) shows an illustration of a local point cloud \(\mathcal{P}_\text{local}\) of an inspection pose node. \(T_i\) is the \(4\times4\) rigid transformation of the pose node \(i\), \(\Sigma_i\) is the associated covariance matrix, and the small circles, \(p_j\), are the point cloud samples collected by the depth camera. (b) illustrates the associated FOD detection on a single point \(p_j \in \mathcal{P}_\text{local}\) with the uncertainty \(\Sigma_j\) propagated from (a). The blue area, \(\Omega_0\), is the half space wherein the point would be considered to be inside the reference mesh; and the red area, \(\Omega_1\) is the half space wherein the point would be considered a foreign object. The black points are from the reference model, \(p_{\text{ref}}\) is statistically closest point to \(p_j\) with normal vector \(n_{\text{ref}}\), and \(d_\text{buffer}\) is the buffer zone threshold. The likelihood of the point \(p_j\) being sampled from the blue and red areas are obtained by integrating the Gaussian \(\mathcal{N}(p_j, \Sigma_j)\) over the corresponding halfspaces.  }
    \label{fig:anomaly_detect}
\end{figure}
Consequently, the likelihood of \(p_j\) being sampled from the plane formed by \((p_{\text{ref},j}^*, n^*_{\text{ref},j})\) or behind with respect to the normal vector, \(n_\text{ref}\), is calculated by the half-space integral 
\begin{equation}
\label{halfspaceint}
\begin{aligned}
\mathbb{P}(p_j | H_0) &= \int_{\Omega_0} \mathcal{N}(p_j, \Sigma_{j})  dx \\
 \Omega_0 & \triangleq \{x|n_{\text{ref}}^Tx -n_\text{ref}^Tp_{\text{ref}}\leq 0 \}.
\end{aligned}
\end{equation}
This is calculated by the normal cumulative distribution function (cdf) as
\begin{equation}
\label{halfspaceintsol}
\begin{aligned}
\mathbb{P}(p_j | H_0) & = 1-\text{normcdf}(c_{0,j})\\
\end{aligned}
\end{equation}
where 
\(c_{0,j}\) is the signed Mahalanobis distance 
\begin{equation}
\begin{aligned}
c_{0,j} & \triangleq \frac{(n_{\text{ref},j}^*)^T(p_{\text{ref},j}^*-p_j)}{\sqrt{(n_{\text{ref},j}^*)^T\Sigma_{j}n_{\text{ref},j}^*}}.
\end{aligned}
\end{equation}

\(\mathbb{P}(p_j | H_1)\) is calculated similarly by offsetting \(p^*_{\text{ref}}\) by \(d_\text{buffer}\) and integrating from the opposite halfspace by \(\mathbb{P}(p_j | H_1) = \text{normcdf}(c_{1,j})\), with 
\begin{equation}
    \label{halfspaceintsol}
    \begin{aligned}
    c_{1,j} & \triangleq c_{0,j} - \frac{d_\text{buffer}}{\sqrt{(n_{\text{ref},j}^*)^T\Sigma_{j}n_{\text{ref},j}^*}}.
    \end{aligned}
\end{equation}

Since the signed Mahalanobis distance follows the standard normal distribution, assuming independence, the average of \(c_j\) follows a normal distribution with
\begin{equation}
\label{sum_c}
\begin{aligned}
\bar{c} = \frac{1}{n} \sum_{j=1}^n c_j, \quad \mathbb{E}(\bar{c}) =0 ,\quad \sigma(\bar{c}) = \frac{1}{\sqrt{n}}
\end{aligned}
\end{equation}
where \(\sigma(\cdot)\) denotes the standard deviation. This forms a new z-score
\begin{equation}
\begin{aligned}
z = \frac{\bar{c}- \mathbb{E}(\bar{c})}{\sigma(\bar{c})} = \frac{1}{\sqrt{n}} \sum_{j=1}^n c_j.
\end{aligned}
\end{equation}
This means that instead of performing the integral one point at a time, the points can be batch processed for each reference point. For each point \(p_{\text{ref},i}\) in the reference model \(\mathcal{P}_{\text{ref}}\), find the set of observed points, \(\mathcal{P}_{\text{sample}, i}\), which corresponds to itself. Additionally, for smoothing purposes, consider the point set of the neighboring reference points given by  
\begin{equation}
\begin{aligned}
\mathcal{P}_{\text{sample}, i} = \{p_j | p^*_{\text{ref}, j} \in \mathcal{P}_{\text{NN}, i}\}
\end{aligned}
\end{equation}
where \(\mathcal{P}_{\text{NN}, i}\) is the set of $k$-nearest neighbor points in \(\mathcal{P}_{\text{ref}}\) of the $i$-th reference point \(p_{\text{ref,i}}\). 

Putting everything together, the likelihood for all the points in the reference model is calculated as
\begin{equation}
\begin{aligned}
\mathbb{P}_i(\mathcal{P}_{\text{sample}, i}|H_0) = 1-\text{normcdf}(c_{0,j})\\
z_{0,i} = \frac{1}{\sqrt{n}}\sum_{\{j | p_j \in \mathcal{P}_{\text{sample}, i}\}} c_{0,j}
\end{aligned}
\label{eq:null_likelihood}
\end{equation}

\begin{equation}
\begin{aligned}
\mathbb{P}_i(\mathcal{P}_{\text{sample}, i}|H_1) = \text{normcdf}(c_{1,j})\\
z_{1,i} = \frac{1}{\sqrt{n}}\sum_{\{j | p_j \in \mathcal{P}_{\text{sample}, i}\}} c_{1,j}.
\end{aligned}
\label{eq:alt_likelihood}
\end{equation}

Finally, the posterior of the hypothesis for each reference point is updated using Bayes law as
\begin{equation}
\label{eq_bayesian_detect}
\begin{aligned}
\mathbb{P}_i(H_0|\mathcal{P}_{\text{sample}, i}) = \zeta\mathbb{P}(\mathcal{P}_{\text{sample}, i}|H_0) \mathbb{P}_i(H_0) \\
\mathbb{P}_i(H_1|\mathcal{P}_{\text{sample}, i}) = \zeta\mathbb{P}(\mathcal{P}_{\text{sample}, i}|H_1) \mathbb{P}_i(H_1)\\
\zeta \triangleq \frac{1}{\mathbb{P}_i(H_0|\mathcal{P}_{\text{sample}, i}) + \mathbb{P}_i(H_1|\mathcal{P}_{\text{sample}, i})}.
\end{aligned}
\end{equation}

\subsection{Information Measure}
The region measure, \(\mu_r\), for generating the space average, \(\bar{\rho}\), in \eqref{space_average} is defined using the posterior of the hypothesis. 
In particular, we define information measure using the entropy of the posterior for each reference point as
\begin{equation}
\begin{aligned}
\mu_p(p_{\text{ref},i}) = -\mathbb{P}_i(H_0)\ln(\mathbb{P}_i(H_0)) - \mathbb{P}_i(H_1)\ln(\mathbb{P}_i(H_1)).
\end{aligned}
\end{equation}
The region measures are calculated by adding up the entropies of the reference points belonging to the region,
\begin{equation}
\begin{aligned}
 \mu_r(i) = \sum_{p_{\text{ref},j}\in r_i} \mu_p(p_{\text{ref},j}) .
\end{aligned}
\end{equation}
Conceptually, the reference point entropy measures the expected surprise of sampling from the reference point \(p_{ref,i}\), and the region measure is the maximum amount of surprise the robot can get by sampling from the region. Combining this with the REMC, the robot travels to the region with a high probability of being frequently surprised.   

\section{Hierarchical Ergodic Inspection Algorithm}
\label{section_algorithm}
This section explains how graph ergodic planning and information measure are used within the hierarchical planning framework (as shown in Fig. \ref{fig:h_framework}), which we call \underline{h}ierarchical \underline{e}rgodic \underline{Ma}rkov \underline{p}lanner (HEMaP), to inspect a confined structure such as a ballast tank. The complete inspection planning algorithm is described in Algorithm \ref{alg:remc}. The algorithm requires the initial region of the robot \(r[0]\); the region graph \(\mathcal{G}\); the replanning horizon \(K\); the reference point cloud \(\mathcal{P}_\text{ref}\); and the initial anomaly entropy \(\mu_p\). 
The robot initializes the graph-level time step \(k\) to 0. Then, for every \(K\) steps, the robot calculates the region-based measure \(\mu_r\) by summing the anomaly entropy of the structures in each region. Using the new region-based measure \(\mu_r\), the robot calculates the optimal stochastic transition matrix \(P\) according to the REMC optimization problem in \eqref{eq_remc}. Additionally, to prevent an ill-conditioned target distribution with 0-entries, a smoothing factor \(\delta\) is added before optimization. The robot samples from this \(P\) for the next \(K\) steps. After the next region is sampled, the robot picks a waypoint inside the next region according to the structure anomaly entropy in the region (described in the next algorithm), navigates 
to the waypoint, and updates the anomaly detection via \eqref{eq_bayesian_detect}. This process is repeated until the inspection is finished, either defined by a maximum number of steps or terminated by a human. 

\begin{algorithm}[!thpb]
\caption{HEMaP algorithm}\label{alg:remc}
\begin{algorithmic}[1]
\State \textbf{Parameter:} number of waypoints per region: \(n_\text{waypoints}\)
\State \textbf{Inputs:} initial region: \(r[0]\), region graph: \(\mathcal{G}\), time horizon: \(K\), reference point cloud: \(\mathcal{P}_\text{ref}\), information measure: \(\mu_p\)
\State \textbf{Initialize} \(k \gets 0\)
\While{Inspecting}
\If {\(k \mod K = 0\)} 
\State \textbf{Initialize} \(\mu[n]\)
\For{\(r_i \in \mathcal{R}\)}  
\For{$ p_{ref} \in  r_i$}
    \State \(\mu_r[i] \gets \mu_r[i] + \mu_p(p_{ref})\)
\EndFor
\EndFor
\State \(\bar{\rho} = \mu_r/\sum\mu_r\)
\State \(\bar{\rho} = (\bar{\rho}+\delta)/\sum(\bar{\rho}+\delta)\)
\State \(P \gets \text{REMC}(\bar{\rho})\) \Comment{\eqref{eq_remc}}
\EndIf
\State \(r[k+1] \sim  \mathbb{P}(R_{k+1}|R_{k} = r[k])\)
\For {\(i\) in \(n_\text{waypoint}\)}
\State \(x \gets\) waypoint\_placement(\(r[k+1], \mu_p \))
\State \(\text{navigate\_to}(x)\) 
\State \(\mu_p \gets \text{anomaly\_detection}(\mu_p )\) \Comment{\ref{section:detection}}
\EndFor
\State \(k \gets k+1\)
\EndWhile
\end{algorithmic}
\end{algorithm}

\begin{algorithm}[!thpb]
\caption{waypoint\_placement}\label{alg:waypoint}
\begin{algorithmic}[1]
\State \textbf{Parameter:} number of samples: \(n_{\text{sample}}\)
\State \textbf{Inputs:} region: \(r\), feature information: \(\mu_p\)
\State \textbf{Initialize} \(x[n_{\text{sample}}], \mu_x[n_{\text{sample}}]\)
\For {\(i\) in  \(n_{\text{sample}}\)} 
\State \(x[i] \sim \text{uniform}(r)\) \Comment{uniformly sample a pose from the region}
\For{\(p_\text{ref}\in \mathcal{P}_\text{ref}\)} 
\If{\(p_\text{ref} \in \text{camera\_frustum}(x[i])\)}
\State \(\mu_x[i] \gets \mu_x[i] + \mu_p(p_\text{ref})\)
\EndIf
\EndFor
\EndFor
\State \(\bar{\mu}_x \gets \mu_x/\sum \mu_x\) 
\State \(x^* \gets \text{random\_choice}(x, p = \bar{\mu}_x)\)
\State \textbf{Return} \(x^*\)
\end{algorithmic}
\end{algorithm}

Algorithm \ref{alg:waypoint} shows the procedure for choosing the waypoint in the designated region from the ergodic planner. A sampling algorithm is used to select the waypoint, where the number of candidate waypoints, \(n_\text{sample}\), is determined based on the size of the region and the computing power of the robot. The robot samples \(n_\text{sample}\) waypoints from the designated region \(r\). For each candidate waypoint, it then finds the reference points that fall within the robot's camera frustum, defined by the camera field of view and the minimum and maximum depths. The entropies of all the anomalies within the frustum are added together and stored as the waypoint measure. The final waypoint is then selected by randomly sampling from this candidate set, with the selection probability being proportional to the relative measure of each waypoint. 

\section{Experiments}

In this section, four sets of experiments, with progressively increasing complexity, are conducted to demonstrate the advantages of REMC and HEMaP at various abstraction levels: 1) Graph world, 2) Grid world, 3) Gazebo simulation, 4) Real-world experiment. The first experiment shows the fast convergence to ergodicity of REMC by testing on a graph world. The second experiment applies HEMaP on a grid representation of the ballast tank and shows that the graph method can be applied to a continuous space partitioned into interconnected regions. In the third experiment, the method is applied in a Gazebo simulation of the ballast tank to study the effectiveness of ergodic graph traversal for FOD detection. In the final experiment, the inspection algorithm is implemented on a TurtleBot in a custom-built mock ballast tank to demonstrate real-world deployment capability. 

\subsection{Graph World Experiment}
The following experiment compares the performance between the ergodicity of the Markov chain generated by FMMC and REMC from Theorem~\ref{thm_optimalbound} using the region graph in Fig. \ref{h_graph}. 

\subsubsection{Methodology}
The target distribution \(\bar{\rho}\) and initial distribution \(\rho_0\) are generated by uniformly sampling the weight for each region from \(0-1\) and normalizing them so that they add up to \(1\). The expected ergodicity deviation at each time step, \(\|\mathbb{E}[\hat{\rho}_k] - \bar{\rho} \|_2\), is calculated both for FMMC and REMC, where 
\begin{equation}
   \mathbb{E}[\hat{\rho}_k] = \frac{1}{K}\sum_{k=0}^{K-1} P^k\rho_0 .
\end{equation}  

\begin{figure}[thpb]
      \centering
       \includegraphics[width=0.35\textwidth]{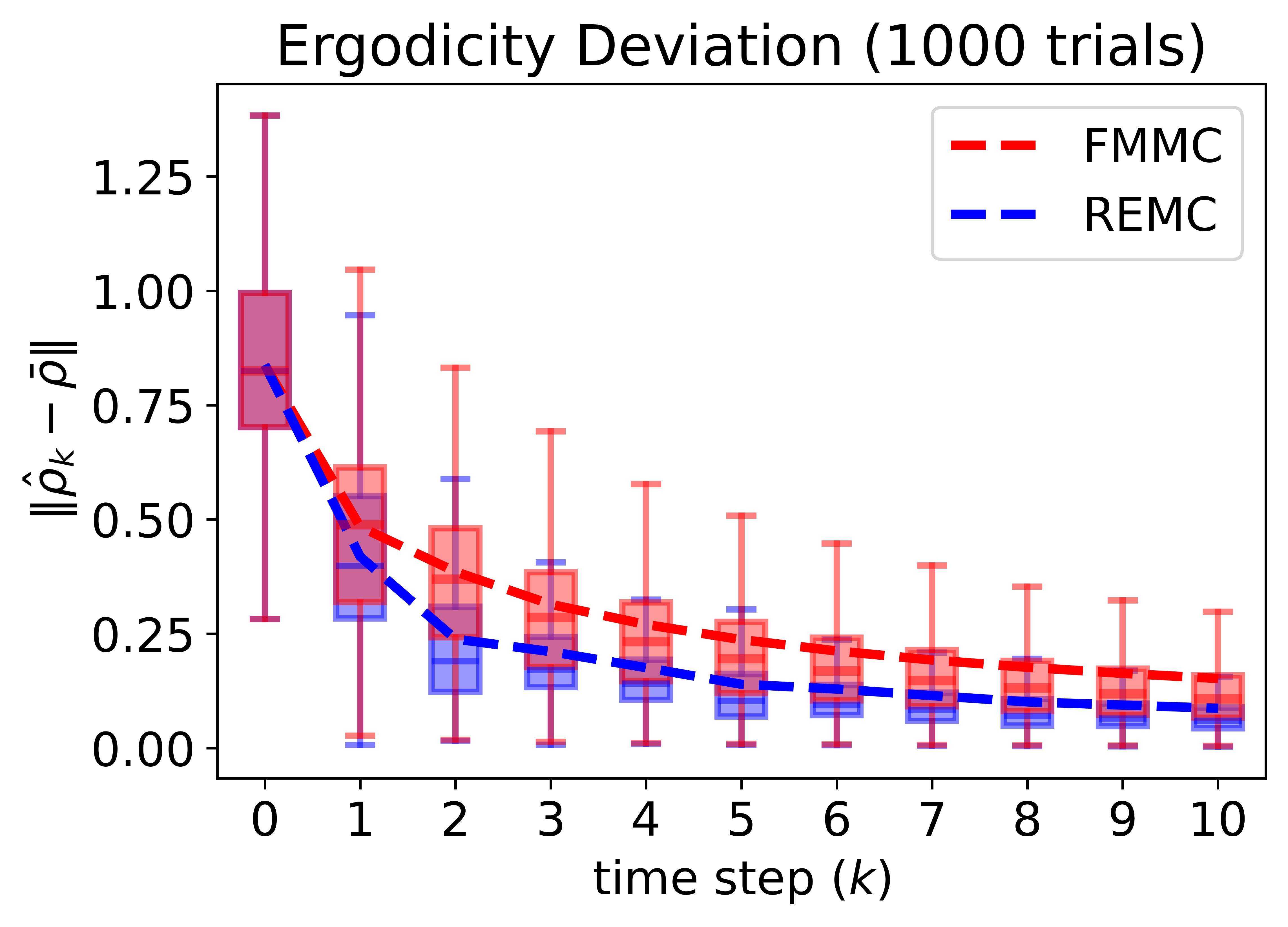}
      \caption{Ergodicity comparison between fastest mixing Markov chain (FMMC) and REMC from Theorem~\ref{thm_optimalbound} tested on the region graph from Fig. \ref{h_graph}, showing that REMC has a lower ergodicity error with a small number of steps on average. 
     }
      \label{fig_fmmc_vs_ergodic}
   \end{figure}

\subsubsection{Results} 
Fig. \ref{fig_fmmc_vs_ergodic} shows the expected ergodicity deviations in 10 time steps for FMMC and REMC separately for 1,000 trials. The result shows that on average the REMC method attains ergodicity faster than FMMC. Additionally, it has a substantially lower worst case deviation (shown by the top blue whiskers) than FMMC. Fig. \ref{fig_fmmc_vs_optimal_pairdiff} shows the pairwise difference (FMMC - REMC) between the two methods for the same trials. The result shows that FMMC is consistently less ergodic, with the first quartiles larger than 0 at all the time steps. 

   \begin{figure}[thpb]
      \centering
    \includegraphics[width=0.35\textwidth]{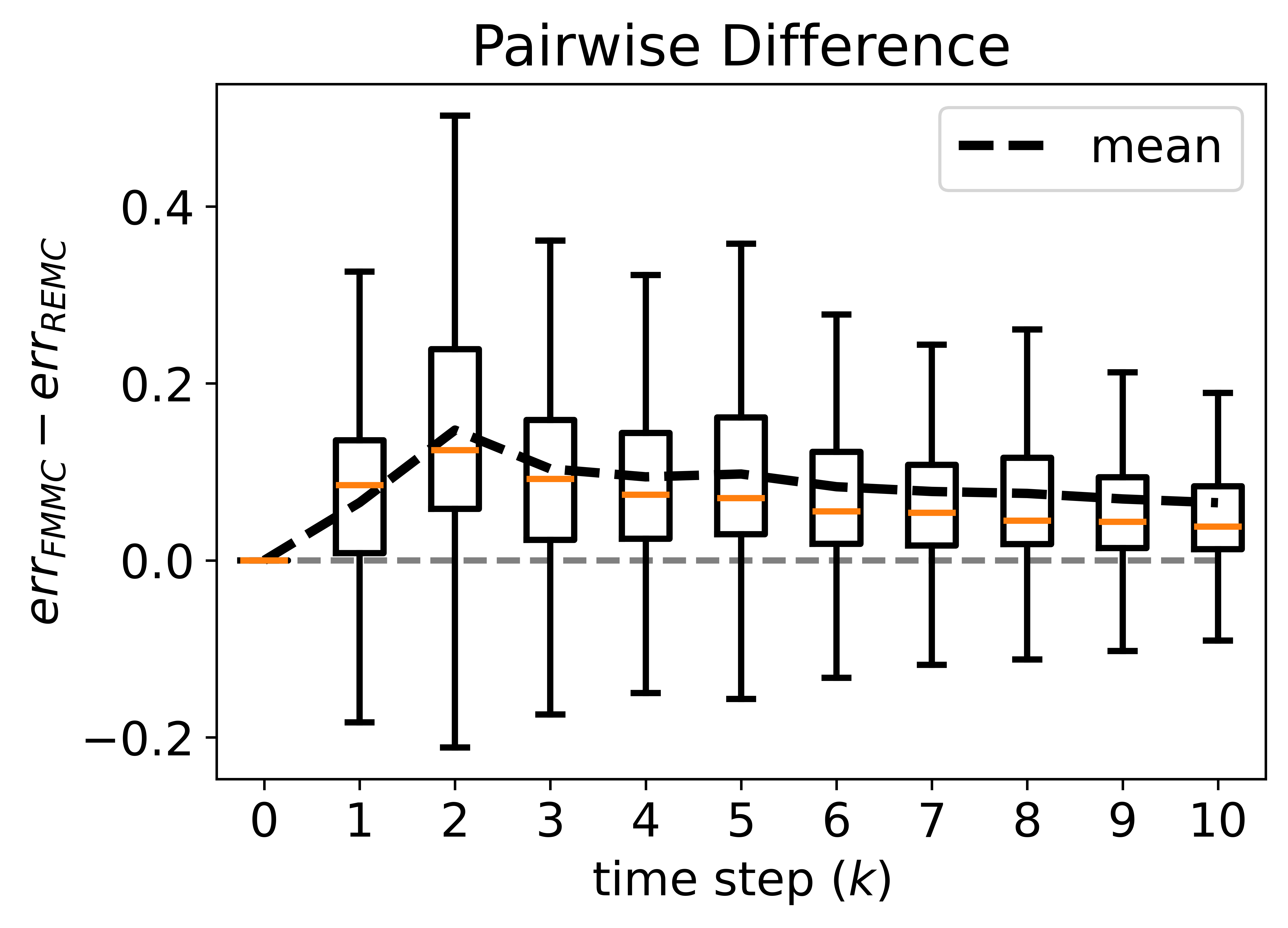}
      \caption{Boxplot of the pair-wise error difference from each trial calculated from the data from Fig. \ref{fig_fmmc_vs_ergodic}. For the same initial condition and target distribution, our method consistently has less error on an average as compared to FMMC.}
      \label{fig_fmmc_vs_optimal_pairdiff}
   \end{figure}

\subsection{Grid World Experiment }
\label{section:grid}
The following experiments shows the comparison of region-based ergodicity between HEMaP and three other continuous space ergodic planners: spectral multiscale coverage (SMC) \cite{MathewGeorge2011Mfea}; heat equation driven area coverage (HEDAC) \cite{IvicStefan2017ECMA}; and graph-based ergodic search in cluttered environments (GESCE) \cite{ShiroseBurhanuddin2024GGES}, in the occupancy grid of the ballast tank shown in Fig. \ref{fig:ballast_cad}. 

\subsubsection{Methodology} 
The occupancy grid map is created by projecting the ballast tank CAD model onto the ground, with a lower and upper height limit of \(0.05\) m and \(0.2\) m, respectively. Additionally, the occupied grids are inflated with the radius of a TurtleBot Burger. The robot is modeled as a point mass moving at a constant speed of 0.2 m/s. Specifically, for SMC, the robot follows a single integrator model
\begin{equation}
     \begin{bmatrix}
 x_{t+1}\\ y_{t+1}
\end{bmatrix} = \begin{bmatrix}
 x_{t}\\ y_{t}
\end{bmatrix} + \Delta t\begin{bmatrix}
 v_x\\ v_y
\end{bmatrix} 
\end{equation} 
with \(\sqrt{v_x^2+v_y^2} = 0.2\). For HEMaP, the robot is allowed to move to any of the 8 neighboring grid cells centered at the current grid location within the time interval \(\Delta t\). 
For HEDAC, the robot follows second-order dynamics according to this GitLab implementation\footnote{\url{https://gitlab.idiap.ch/rli/robotics-codes-from-scratch/-/blob/master/python/ergodic_control_HEDAC_2D.py?ref_type=heads}}.
For GESCE, the robot travels directly according to a probabilistic road map (PRM) planner, i.e., it follows a straight line from one collision-free node to the next without any low-level dynamical constraints.   

The target distribution is uniform for each region, but varies among the regions, as shown in Fig.~\ref{fig:HEMaP_examples}. To account for collision avoidance, grid cells belonging to obstacles are considered an extra region, with a target visitation frequency of \(0\). For SMC, 20 Fourier coefficients are used with a time step of 0.01 s. For HEMaP, the waypoints are selected uniformly randomly in the target region indicated by the graph planner. A* is chosen as the path planning algorithm to navigate between the waypoints. For GESCE, 200 nodes are uniformly randomly sampled from the free space for the PRM with any pairs of nodes within 0.6 m connected by an edge. The cost function is identical to the SMC implementation. A total of 500 iterations are allowed during the search cycle to maintain a reasonable runtime. The robot is allowed to run for a total of 1,200 s (except for GESCE), and a total of 30 trials are performed for each method. 

\subsubsection{Results} 
\begin{figure*}[thpb!]
    \centering
     \begin{subfigure}[t]{0.3\textwidth}
        \centering
        \includegraphics[width=0.95\textwidth]{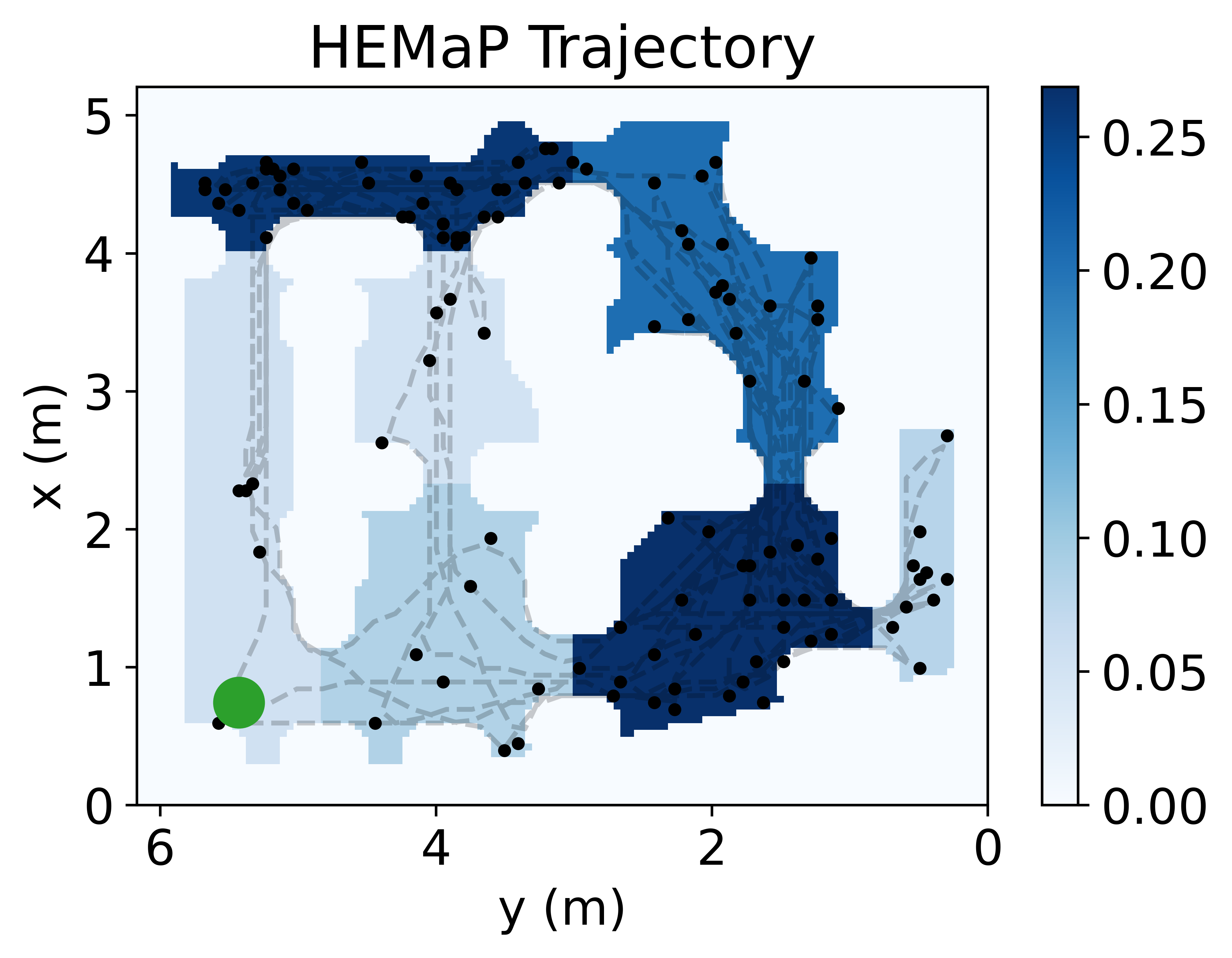}
        \caption{}
    \end{subfigure}
    ~
    \begin{subfigure}[t]{0.3\textwidth}
        \centering
        \includegraphics[width=0.95\textwidth]{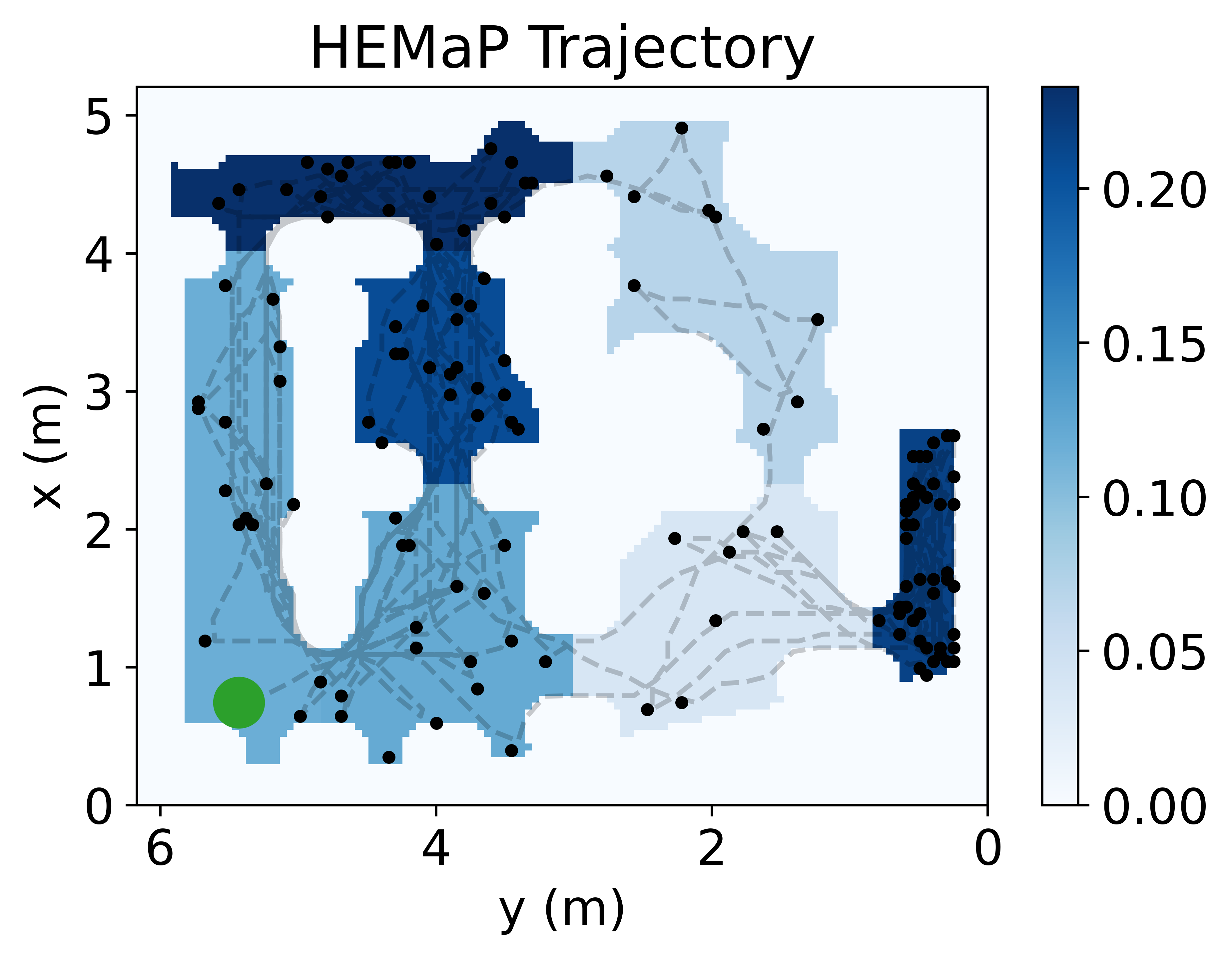}
        \caption{}
    \end{subfigure}%
    ~ 
    \begin{subfigure}[t]{0.3\textwidth}
        \centering
        \includegraphics[width=0.95\textwidth]{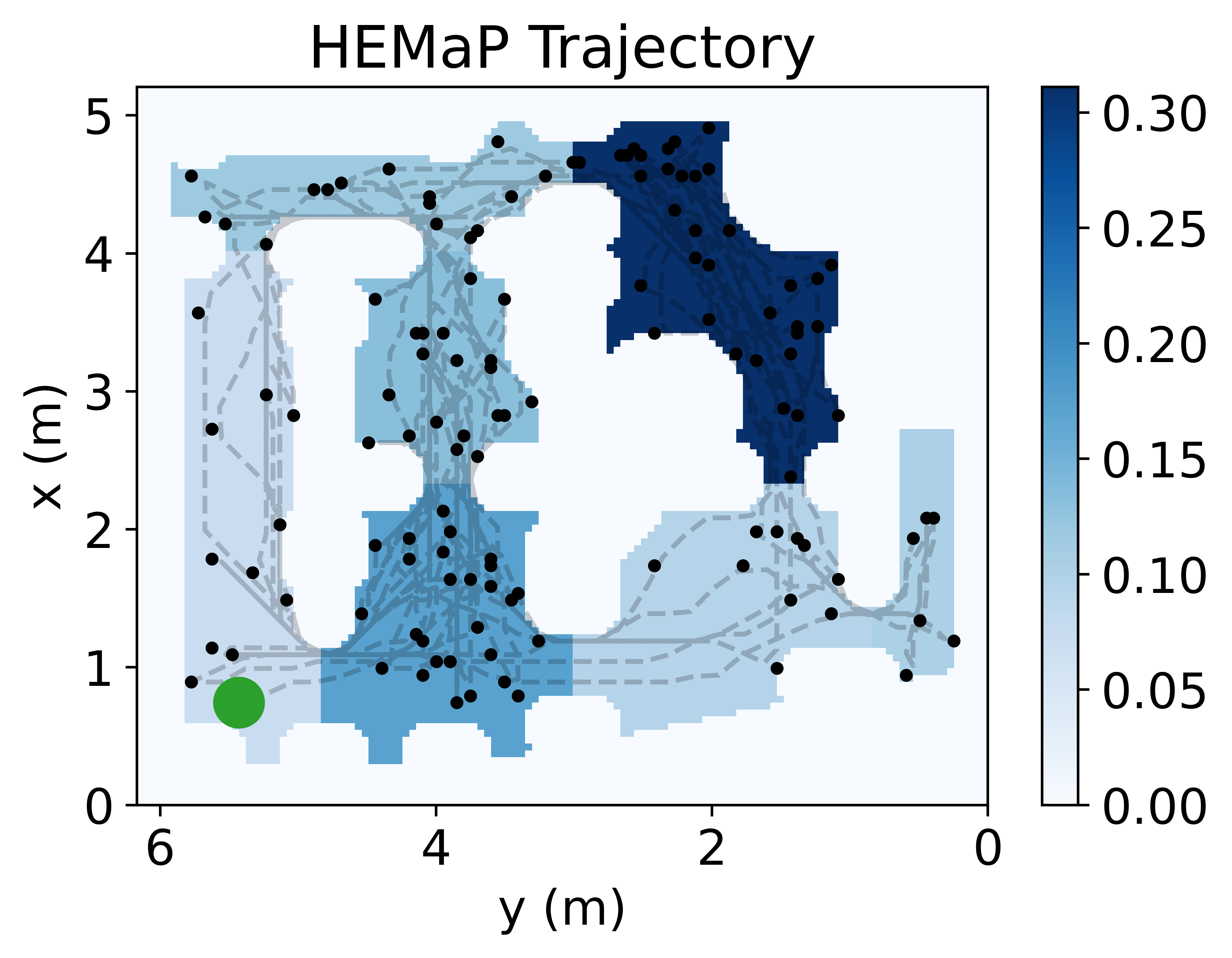}
        \caption{}
    \end{subfigure}
    \caption{Example trajectories generated by HEMaP with A* as the low level planner and uniformly random waypoint placement. This shows that ergodic planning can adapt to various spatial distributions \(\bar\rho\), where the number of sampled points (shown as black circles) is proportional to the desired region visitation frequency (shown as shade of blue). }
      \label{fig:HEMaP_examples}
\end{figure*}

 \begin{figure}[thpb!]
    \centering
    \begin{subfigure}[t]{0.25\textwidth}
        \centering
        \includegraphics[width=0.95\textwidth]{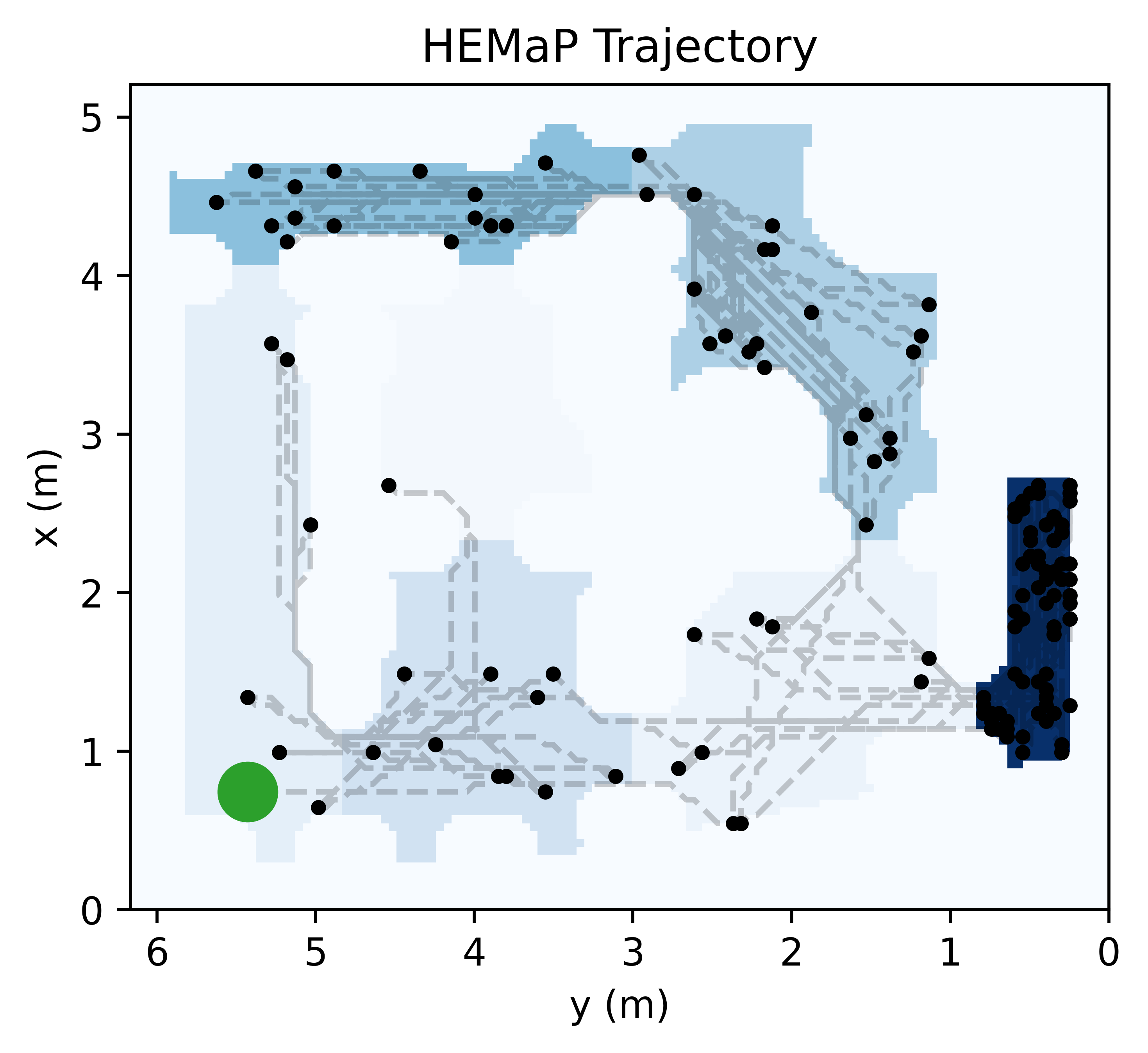}
        \caption{HEMaP}
    \end{subfigure}%
    ~ 
    \begin{subfigure}[t]{0.25\textwidth}
        \centering
        \includegraphics[width=0.95\textwidth]{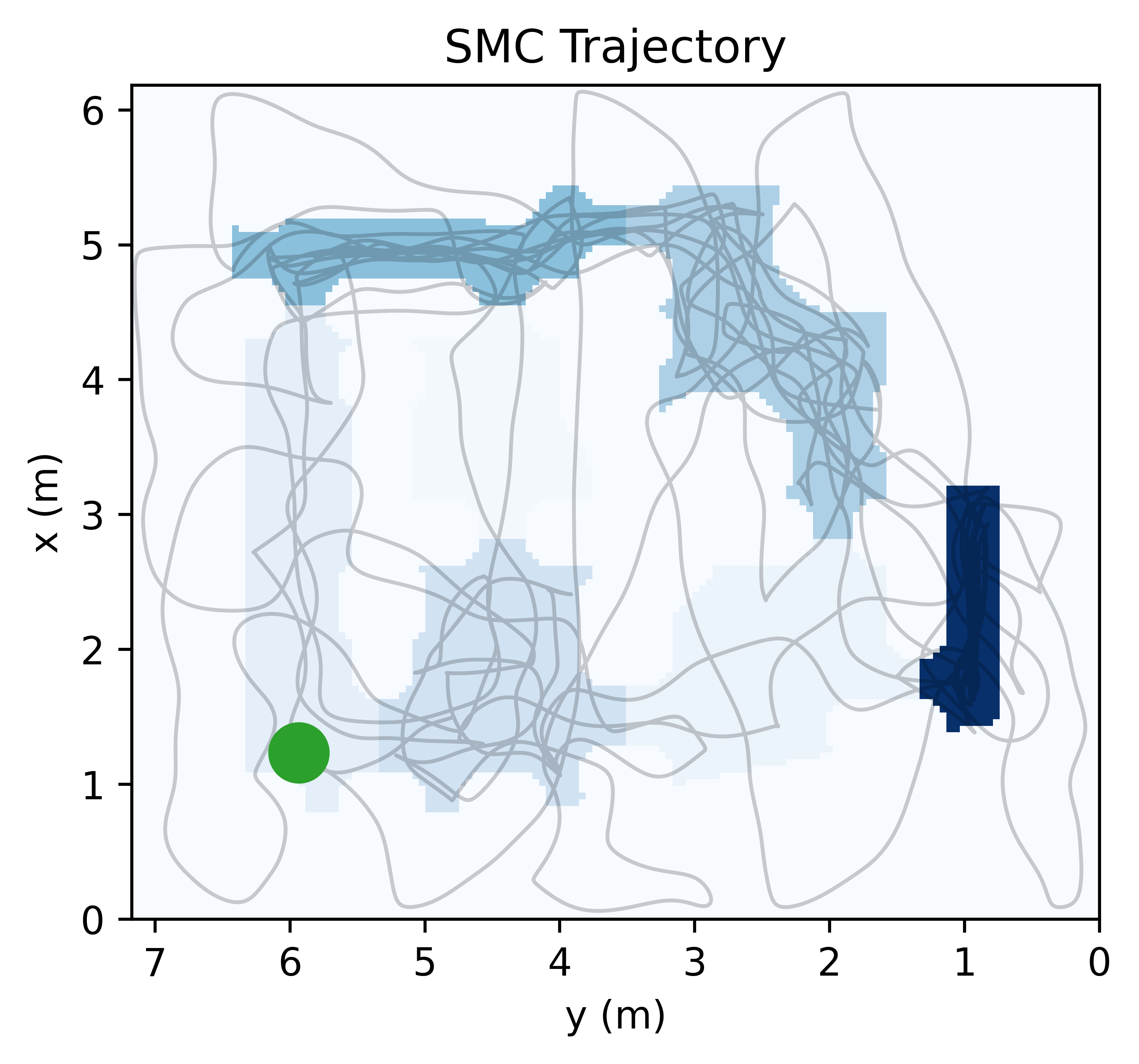}
        \caption{SMC}
    \end{subfigure}
    \caption{Example ergodic trajectories comparison for HEMaP and SMC, with the darker blue shade indicating higher target frequency. The green circles represent the initial location; the black dots in (a) represent the inspection waypoints and dashed gray line shows intermittent trajectory; the gray line in (b) shows continuous trajectory. 
    SMC yields more complex trajectories that enter the obstacle region even if the desired distribution is set to 0.}
         \label{fig_example_traj}
\end{figure}

A few representative robot inspection trajectories are shown in Fig. \ref{fig:HEMaP_examples}, which demonstrate that HEMaP works well regardless of the target distribution \(\bar\rho\) that varies according to the region-level anomaly entropy. To provide a contrast with the continuous space ergodic planner trajectories, Fig.~\ref{fig_example_traj} shows comparisons between HEMaP and SMC for the same choice of \(\bar\rho\). 
While HEMaP trajectory is completely contained in the workspace, SMC cannot always avoid the obstacles as it considers a finite number of Fourier coefficients. 
The Gibbs phenomenon also causes the robot to actively run into obstacles, as seen by the long arcs stretching outside the workspace. In a similar vein, SMC does not have any knowledge of the topology of the space, which allows the robot to pass through obstacles and reach different regions directly. Additionally, the path generated by the A* planner in HEMaP is more straightforward for a ground robot to follow than the complex trajectory generated by SMC. 

Fig. \ref{fig_smc_vs_markov_plot} shows region-wise ergodic deviations at each time step for all the ergodic planners averaged over the complete set of trials. Fig. \ref{fig:pairwise_plot} plots the statistics on the pairwise differences between HEMaP and the continuous space planners on the same set of trials. For SMC and HEDAC, 
the relative frequency is calculated based on the current location (region) of the robot, with obstacles treated as an extra region. For HEMaP, 
the relative frequency is calculated for each waypoint and the ergodicity deviation is held constant from the previous waypoint while the robot is in transition. Similarly, the relative frequency for GESCE is calculated for each node in the PRM graph and held constant during the traversal between the nodes.

   \begin{figure}[thpb]
      \centering
      \includegraphics[width=0.35\textwidth]{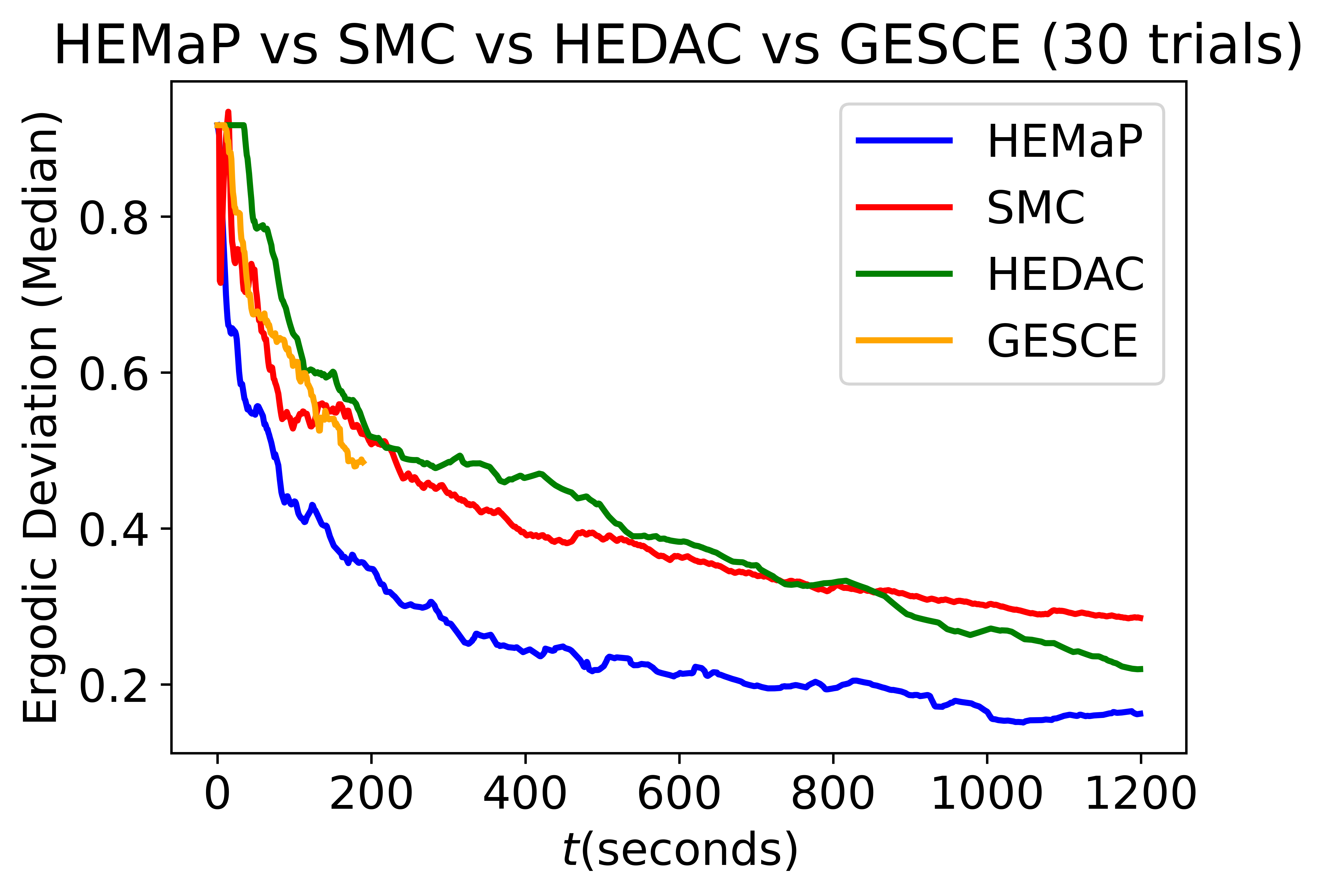}
      \caption{Comparison of region-based ergodicity between HEMaP and the various continuous space ergodic planners in the ballast tank (shown in Fig. \ref{h_graph}) with the robot running at 0.2 m/s for 1,200 seconds. HEMaP has better region-based ergodicity on average.}
      \label{fig_smc_vs_markov_plot}
   \end{figure}

\begin{figure*}[thpb!]
    \centering
     \begin{subfigure}[t]{0.3\textwidth}
        \centering
        \includegraphics[width=0.95\textwidth]{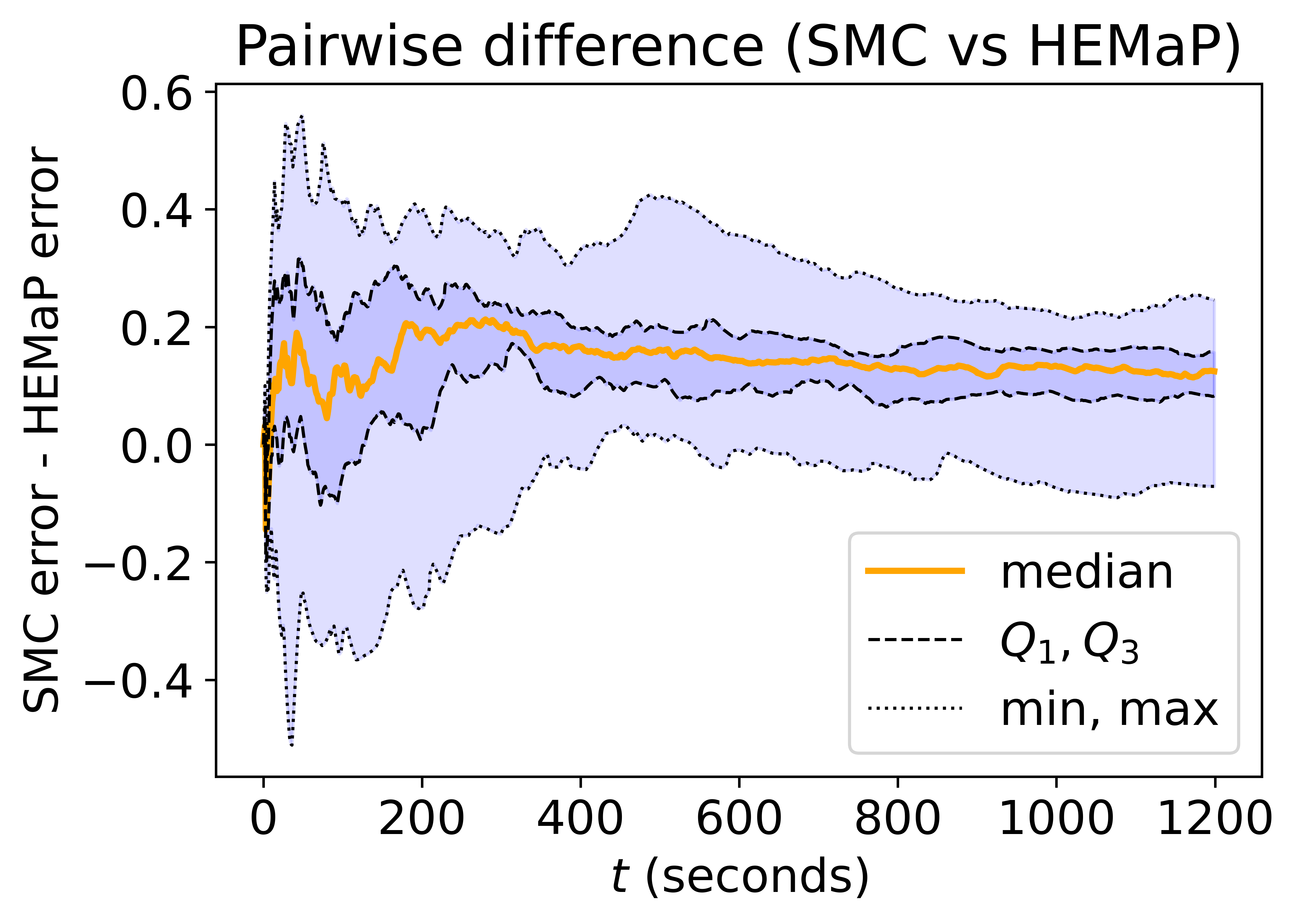}
        \caption{SMC}
    \end{subfigure}
    ~
    \begin{subfigure}[t]{0.3\textwidth}
    
        \centering
        \includegraphics[width=0.95\textwidth]{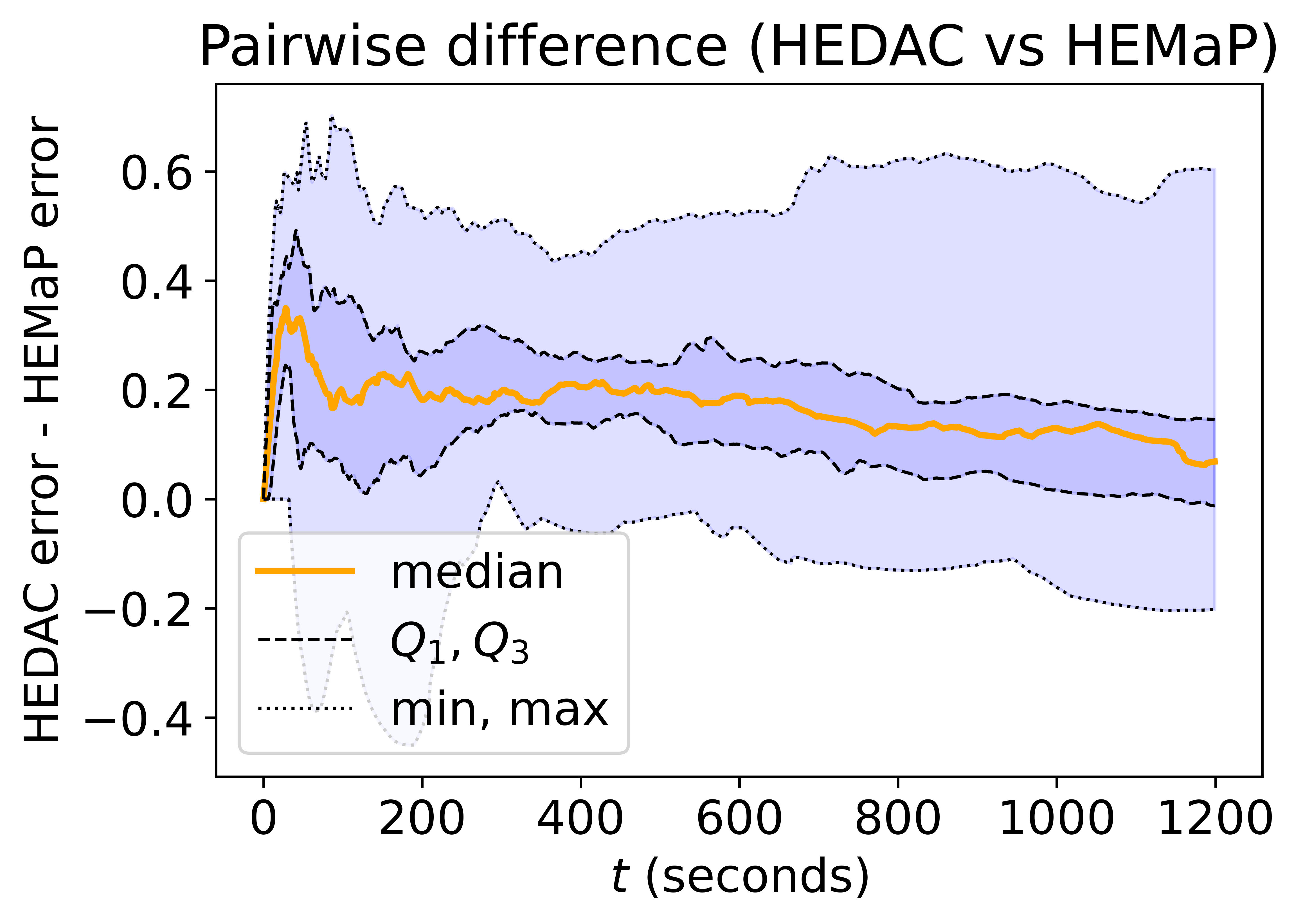}
        \caption{HEDAC}
    \end{subfigure}%
    ~ 
    \begin{subfigure}[t]{0.3\textwidth}
        \centering
        \includegraphics[width=0.95\textwidth]{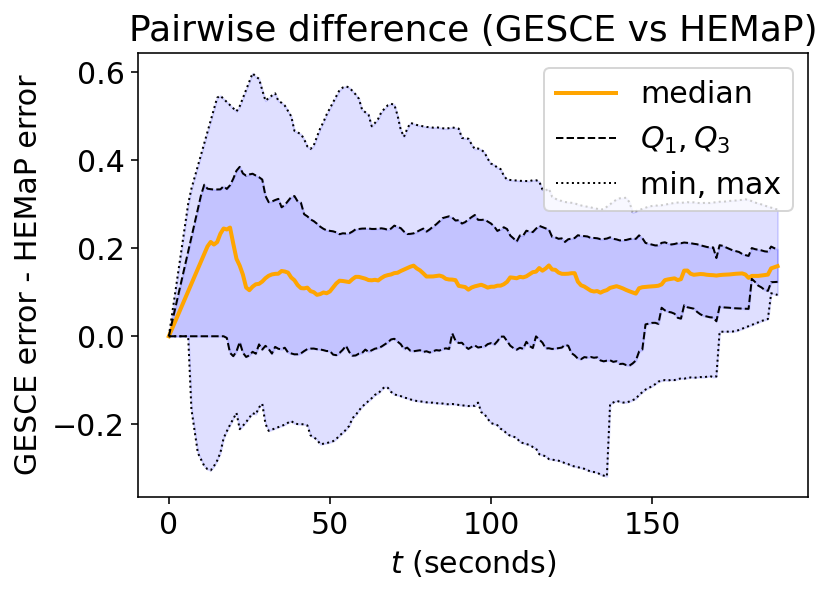}
        \caption{GESCE}
    \end{subfigure}
    \caption{Statistics on pair-wise differences in ergodicity deviations (errors) between HEMaP and the three continuous space planners over the same ser of trials used in Fig. \ref{fig_smc_vs_markov_plot}. (a) shows that for the same initial condition and target distribution, HEMaP consistently has less error on average compared to SMC, with the 3rd quartile above \(0\) after 200 seconds. (b) shows a similar trend for HEDAC, where HEDAC has a higher initial error but lower asymptotic error because of its collision avoidance capability. (c) shows GESCE has an almost identical performance to HEMaP in terms of the 1st quartile values (worse with respect to the median and 3rd quartile values though), but is unable to run for more than 200 seconds. } 
\label{fig:pairwise_plot}
\end{figure*}

The results show that HEMaP is able to reduce the region-based ergodicity deviation faster than all the other planners on average. This happens even though SMC does not consider the topology of the explored space, and can be explained by the fact that SMC is designed to handle rectangular domains that are partitioned into regular grid-like regions at varying scales using a Fourier basis. However, the ballast tank regions are irregular and non-rectangular with sharp changes in the desired distribution values at the region boundaries, which are hard to capture using a Fourier series. 
HEDAC, on the other hand, has a higher error than SMC initially due to obstacle avoidance. However, it is capable of approaching the performance of HEMaP at around 800 s, since it has significantly less collisions compared to SMC\footnote{Minor collisions, similar to Fig. 4 in \cite{IvicStefan2017ECMA}, can still be observed. This is likely due to discretized second-order dynamics, where the potential field does not produce sufficient acceleration for collision avoidance.}. GESCE has similar initial performance as SMC while maintaining obstacle avoidance, and improves substantially if the search is allowed to run longer. The main challenge is that the trajectory length cannot be specified, due to which none of the trials exceed the 200 second mark, while all the other methods can run indefinitely. 

In general, 
the main advantage of HEMaP 
is that it discards fine-grained information in pursuit of \textit{rapid} convergence to the target visitation distribution of inspection regions. Therefore, it takes the shortest A* path between the regions, which enhances ergodicity at the region level. This is in contrast to all the continuous space methods that try to provide intra-regional coverage, which affects region-level ergodicity deviation and leads to fractal-like trajectories. If the fine-grained information are crucial and the robot is able to execute complex trajectories, then one can consider a continuous space method. Alternatively, since HEMaP is modular and hierarchical, we envision the possibility of replacing the A* planner with any compatible continuous space method, which is further discussed in Section \ref{section:discussion}.

\subsection{Gazebo Simulation}
\label{sec:gazebo_sim}
The following experiment compares the performance of two different graph traversal methods against HEMaP, in the context of FOD detection in the ballast tank of Fig. \ref{fig:ballast_cad}. Specifically, we consider random walk, where the planner samples from all the edges in each region uniformly; and a greedy method, where the planner chooses the edge that leads to the neighboring region with the highest estimated information measure. Additionally, the waypoint placements within the regions are changed accordingly, with random walk choosing the sampled waypoints randomly; and the greedy method choosing the sampled waypoints with the maximum estimated information. 

   \begin{figure}[thpb]
      \centering
      \includegraphics[width=0.35\textwidth]{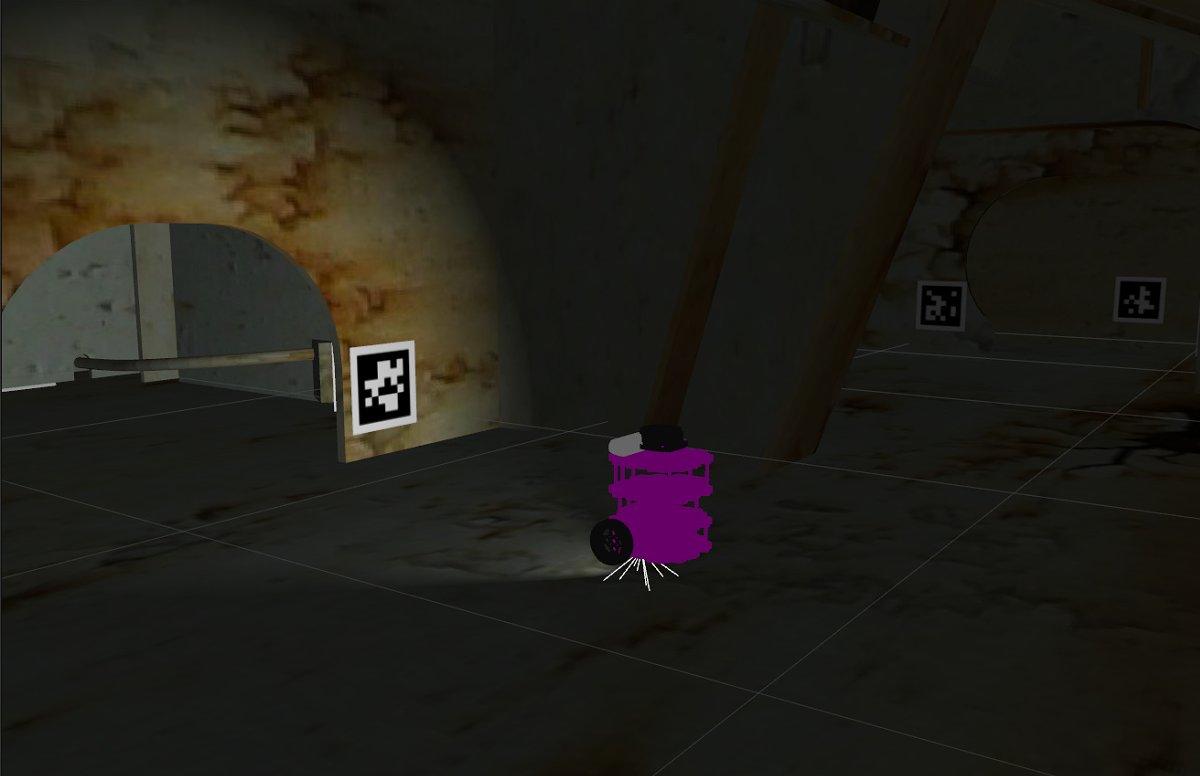}
      \caption{Example picture of the Gazebo simulation with the purple TurtleBot3 Burger running in the ballast tank in Fig.\ref{fig:ballast_cad}, with rusty texture and no external light source. A few scattered AprilTags are used as known features in the tank.}
      \label{fig_gazebo}
   \end{figure}

\subsubsection{Methodology}
Simulation is performed in Gazebo using the ballast tank model in Fig. \ref{fig:ballast_cad}. A rusty texture is applied to the entire tank structure and all the light sources are removed. The robot is a TurtleBot3 Burger with the original camera replaced by a RealSense D435. 
AprilTags are scattered around the tank as partially known features and the robot wheel encoders provide odometry information for the hierarchical SLAM framework described in Section \ref{section:slam}.
A representative image of the simulated environment is shown in Fig. \ref{fig_gazebo}. The same occupancy grid map from Section \ref{section:grid} is used as the static global cost map for the ROS navigation planner and a LiDAR is used to generate the local cost map for the local planner. The number of points in the reference point cloud, \(\mathcal{P}_\text{ref}\), is empirically selected as \(100,000\) to provide sufficient density for the detection of FODs, given the size of the tank and the FODs. The buffer zone threshold, \(d_\text{buffer}\), in Fig.~\ref{fig:anomaly_detect} is selected to be \(3.5\)
cm. The smoothing factor, \(\delta\), in Algorithm \ref{alg:remc} is selected to be \(0.001\).

Each method is allowed to travel on the graph for 35 steps, with two waypoints in the same region per step. 1,000 waypoints are sampled for waypoint placement. The hierarchical SLAM has an estimation horizon of 5 pose nodes, including the inspection nodes. FOD detection is performed when the inspection nodes are pruned from the estimation horizon. The anomaly detection update rule \eqref{eq_bayesian_detect} is initialized with a prior null hypothesis of \(\mathbb{P}(H_0) = 0.8\) given that proving the null hypothesis is more challenging than proving the alternative hypothesis. In other words, the robot assumes the tank is FOD-less until proven otherwise. A set of six common hand tools is chosen as the FOD, comprising 1) power drill \(\times\) 2; 2) spirit level; 3) sander; 4) hammer; 5) measuring tape. \(4\) to \(6\) FODs are sampled for each trial, and each FOD is placed by uniform sampling from the edges of the occupancy map of the tank. If the FODs are hidden behind a structure or blocking the doorways, they are moved to the nearest available location. 

   \begin{figure}[thpb]
      \centering
      \framebox{\parbox{0.35\textwidth}{ \includegraphics[width=0.35\textwidth]{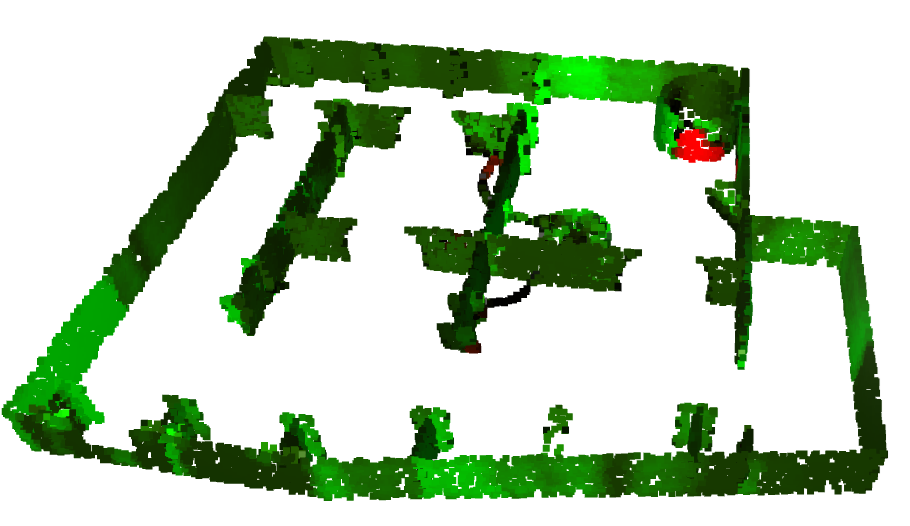}}
}
      \caption{Example of a typical anomaly belief of the reference point cloud for HEMaP inspection with the hue (from red to green) representing the null hypothesis, \(\mathbb{P}(H_0\)), (from 0 to 1); and the value (from color to black) representing the entropy of the hypotheses (from 0 to 0.69, corresponding to the minimum and maximum entropy of the Bernoulli distribution). }
      \label{fig_reference_detected}
   \end{figure}
\begin{figure}[thpb!]
    \centering
    \begin{subfigure}[t]{0.24\textwidth}
    
        \centering
        \includegraphics[width=0.95\textwidth]{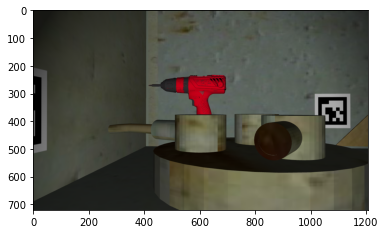}
        \caption{FOD Image}
    \end{subfigure}%
    ~ 
    \begin{subfigure}[t]{0.24\textwidth}
        \centering
        \includegraphics[width=0.95\textwidth]{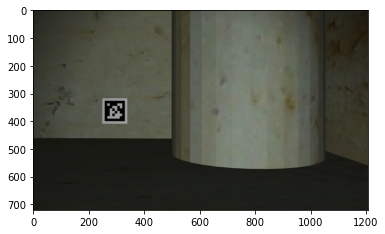}
        \caption{False Positive}
    \end{subfigure}
    \caption{Example images collected by the robot during the inspection process with (a) showing a power drill behind a cable hub, and (b) showing a false positive with a pillar in the tank.}
         \label{fig_example_imgs}
\end{figure}

The robot is placed in region 1, near the access hole shown in Fig. \ref{fig:ballast_cad}, at the beginning of each trial. After an inspection trial is complete, all the points with \(\mathbb{P}(H_1) \geq 0.5\) from the reference point clouds, shown in Fig. \ref{fig_reference_detected}, are extracted and clustered using hierarchical clustering \cite{h_cluster}, with a distance criterion of \(0.5m\). The centroids of the clusters are calculated, and photos of the tank sections taken from the inspection waypoints with the centroids in view are saved as candidate FODs. During actual deployment, these photos can be presented to a human operator, who can then decide whether the candidates are actually FODs. 
Fig. \ref{fig_example_imgs} shows two example photos saved by the robot. A total of 15 trials are performed and the FODs captured in the photos are analyzed.

\begin{table}[!thpb]
\centering
\begin{tabular}{|c|c|c|c|}
\hline
        & Detected FODs & Missed FODs & False Positives \\ \hline
Random  & 45            & 31          & 19              \\ \hline
Greedy  & 39            & 37          & \textbf{7}               \\ \hline
\textbf{HEMaP} & \textbf{64}            & \textbf{12}          & 21              \\ \hline
\end{tabular}
\caption{Total FOD detection counts (15 trials).}
\label{table_fod_counts}
\end{table}

\subsubsection{Results}
Table \ref{table_fod_counts} shows the results for the FODs detected in the photos by the three inspection methods over 15 trials. A total of 76 FODs are present, of which HEMaP only misses 12 FODs. However, the random and greedy methods miss a much larger number of FODs. This occurs due to an inherent bias toward visiting highly connected regions and high-information regions by the random and greedy methods, respectively. As a result, some of the other tank regions are not inspected adequately when the total number of time steps is limited. The visitation frequency for each region is discussed in more detail in our subsequent analysis.  
HEMaP and random methods share a similar number of false positive photos, 
while the greedy has the lowest number. This happens because both HEMaP and random methods distribute the inspection points, whereas, the greedy method focuses on refining several high information points. 

\begin{table}[!thpb]
\centering
\begin{tabular}{|c|c|c|c|}
\hline
        & Mean & Standard Dev. & $p$-value (w/ HEMaP)\\ \hline
Random  & 0.59            & 0.24          & 1.1\(\times 10^{-3}\)              \\ \hline
Greedy  & 0.51            & 0.18          & \(4.8 \times 10^{-9}\)              \\ \hline
\textbf{HEMaP} & \textbf{0.84}            & \textbf{0.13}          &  N/A              \\ \hline
\end{tabular}
\caption{FOD detection rate statistics for 15 simulation trials with hypothesis tests done using a paired $t$-test.}
\label{table_det_rate}
\end{table}

A paired $t$ test is also performed between the HEMaP and the other two FOD detection methods (across all the simulation trials) to investigate any 
difference in detection performance. The results are shown in Table \ref{table_det_rate}. Consistent with the total count results, HEMaP has the highest mean detection rate along with the lowest standard deviation, indicating that it is more consistent in detecting FODs. 
The HEMaP also shows statistically significant improvements over the random and greedy methods, with corresponding $p$-values of \(1.1\times 10^{-3}\) and \(4.8\times10^{-9}\), respectively. 

\begin{table}[!thpb]
\centering
\begin{tabular}{|c|c|c|c|c|c|c|c|}
\hline
Regions & 1    & 2    & 3    & 4    & 5    & 6    & 7    \\ \hline
Random  & 0.24 & 0.21 & 0.15 & 0.23 & 0.04 & 0.10  & 0.02 \\ \hline
Greedy  & 0.30 & 0.26 & 0.00 & 0.04 & 0.36 & 0.03 & 0.00 \\ \hline
\textbf{HEMaP} & 0.23 & 0.16 & 0.10 & 0.13 & 0.22 & 0.10 & 0.05 \\ \hline
\end{tabular}
\caption{Average distributions of waypoints in different regions in the ballast tank with the three traversal methods.}
\label{table_region_dist}
\end{table}

Comparisons among the three traversal methods in terms of the average distribution of waypoints in the regions are reported in Table \ref{table_region_dist}. Using the region graph shown in Fig. \ref{h_graph}, it is seen that the distribution of waypoints for HEMaP is roughly correlated to the region size, with regions 1 and 5 being highly visited and regions 3 and 7 visited less frequently. 
As expected, the random method is highly susceptible to the region graph structure. Accordingly, the highly connected regions of 1, 2, 3, and 4 are visited most frequently, whereas region 5 with the directed edge is significantly under-visited. This results in a large number of missed FODs when they are spawned in these under-visited regions. The greedy method, on the other hand, is susceptible to an overreliance on the highest estimated information. The robot spends a large amount of time in the two locally largest regions of 1 and 2 until they are completely covered. The robot then quickly moves on to the most information-rich region 5 through regions 4 and 6. The robot never leaves region 5 during the allowed 35 steps, resulting in zero inspection for regions 3 and 7.

   \begin{figure}[thpb]
      \centering
\includegraphics[width=0.35\textwidth]{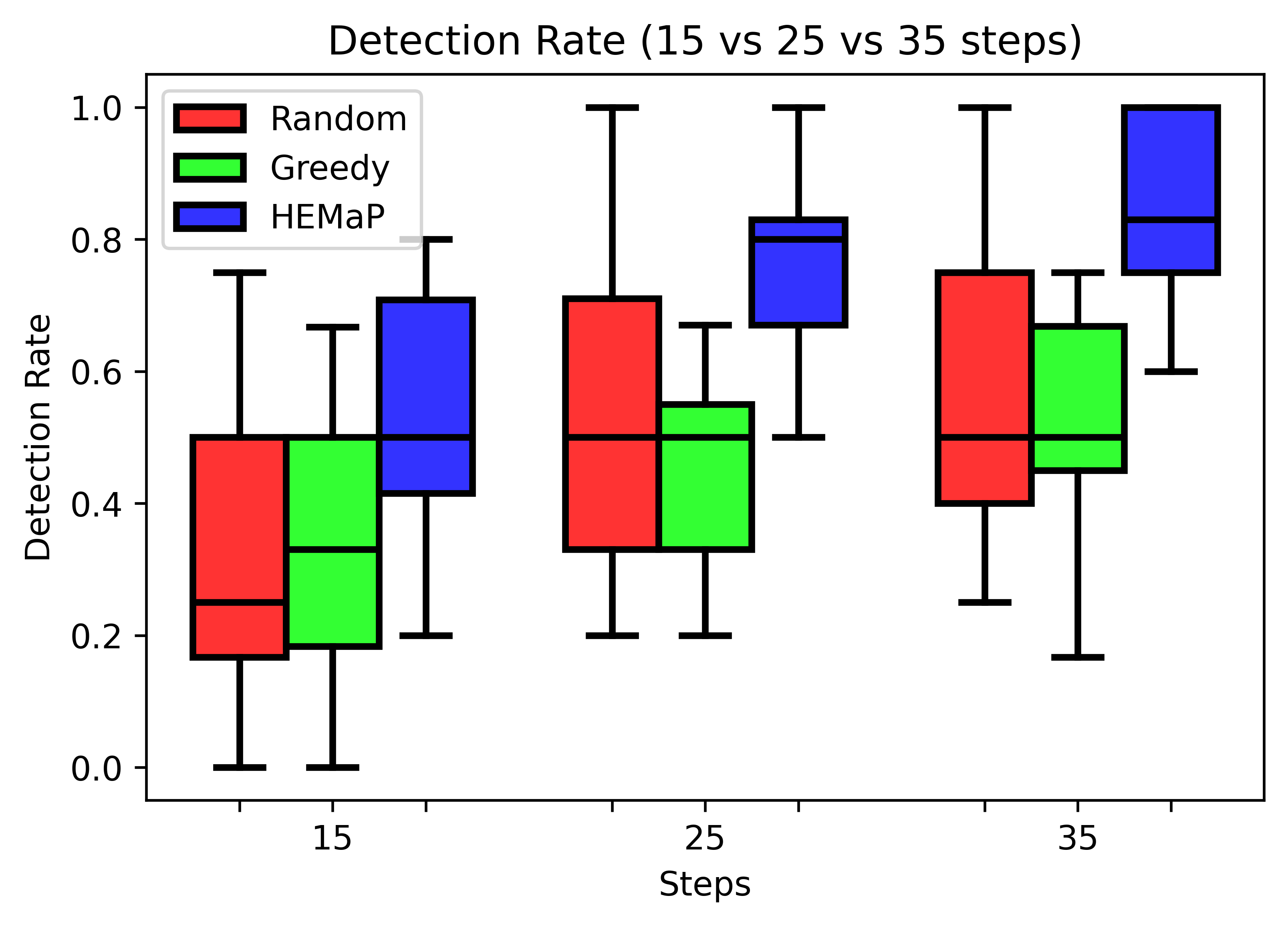}
      \caption{Comparison of FOD detection rates among HEMaP, random, and greedy methods for 15, 25 and 35 steps. Showing that HEMaP is more efficient at detecting FODs even with a limited number of steps.}
      \label{detection_rate}
   \end{figure}

Furthermore, 15 trials with the same setup are performed for 15 and 25 steps, and the detection rates are shown in Fig. \ref{detection_rate}. As expected, all the methods have a decreased detection rate with a smaller number of steps. Nevertheless, even with a small number of steps, HEMaP is 
more effective in detecting FODs 
than the other two methods. This experiment showcases the adaptability of REMC to arbitrary target distributions and how it is 
combined effectively with the anomaly detector in the HEMaP framework. The random and greedy exploration methods are highly dependent on the graph structure, where the visitation frequencies are determined by the connectivity of the graph, causing the highly connected regions to be visited more frequently. Additionally, the convergence rates of the time averages to the corresponding target distributions are not optimized for the random and greedy methods. Therefore, the short-term visitation frequency is biased toward the regions closer to the starting region, while the far away regions are less likely to be visited.

\subsection{Physical Deployment}
\begin{figure}[thpb]
      \centering
      \framebox{\parbox{0.2\textwidth}{ \includegraphics[width=0.2\textwidth]{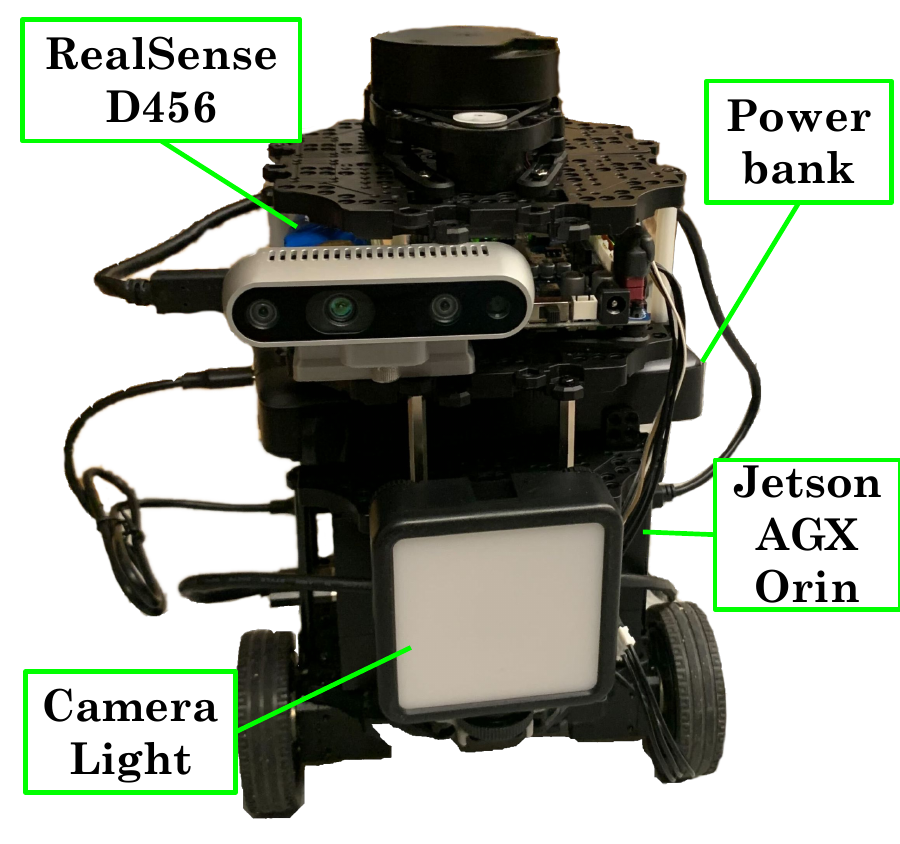}}
}
      \caption{TurtleBot3 Burger used in the physical experiment, customized with camera light, Jetson AGX Orin, RealSense D435 and a power bank.}
      \label{turtlebot}
\end{figure}

The inspection framework using the REMC method from simulation is applied to a physical mock-up ballast tank to verify the simulation results and showcase its capability for real-world deployment. 

\subsubsection{Methodology}
The mock ballast tank is constructed using a combination of plywood and ramboard for the walls. The walls are painted white to mimic a typical tank color and an additional brown paint is used to mimic natural corrosion and rust caused by seawater. AprilTags from the family 36h11 of size \(0.1\) m are scattered around the tank as partially known features for the SLAM. A TurtleBot3 Burger is chosen as the robot platform. The robot is customized by replacing the Raspberry Pi with an Nvidia Jetson AGX Orin running Ubuntu 22.04 and ROS Noetic. The Pi camera is replaced by a RealSense D435 for depth perception. The ball bearing caster wheel is replaced with a \(1.27\) cm swivel caster for better locomotion on uneven ground. An LED camera light is mounted on the robot to illuminate the tank. The TurtleBot LiDAR is used to generate odometry using the RF2O ROS package \cite{JaimezMariano2016Pofa}. 

The hierarchical SLAM is run once in teleoperation mode to map the AprilTags locations before the autonomous inspection trials. The local point clouds are also collected and assembled into a global map to generate the reference CAD model. Similar to Section \ref{sec:gazebo_sim}, the occupancy grid map is generated by projecting the reference CAD model within a height range of \(0.05\) m to \(0.2\) m. Similarly, the occupancy grid map is used for the global planner and the LiDAR is used for the local planner. The number of points in the reference point cloud, \(\mathcal{P}_\text{ref}\), is empirically selected as \(100,000\). The buffer zone threshold, \(d_\text{buffer}\) and the smoothing factor, \(\delta\), are chosen to be \(1\) cm and \(0.001\), respectively. A set of six common hand tools is chosen for the FODs, specifically: 1) power drill \(\times\) 2; 2) masking tape; 3) sander; 4) crimper; 5) sander. A total of 5 trials are performed with 35 steps each. \(3\) to \(5\) FODs were sampled for each trial with a total of \(21\) FODs for the five trials. Each FOD is placed in the tank by uniformly sampling from the edges of the occupancy map. If the FODs are hidden behind a structure or blocking the doorways, they are moved to the nearest available location. 

\begin{figure*}[thpb!]
    \centering
     \begin{subfigure}[t]{0.3\textwidth}
        \centering
        \includegraphics[width=0.95\textwidth]{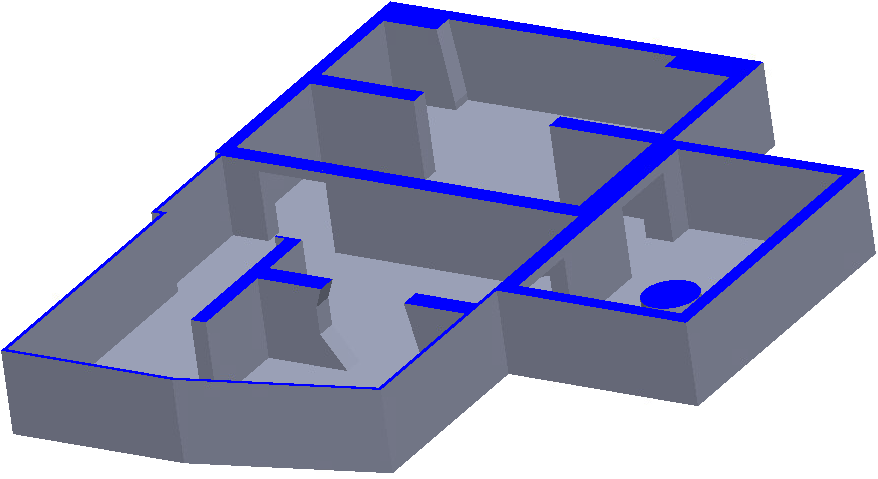}
        \caption{Reference Mesh}
    \end{subfigure}
    ~
    \begin{subfigure}[t]{0.3\textwidth}
    
        \centering
        \includegraphics[width=0.95\textwidth]{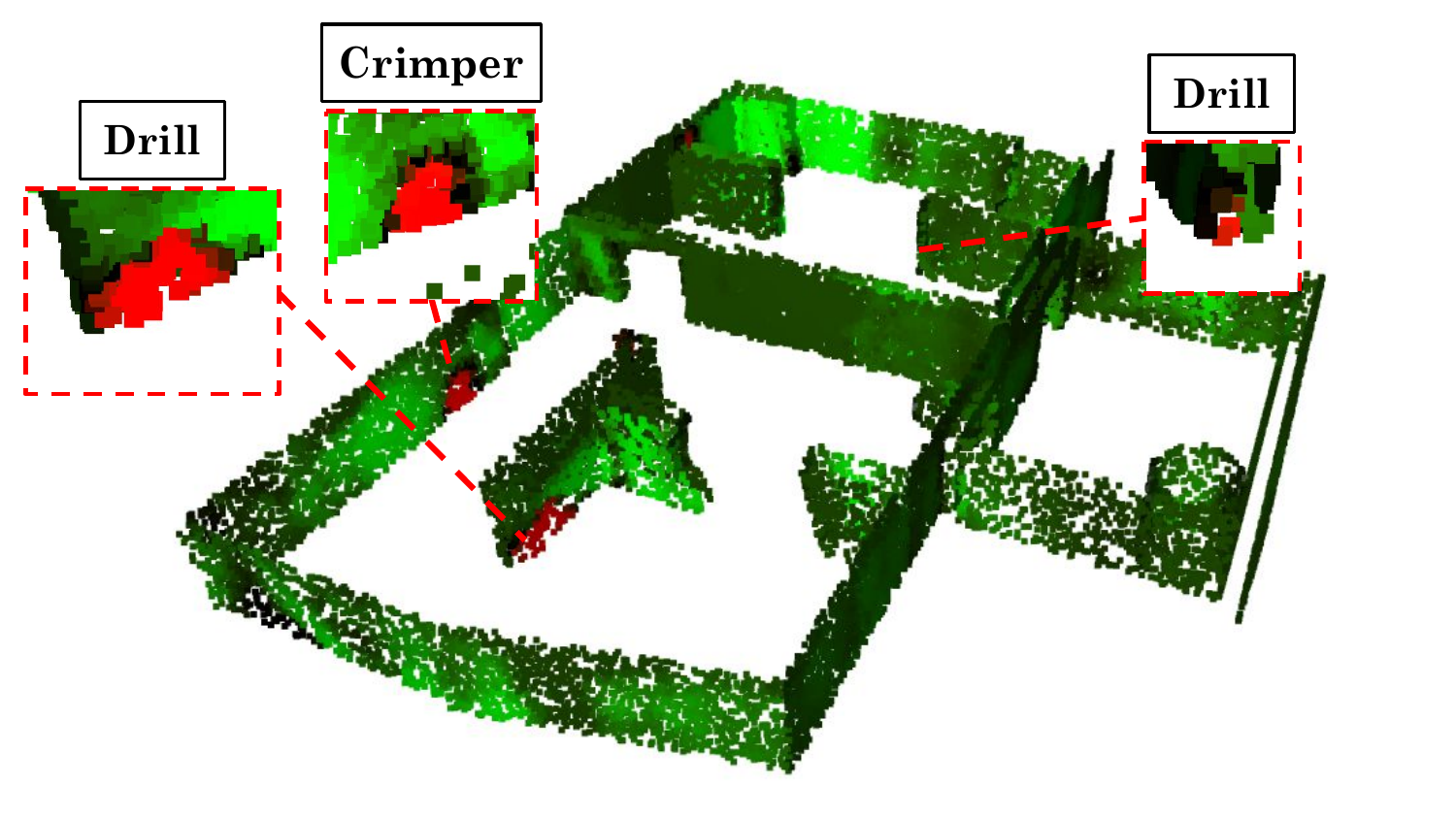}
        \caption{Example trial 1}
    \end{subfigure}%
    ~ 
    \begin{subfigure}[t]{0.3\textwidth}
        \centering
        \includegraphics[width=0.95\textwidth]{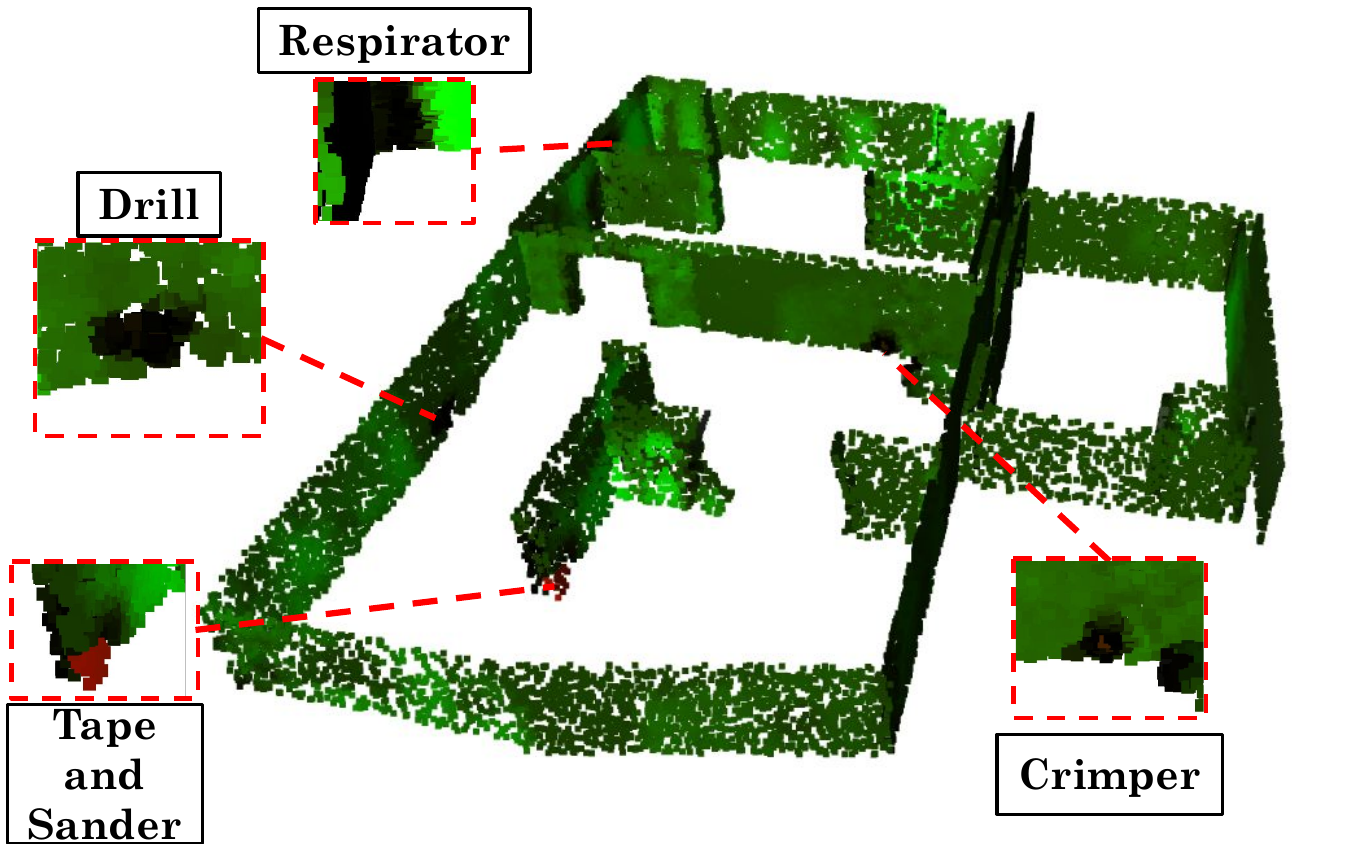}
        \caption{Example trial 2}
    \end{subfigure}
    \caption{The reference mesh of the mock ballast tank and two example reference point clouds after the inspection showing HEMaP has a good coverage of the workspace in small steps. The true FOD locations are denoted by the text box. The hue (from red to green) representing the null hypothesis, \(\mathbb{P}(H_0\)), (from 0 to 1), i.e. red is considered to be FODs by the robot; and the saturation (from color to black) representing the entropy of the hypotheses (from 0 to 0.69, corresponding to the lowest and highest entropy of Bernoulli distribution, respectively). Figure (a) shows a trial of a set of three FODs and figure (b) shows a trial of a set of 5 FODs. The corresponding detection of these FODs are shown in Fig. \ref{fig_example_imgs_phy}.} 
         \label{fig_ref_phy}
\end{figure*}
\begin{figure*}[thpb!]
    \centering
    \begin{subfigure}[t]{0.5\textwidth}
        \centering
        \includegraphics[width=0.95\textwidth]{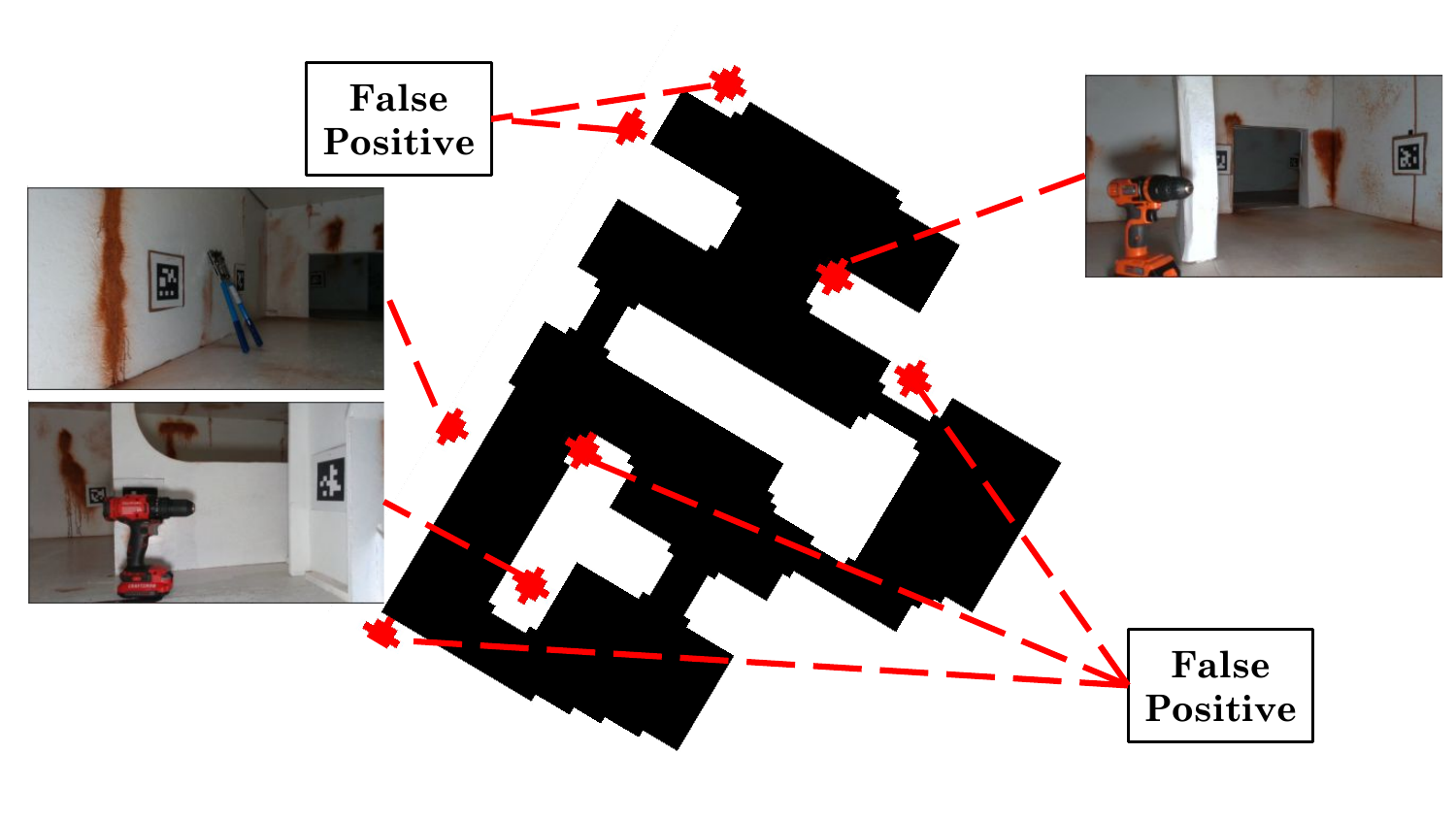}
        \caption{Example trial 1}
    \end{subfigure}%
    ~ 
    \begin{subfigure}[t]{0.5\textwidth}
        \centering
        \includegraphics[width=0.95\textwidth]{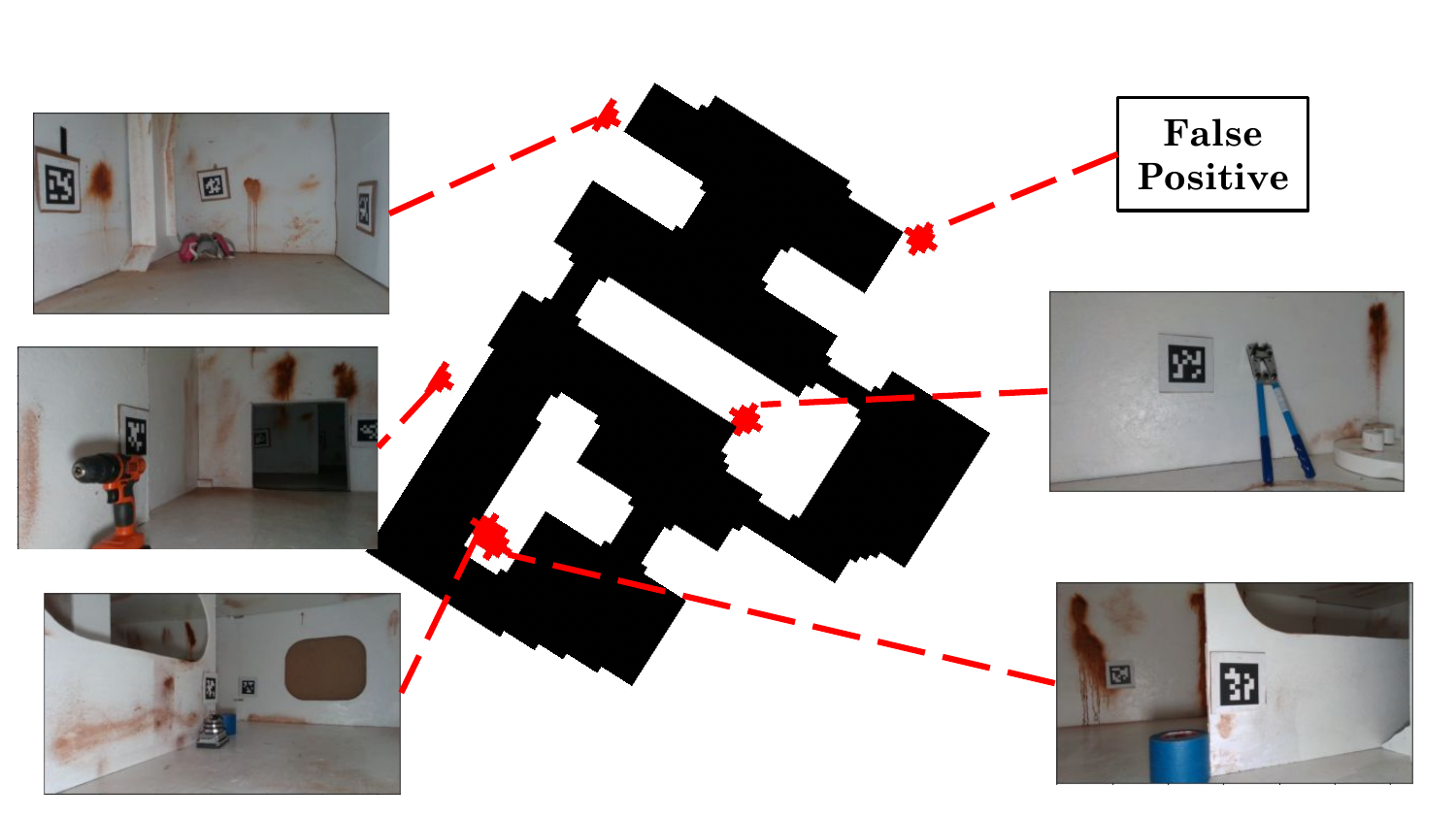}
        \caption{Example trial 2}
    \end{subfigure}
    \caption{Occupancy map of the ballast tank showing the high detection rate of HEMaP. Red points shows the candidate FOD locations, where the reference points from the corresponding trial in Fig.~\ref{fig_ref_phy} with \(\mathbb{P}(H_1)>0.5\) are segmented and clustered. The photos collected from the robot inspections are shown for the true positives.
    }
    \label{fig_example_imgs_phy}
\end{figure*}

\subsubsection{Results}
The overall FOD detection rate 
is \(0.853 \pm 0.2\), which is similar to the detection rate in simulation. The average number of false positives is \(3.4\), which is slightly higher than in simulation. This is expected given that sensor readings are noisier in real life, together with modeling inaccuracy in the reference point cloud. Fig. \ref{fig_ref_phy} shows examples of the reference point cloud from two trials after inspection. More black patches (close to 0.5 posterior confidence) are observed compared to the simulation trials, meaning that there is more confusion from the robot. The locations of the true FODs are often in bright red, representing a high confidence. Fig. \ref{fig_example_imgs_phy} shows the associated points of interest, where the points with an anomaly posterior greater than \(0.5\) are segmented and clustered. In both the trials, all the FODs are found, with trial 1 having more false positives. It can be seen that most of the false positives arise in the corners of the structures. These corners are sensitive to misalignment in the reference model. The depth camera is also noisier around the corners with sharp changes in depth. 

\section{Discussion}
\label{section:discussion}
The high variance in performance arising from the stochasticity of the Markov chain can be reduced through Monte Carlo tree search (MCTS) \cite{BestGraeme2019DDpf}, although as shown in Appendix C, the variance of the time average of REMC decreases rapidly with the number of steps. In MCTS, multiple sequences with a predefined time horizon are sampled from the Markov chain, and the sequence with the highest increase in ergodicity is selected instead of sampling the next state at every step. Here, we use the single-step sampling option to investigate the fundamental capability of an ergodic Markov chain and devise an exploration framework that can be extended to communication-limited multi-robot systems. This extended framework can be potentially applied to a wider array of exploration problems beyond inspection, such as wildlife monitoring in a biome and lost object retrieval in a building environment.

An interesting potential of the hierarchical planning framework is that graph ergodicity can be used in conjunction with the various continuous space exploration methods. In this case, the workspace can be partitioned into spaces with simpler topology, and instead of waypoint placement, the robot executes a continuous space ergodic trajectory within each region. This alleviates the need for calculating the basis functions globally for the complex topology and instead focuses on the fine details only in the local region, while the region transition is handled by the Markov chain. 

An ergodic Markov chain can also be potentially used in continuous space. The Metropolis-Hasting algorithm allows one to construct a continuous-space Markov chain with any target distribution by using the detailed balance condition. 
Since the detailed balance condition is completely local, the robot can execute an ergodic transition without knowing the global distribution. However, as discussed earlier, the detailed balance condition impacts the convergence rate to time-discounted ergodicity and requires a reversible motion model of the robot. To improve the convergence rate and remove the reversibility requirement, it would be interesting to 
study the potential of REMC in continuous space.

It is also possible to consider an ergodic Markov chain in the state space instead of the physical space. The state space can be either graph-based or topological, with inherently stochastic dynamics analogous to a Markov decision process. A classical strategy for state space exploration in reinforcement learning (RL) is to uniformly sample an available action \cite[p.32]{bremaud2013markov}. This is analogous to the random method in Section \ref{sec:gazebo_sim}, where the visitation frequency is highly dependent on the connectivity of the space. Therefore, we are interested in investigating whether an on-policy RL method can be developed using an ergodic Markov chain-based stochastic policy.

In this work, the transition times 
between the regions are not considered. The transition model of \cite{BermanS.2009OSPf} can be potentially applied to REMC to reduce extensive region switching. This also opens another 
application of rapid ergodicity to Markov processes, which is the continuous-time version of Markov chains. In the context of confined space work, modeling the transition time, or equivalently, region residence time, opens the method to more types of tasks that require long wait times, such as removal of the detected FODs or on-the-spot maintenance of damaged structures. A time sensitive ergodic Markov chain can explicitly balance the visitation frequency according to how long the robot would reside in a region. 

A final direction would be to extend the anomaly detection method to a broader class of anomalies that include rust patches and structural damages, in addition to FODs. Conceptually, a machine learning-based object detection model with a confidence score, such as \cite{NashWill2022Dlcd,THOR2}, can be used to replace the likelihood calculation in \eqref{eq:null_likelihood}-\eqref{eq:alt_likelihood}, while the Bayesian update and information metric can remain the same. 

\section{Conclusions}

This paper addresses the problem of efficient active robotic inspection of confined spaces using anomaly detection as a case study. 
First, an ergodic exploration method, named rapidly ergodic Markov chain (REMC), is introduced for a graph-based discretization of the confined space. REMC is shown to have an optimal convergence rate to time-discounted ergodicity on any reversible graph, implying it prioritizes early visitations of information-rich graph nodes (regions). REMC also provides an upper bound on the convergence rate to time-discounted ergodicity for any strongly connected graph. Therefore, REMC achieves more meaningful ergodicity in fewer time steps than the classical fastest mixing Markov chain method, which is optimized for the convergence rate to the target distribution. 
Second, a \underline{h}ierarchical \underline{e}rgodic \underline{Ma}rkov \underline{p}lanner (HEMaP) is developed that uses REMC for region graph traversal, a waypoint planner to select the inspection points inside the regions, and a standard local planner for collision-free waypoint navigation.
HEMaP is empirically shown to have better convergence to region-based ergodicity than 
a widely-used and two more-recent, continuous space ergodic control methods (SMC, HEDAC and GESCE, respectively).Finally, a SLAM-driven Bayesian anomaly detection method is incorporated within HEMaP for robust localization of foreign object debris (FODs) in the confined space. Both simulation and physical experiments in a ballast tank using a ground robot show that HEMaP leads to a significantly higher FOD detection rate compared to greedy and random inspection methods.

\bibliographystyle{IEEEtran} 
\bibliography{references}

\begin{appendices}

\section{Span of initial distributions}
\label{tightness}
\begin{remark}
    \begin{figure}[thpb]
      \centering
      \framebox{\parbox{0.3\textwidth}{ \includegraphics[width=0.3\textwidth]
      {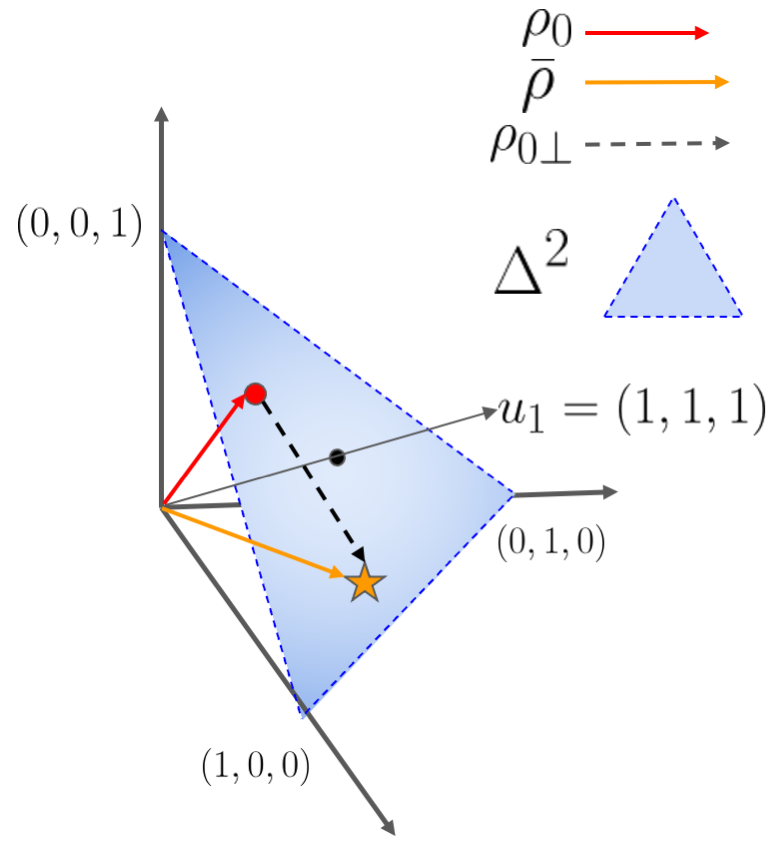}
      }
}
      \caption{Graphical illustration of a simplex for a 3-node graph, such as the one in Fig.~\ref{fig_detailed_balance}, which contains all valid distributions, \(\rho\), and closed under transformation by any stochastic matrix, \(P\). \(u_1\) is the first left eigenvector of the stochastic matrix \(P\); the blue triangle is the 2-D unit simplex, \(\Delta^2\); red arrow is the arbitrary initial distribution, \(\rho_0\); orange arrow is the target distribution, \(\bar{\rho}\);
      and the dashed arrow is the difference between the target distribution and the initial distribution, which is always parallel to the simplex.
      }
      \label{3-simplex}
   \end{figure}
   
    The set of all n-dimensional probability vector forms an \((n-1)\) dimensional unit simplex, denoted by \(\Delta^{n-1}\).
    \begin{equation}
       \Delta^{n-1} \triangleq \{\rho \in \mathbb{R}^n | \textbf{1}^T\rho = 1, \quad \rho \succeq 0 \}
    \end{equation}
    
     The hyperplane formed by the difference between any two members of \(\Delta^{n-1}\) is orthogonal to the first left eigenvector \(u_1 = \textbf{1}\) of any stochastic matrix \(P\). This hyperplane is denoted \(u_1^\perp\). A 3-D graphical example is illustrated in Fig. \ref{3-simplex}. Additionally, if \(P\) is symmetric, then \(v_1 \sim u_1 \) and \(u_1^\perp = v_1^\perp\)
    
\end{remark}
The following lemma shows that for a uniform stationary distribution \(\frac{1}{n}\textbf{1}\), there always exist a distribution \(\rho\) such that the orthogonal components on the simplex  \(\rho_\perp\) spans the hyperplane \(u_1^\perp\). 
\begin{lemma}
\label{tightness_lemma}
  Any nonzero vector $x$ in the orthogonal space of $\textbf{1} $  is the scaling of the orthogonal component of a distribution $\rho$. 
\end{lemma}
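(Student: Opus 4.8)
The plan is to exploit the explicit orthogonal decomposition of any probability vector with respect to $\textbf{1}$. Since $\textbf{1}^T\rho = 1$ for every $\rho \in \Delta^{n-1}$, the projection of $\rho$ onto the span of $\textbf{1}$ is exactly the uniform vector $\frac{1}{n}\textbf{1}$, so the orthogonal component is $\rho_\perp = \rho - \frac{1}{n}\textbf{1}$. The claim therefore reduces to showing that every nonzero $x$ with $\textbf{1}^T x = 0$ can be written as $\rho_\perp/\epsilon$ for some $\rho \in \Delta^{n-1}$ and some scalar $\epsilon \neq 0$.

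First I would construct the candidate distribution directly by setting $\rho = \frac{1}{n}\textbf{1} + \epsilon x$ for a scalar $\epsilon > 0$ to be chosen. The normalization constraint $\textbf{1}^T\rho = 1$ holds automatically because $\textbf{1}^T x = 0$, independently of $\epsilon$, so only the nonnegativity constraint $\rho \succeq 0$ requires care.

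The one step requiring attention --- and the only place the hypothesis $x \neq 0$ enters quantitatively --- is choosing $\epsilon$ small enough to keep $\rho$ in the nonnegative orthant. Letting $M = \max_i |x_i| > 0$, any $\epsilon$ with $0 < \epsilon \leq \frac{1}{nM}$ gives $|\epsilon x_i| \leq \frac{1}{n}$ and hence $\rho_i = \frac{1}{n} + \epsilon x_i \geq 0$ for all $i$. With such an $\epsilon$, the vector $\rho$ is a valid probability distribution whose orthogonal component is precisely $\rho_\perp = \epsilon x$, so that $x = \frac{1}{\epsilon}\rho_\perp$ is a scaling of the orthogonal component of $\rho$, as claimed.

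There is no deep obstacle here; the lemma is an elementary scaling construction. Its role, however, is essential: it certifies that as $\rho_0$ ranges over all initial distributions, the scaled orthogonal components $\rho_{0\perp}$ sweep out the entire hyperplane $u_1^\perp$. This is exactly what licenses replacing the maximization over $\rho_{0\perp}$ in the proof of Theorem~\ref{thm_optmimalsym} by the maximization over all unit vectors $x$, thereby recovering the induced $2$-norm in~\eqref{orthogonal_objective}.
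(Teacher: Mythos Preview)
Your proof is correct and essentially identical to the paper's: both set $\rho = \frac{1}{n}\textbf{1} + \epsilon x$, verify the normalization via $\textbf{1}^T x = 0$, and choose $\epsilon$ small enough to guarantee nonnegativity. The only cosmetic difference is that the paper bounds $\epsilon$ using $|\min_i x_i|$ while you use $\max_i |x_i|$; either bound works.
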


\begin{proof}
Consider an arbitrary non-zero vector $x$ orthogonal to $\textbf{1}$
\begin{equation}
    x \in \{x | \textbf{1}^Tx = 0\}
    \label{eq_x_perp_def}
\end{equation}
which implies that $x$ contains both positive and negative values to sum to zero. Therefore, the minimum term of $x$ is less than zero,  
\begin{equation}
\min_i(x_i) < 0 . 
\end{equation}
Consider  $x_{\epsilon}$,  which is pointing in  the same direction of \(x\) given by
\begin{equation}
    x_{\epsilon} =  \epsilon x
\end{equation}
where 
\begin{equation}
    \begin{aligned}
        0<  \epsilon \leq \frac{1}{n |\min(x) |}.
    \end{aligned}
\end{equation}
Then, terms of $\rho = \epsilon x +  \frac{1}{n}\textbf{1}  $ are non-negative, i.e., 
\begin{equation}
    \begin{aligned}
         \rho = \epsilon x +  \frac{1}{n}\textbf{1} \succeq 0.
    \end{aligned}
\end{equation}
Moreover, $\rho$ sums to one from~\eqref{eq_x_perp_def} 
\begin{equation}
    \begin{aligned}
        \textbf{1}^T \rho = \textbf{1}^T\epsilon x +  \textbf{1}^T\frac{1}{n}\textbf{1} = 
        \textbf{1}^T\frac{1}{n}\textbf{1} = 1.
    \end{aligned}
\end{equation}
and is therefore a valid distribution, and its orthogonal component  \(\rho_\perp\) is a scaling of $x$ i.e., 
\begin{equation}
\rho_\perp = \rho - \frac{1}{n}\textbf{1} = \epsilon x .
\end{equation}

\end{proof}

\section{3 Nodes Simplex Example}
\label{section:3_node_example}
\begin{exmp}

 For any stochastic matrix \(P\), the dynamics is closed in the simplex \(\Delta^{n-1}\), i.e. the distribution vector stays as distribution vector when multiplied by P. From that, the dynamic of the Markov chain can be studied by restricting the \(P\) in the subspace of the simplex, resulting in \(P|_{u_1^\perp}\), and the dynamic is operating on the orthogonal component \(\rho_{\perp} = \rho - \bar{\rho}\). 
Fig. \ref{simplex_traj} shows the difference of the trajectory generated by FMMC and REMC, with the simplex transformed to a 2-D space. The figure shows the trajectory of the distribution (red) and the trajectory of the time average (blue) for four time steps. 
As shown in the plots, 
the time average, \(\hat{\rho}\), of REMC converges faster than that of FMMC, even though the FMMC distribution \({\rho}\) converges to the target distribution, \(\bar{\rho}\) faster than the REMC case. 

    \begin{figure}[thpb]
      \centering
    \includegraphics[width=0.4\textwidth]{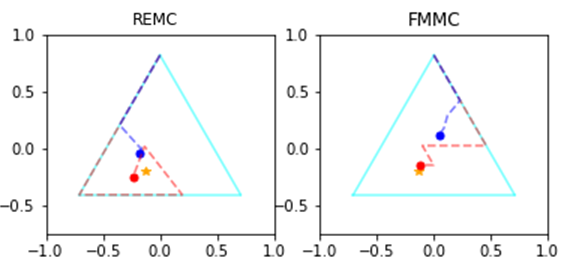}
      \caption{Comparison of REMC and FMMC.  REMC has a faster convergence for the time average, \(\hat{\rho}\) of the distribution 
      \(\rho_k\), 
      compared to FMMC even though the distribution 
      \(\rho_k\) itself converges to the target distribution \(\bar{\rho}_k\) faster with FMMC.
      Here, the cyan line is the boundary of the simplex; the red marker is the distribution \(\rho_k\) at current step \( k=4\); the red dashed line is the trajectory, \(\rho_{0:k}\); the blue marker is the time average \(\hat{\rho}_k\) at current step, \( k\); and the blue dashed line is the time-averaged trajectory, \(\hat{\rho}_{0:k}\). 
      }
      \label{simplex_traj}
   \end{figure}
\end{exmp}

\section{Variance of REMC}
\label{section:variance_remc}
 \begin{figure}[thpb]
      \centering
 \includegraphics[width=0.45\textwidth]{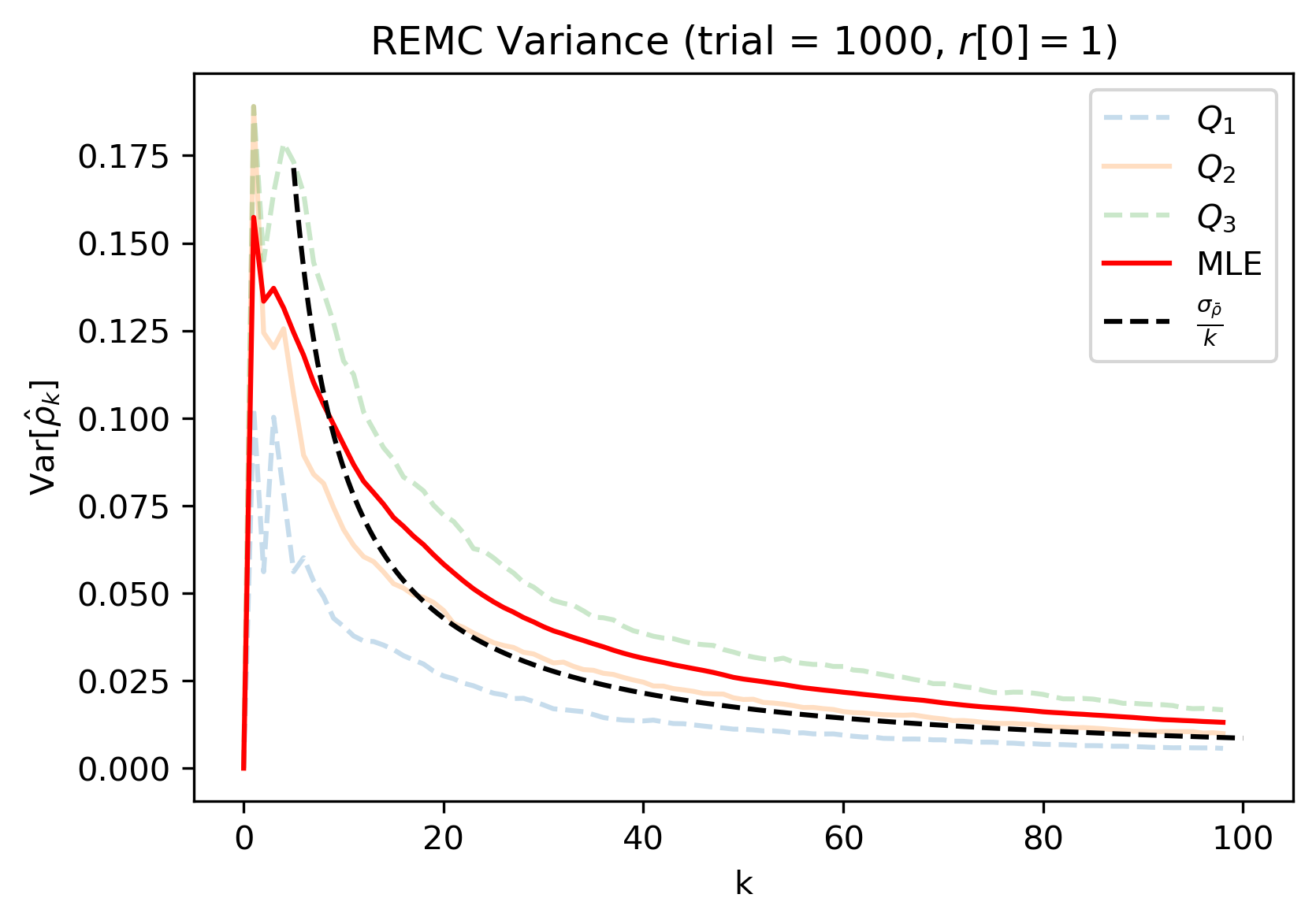}

      \caption{Variance of the time average of REMC at each time step \(k\) over \(1,000\) trials with initial condition \(r[0] = 1\) and a uniform target distribution \(\bar\rho = \frac{1}{n}\mathbf{1}\). Red curve is the maximum likelihood estimation (MLE) of the variance and black dashed line is the variance estimated by Markov chain central limit theorem (CLT).
      }
      \label{fig:variance}
   \end{figure}
   
   As REMC is a stochastic method, one might be interested in the robustness of the coverage generated by such a method. The variance of the (finite) time average of a Markov chain with a general stochastic matrix \(P\) is still an open problem, often found in the Markov chain Monte Carlo literature. 

More precisely, we are interested in the variance at each time step \(k\) with respect to the expected \(\hat\rho_k\), where
    \begin{equation}
        \begin{aligned}
            \hat{\rho}_k &= \frac{1}{K}\sum_{k=0}^{K-1}I(r[k])\\
            \mathbb{E}[\hat{\rho}_k] &= \frac{1}{K}\sum_{k=0}^{K-1}P^k\rho_0\\
            \text{Var}[\hat{\rho}_k] & = 
            \mathbb{E}[(\hat{\rho}_k - \mathbb{E}[\hat{\rho}_k])(\hat{\rho}_k - \mathbb{E}[\hat{\rho}_k])^T].
        \end{aligned}
    \end{equation}
    The Markov chain central limit theorem (CLT) states that the variance of the time average is proportional to \(\frac{1}{k}\), i.e., 
    \begin{equation}
        \begin{aligned}
           \text{Var}(\hat\rho_k) \propto \frac{1}{k}
        \end{aligned}
    \end{equation}
    as \(k\) becomes large \cite{JonesGalinL.2004OtMc}.    

To verify the approximation, a simulation is conducted using the graph from Fig. \ref{h_graph} with target distribution \(\bar\rho=\frac{1}{n}\mathbf{1}\) and initial condition \(r[0]= 1\). \(m = 1,000\) trials are conducted and variance is computed using maximum likelihood estimation (MLE) as
     \begin{equation}
        \begin{aligned}
           \text{tr}(\text{Var}[\hat\rho_k]) \approx \frac{1}{m}\sum_{j=1}^m (\hat\rho_{k,j}-\mathbb{E}[\hat{\rho}_k])^T(\hat\rho_{k,j}-\mathbb{E}[\hat{\rho}_k]).
        \end{aligned}
    \end{equation}
    The CLT variance is calculated assuming i.i.d. samples to be
    \(\text{tr}(\text{Var}(\hat\rho_k)) \approx \frac{1}{k}\sum_i
    ^n\bar\rho_i(1- \bar\rho_i)\). The simulation results are shown in Fig. \ref{fig:variance}.
    
It is observed that the MLE variance has the general shape of \(\frac{1}{k}\) roughly after \(20\) steps, which shows a rapid decrease in the variance. The CLT prediction has a faster decrease, as expected from \cite{JonesGalinL.2004OtMc}, with the MLE converging to it as \(k\) grows. Interestingly, the median \((Q_2)\) values are closely approximated by the CLT predictions. Therefore, we can conclude that REMC 
    variance decreases rapidly as the number of steps increases, and the results are closely predicted by the expected value.  
\end{appendices}

\end{document}